\newcounter{subcopyrightbox@save}
\newtheorem{proposition}{Proposition}
\newtheorem{definition}{Definition}
\newcommand{\argmin}{\operatornamewithlimits{argmin}}
\newcommand{\myparatight}[1]{\smallskip\noindent{\bf {#1}:}~}
\newenvironment{packeditemize}{\begin{list}{$\bullet$}{\setlength{\itemsep}{0.2pt}\addtolength{\labelwidth}{-4pt}\setlength{\leftmargin}{\labelwidth}\setlength{\listparindent}{\parindent}\setlength{\parsep}{1pt}\setlength{\topsep}{0pt}}}{\end{list}}
\newcommand{\lnorm}[1]
    {\ensuremath{\left\Vert#1\right\Vert}}
\title{Certified Robustness for Top-\emph{k} Predictions against Adversarial Perturbations via Randomized Smoothing}
\author{Jinyuan Jia, Xiaoyu Cao, Binghui Wang, Neil Zhenqiang Gong \\
Duke University\\
\texttt{\{jinyuan.jia,xiaoyu.cao,binghui.wang,neil.gong\}@duke.edu} 
}
\begin{document}

\maketitle

\begin{abstract}

It is well-known that  classifiers are vulnerable to {adversarial perturbations}.  
To defend against adversarial perturbations, various \emph{certified} robustness results have been derived. However, existing certified robustnesses are limited to top-1 predictions. In many real-world applications, top-$k$ predictions are more relevant. In this work, we aim to derive certified robustness for top-$k$ predictions. In particular, our certified robustness is based on \emph{randomized smoothing}, which turns any classifier to a new classifier via adding noise to an input example. We adopt randomized smoothing because it is scalable to large-scale neural networks and applicable to any classifier. We derive a \emph{tight} robustness in $\ell_2$ norm for top-$k$ predictions  when using randomized smoothing with Gaussian noise. 
We find that generalizing the certified robustness  from top-1 to top-$k$ predictions faces significant technical challenges. 
We also empirically evaluate our method on CIFAR10 and ImageNet. For example, our method can obtain an ImageNet classifier with a certified top-5 accuracy of 62.8\% when the $\ell_2$-norms of the adversarial perturbations are less than 0.5 (=127/255). Our code is publicly available at: \url{https://github.com/jjy1994/Certify_Topk}. 
\end{abstract}

\section{Introduction}
Classifiers are vulnerable to adversarial perturbations \citep{Szegedy14,goodfellow2014explaining,carlini2017towards,jia2018attriguard}. Specifically, given an example $\mathbf{x}$ and a classifier $f$, an attacker can carefully craft a perturbation $\delta$ such that $f$ makes predictions for $\mathbf{x}+\delta$ as the attacker desires. 
 Various empirical defenses (e.g.,~\citet{goodfellow2014explaining,svoboda2018peernets,buckman2018thermometer,ma2018characterizing,guo2018countering,dhillon2018stochastic,xie2018mitigating,song2018pixeldefend}) have been proposed to defend against adversarial perturbations. However, these empirical defenses were often soon broken by adaptive adversaries~\citep{carlini2017adversarial,athalye2018obfuscated}.   
As a response, \emph{certified} robustness (e.g.,~\citet{wong2018provable,raghunathan2018certified,liu2018towards,lecuyer2018certified,cohen2019certified}) against adversarial perturbations has been developed. In particular, a robust classifier verifiably predicts the same top-1 label for data points in a certain region around any example $\mathbf{x}$. 

In many applications such as  recommender systems,  web search, and image classification cloud service~\citep{clarifai_demo,Google_Cloud_Vision}, top-$k$ predictions are more relevant. In particular, given an example, a set of $k$ most likely labels are predicted for the example.  However, existing certified robustness results are limited to top-1 predictions, leaving top-$k$ robustness unexplored. To bridge this gap, we study certified robustness for top-$k$ predictions in this work. Our certified top-$k$ robustness leverages \emph{randomized smoothing}~\citep{cao2017mitigating,cohen2019certified}, which turns any base classifier $f$ to be a robust classifier via adding random noise to an example. For instance,  \citet{cao2017mitigating} is the first to propose randomized smoothing with uniform noise as an empirical defense. We consider random Gaussian noise because of its certified robustness guarantee~\citep{cohen2019certified}.  Specifically, we denote by $p_i$ the probability that the base classifier $f$ predicts label $i$ for the Gaussian random variable $\mathcal{N} (\mathbf{x}, \sigma^2I)$. The smoothed classifier $g_k(\mathbf{x})$ predicts the $k$ labels with the largest probabilities $p_i$'s for the example $\mathbf{x}$. We adopt randomized smoothing because it is scalable to large-scale neural networks and applicable to any base classifier.

Our major theoretical result is a tight certified robustness bound for top-$k$ predictions when using randomized smoothing with Gaussian noise. Specifically, given an example $\mathbf{x}$, a label $l$ is verifiably among the top-$k$ labels predicted by the smoothed classifier $g_k(\mathbf{x} + \delta)$ when the $\ell_2$-norm of the adversarial perturbation $\delta$ is less than a threshold (called \emph{certified radius}).  
The  certified radius for top-1 predictions derived by~\citet{cohen2019certified} is a special case of our certified radius when $k=1$. As our results and proofs show, generalizing certified robustness from top-1 to top-$k$ predictions faces significant new challenges and requires new techniques. Our certified radius is the unique solution to an equation, which depends on $\sigma$, $p_l$, and the $k$ largest probabilities $p_i$'s (excluding $p_l$). However,  computing our certified radius in practice faces two challenges: 1) it is hard to exactly compute the probability $p_l$ and  the $k$ largest probabilities $p_i$'s, and 2) the equation about the certified radius does not have an analytical solution. 
To address the first challenge, we estimate \emph{simultaneous confidence intervals} of the label probabilities via the Clopper-Pearson method and \emph{Bonferroni correction} in statistics. To address the second challenge, we propose an algorithm to solve the equation to obtain a lower bound of the certified radius, where the lower bound can be tuned to be arbitrarily close to the true certified radius. We evaluate our method on CIFAR10~\citep{krizhevsky2009learning} and ImageNet~\citep{deng2009imagenet} datasets. For instance, on ImageNet, our method respectively achieves approximate certified top-$1$, top-$3$, and top-$5$ accuracies as 46.6\%,   57.8\%, and 62.8\% when the $\ell_2$-norms of the adversarial perturbations are less than 0.5 (127/255) and $\sigma=0.5$.

Our contributions are summarized as follows: 
\begin{packeditemize}
\item {\bf Theory.} We derive the first certified radius for top-$k$ predictions. Moreover, we prove our certified radius is tight for randomized smoothing with Gaussian noise. 

\item {\bf Algorithm.} We develop algorithms to estimate our certified radius in practice. 

\item {\bf Evaluation.} We empirically evaluate our method on CIFAR10 and ImageNet. 
\end{packeditemize}

\section{Certified Radius for Top-$k$ Predictions}
Suppose we have a base classifier $f$, which maps an example $\mathbf{x}\in \mathbb{R}^d$ to one of $c$ candidate labels $\{1,2,\cdots,c\}$. $f$ can be any classifier. Randomized smoothing~\citep{cohen2019certified} adds an isotropic Gaussian noise $\mathcal{N}(0,\sigma^2I)$ to an example $\mathbf{x}$. We denote $p_{i}$ as the probability that the base classifier $f$ predicts label $i$ when adding a random isotropic Gaussian noise $\epsilon$ to the example $\mathbf{x}$, i.e., $p_{i}=\text{Pr}(f(\mathbf{x}+\epsilon)=i)$, where $\epsilon\sim \mathcal{N}(0,\sigma^2I)$. The smoothed classifier $g_k(\mathbf{x})$ returns the set of $k$ labels with the largest probabilities $p_i$'s when  taking an example $\mathbf{x}$ as input. Our goal is to derive a certified radius $R_l$ such that we have $l\in g_k(\mathbf{x} + \delta)$ for all $||\delta||_2 < R_l$. Our main theoretical results are summarized in the following two theorems. 

\begin{restatable}[Certified Radius for Top-$k$ Predictions]{thm}{certifyiedradiustheorem}
\label{theorem_of_certified_radius}
Suppose we are given an example $\mathbf{x}$, an arbitrary base classifier $f$, $\epsilon\sim \mathcal{N}(0,\sigma^2I)$, a smoothed classifier $g$, an arbitrary label $l\in \{1,2,\cdots,c\}$, and $\underline{p_l}, \overline{p}_1, \cdots, \overline{p}_{l-1}, \overline{p}_{l+1}, \cdots, \overline{p}_c \in [0,1]$ that satisfy the following conditions: 
\begin{align}
\label{consistent_condition}
   \text{Pr}(f(\mathbf{x}+\epsilon)=l) \geq \underline{p_l}\text{ and }   \text{Pr}(f(\mathbf{x}+\epsilon)=i) \leq \overline{p}_i, \forall i\neq l,
\end{align}
where $\underline{p}$ and $\overline{p}$ indicate lower and upper bounds of $p$, respectively. 
Let $\overline{p}_{b_k}\geq\overline{p}_{b_{k-1}}\geq\cdots\geq\overline{p}_{b_1}$  be the $k$ largest ones among $\{\overline{p}_1, \cdots, \overline{p}_{l-1}, \overline{p}_{l+1}, \cdots, \overline{p}_c\}$, where ties are broken uniformly at random.  Moreover, we denote by $S_t = \{b_1,b_2,\cdots,b_t\}$  the set of $t$ labels with the smallest probability upper bounds in the $k$ largest ones and by $\overline{p}_{S_{t}} =\sum_{j=1}^{t}\overline{p}_{b_j}$ the sum of the $t$ probability upper bounds, where $t=1,2,\cdots,k$. Then, we have: 
\begin{align}
    l \in g_{k}(\mathbf{x}+\delta),\forall ||\delta||_2 <R_l,
\end{align}
where $R_l$ is the unique solution to the following equation: 
\begin{align}
\label{equation_to_solve_for_topk}
\Phi(\Phi^{-1}(\underline{p_{l}})-\frac{R_l}{\sigma}))-\min_{t=1}^k \frac{\Phi(\Phi^{-1}(\overline{p}_{S_{t}})+\frac{R_l}{\sigma}))}{t}=0,  
\end{align}
where $\Phi$ and $\Phi^{-1}$ are the cumulative distribution function and its inverse of the standard Gaussian distribution, respectively. 
\end{restatable}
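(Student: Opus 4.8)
The plan is to prove the contrapositive. Fix any $\delta$ with $\|\delta\|_2 < R_l$ and write $p_i(\mathbf{z}) := \text{Pr}(f(\mathbf{z}+\epsilon)=i)$. I will show that if $l \notin g_k(\mathbf{x}+\delta)$ then the left-hand side of~\eqref{equation_to_solve_for_topk}, viewed as a function $h(r)$ of a radius $r$, satisfies $h(\|\delta\|_2)\le 0$; since $h$ is strictly decreasing with unique root $R_l$, this contradicts $\|\delta\|_2 < R_l$. The only probabilistic input is the Neyman--Pearson lemma for Gaussians, used in two forms (the first is exactly the bound of~\citet{cohen2019certified}; the second is the set-valued generalization we need): (i) $p_l(\mathbf{x}+\delta) \ge \Phi\big(\Phi^{-1}(\underline{p_l}) - \|\delta\|_2/\sigma\big)$, obtained from the half-space that is worst-case for label $l$; and (ii) for any label set $T$ with $l\notin T$, $\sum_{i\in T} p_i(\mathbf{x}+\delta) = \text{Pr}(f(\mathbf{x}+\delta+\epsilon)\in T) \le \Phi\big(\Phi^{-1}(\overline{p}_T) + \|\delta\|_2/\sigma\big)$ with $\overline{p}_T := \sum_{i\in T}\overline{p}_i$ (the right side read as $1$ when $\overline{p}_T \ge 1$), obtained from the half-space that is best-case for the event $\{f\in T\}$.

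Now suppose $l\notin g_k(\mathbf{x}+\delta)$. Then there is a set $A$ of $k$ labels, all distinct from $l$ (take $A = g_k(\mathbf{x}+\delta)$), such that $p_i(\mathbf{x}+\delta) \ge p_l(\mathbf{x}+\delta)$ for every $i\in A$. Fix $t\in\{1,\dots,k\}$ and let $T\subseteq A$ consist of the $t$ labels in $A$ with the smallest upper bounds $\overline{p}_i$. Summing the $t$ dominations and invoking (i) gives $\sum_{i\in T}p_i(\mathbf{x}+\delta) \ge t\, p_l(\mathbf{x}+\delta) \ge t\,\Phi\big(\Phi^{-1}(\underline{p_l}) - \|\delta\|_2/\sigma\big)$, while invoking (ii) together with monotonicity of $\Phi$ and $\Phi^{-1}$ gives $\sum_{i\in T}p_i(\mathbf{x}+\delta) \le \Phi\big(\Phi^{-1}(\overline{p}_T) + \|\delta\|_2/\sigma\big)$. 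The combinatorial crux is the elementary claim that $\overline{p}_T \le \overline{p}_{S_t}$ for every admissible $A$: sorting the $\overline{p}$-values of $A$ as $q_1\ge\cdots\ge q_k$, a counting argument shows $q_m \le \overline{p}_{b_{k-m+1}}$ for each $m$, so the sum of the $t$ smallest of them is at most $\sum_{j=1}^{t}\overline{p}_{b_j} = \overline{p}_{S_t}$; equivalently, the worst choice of $A$ is the $k$ labels with the largest upper bounds, and then the $t$ smallest among those are exactly $\overline{p}_{b_1},\dots,\overline{p}_{b_t}$. Combining the two displays, dividing by $t$, using monotonicity once more, and then taking the minimum over $t$ yields $\Phi\big(\Phi^{-1}(\underline{p_l}) - \|\delta\|_2/\sigma\big) \le \min_{t=1}^{k}\tfrac{1}{t}\Phi\big(\Phi^{-1}(\overline{p}_{S_t}) + \|\delta\|_2/\sigma\big)$, i.e.\ $h(\|\delta\|_2)\le 0$.

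It remains to analyze $h$. The map $r\mapsto\Phi(\Phi^{-1}(\underline{p_l}) - r/\sigma)$ is continuous and strictly decreasing (for $\underline{p_l}\in(0,1)$), each $r\mapsto\tfrac1t\Phi(\Phi^{-1}(\overline{p}_{S_t}) + r/\sigma)$ is continuous and nondecreasing, so their pointwise minimum is nondecreasing and $h$ is continuous and strictly decreasing on $[0,\infty)$. Under the implicit non-vacuity condition $\underline{p_l} > \overline{p}_{b_1}$ (precisely what makes the certified radius positive), $h(0) = \underline{p_l} - \min_{t}\overline{p}_{S_t}/t > 0$, while $h(r)\to -1/k < 0$ as $r\to\infty$; hence $h$ has a unique root $R_l\in(0,\infty]$ and $h(r)>0$ for all $r<R_l$. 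In particular $h(\|\delta\|_2)>0$, contradicting the previous paragraph, so $l\in g_k(\mathbf{x}+\delta)$. (When $\underline{p_l}\le \overline{p}_{b_1}$, or in degenerate cases such as $\overline{p}_{S_t}\ge 1$ or $\underline{p_l}\in\{0,1\}$, $R_l$ is $0$ or $+\infty$ and the statement is vacuous or trivial.)

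I expect the main obstacle to be the combinatorial step of the middle paragraph: recognizing that one must pass from individual label probabilities to the probability mass the base classifier places on a \emph{union} of labels — which is what makes the set-valued Neyman--Pearson bound (ii) the correct instrument — and then observing that the adversary is free to place the $k$ dominating labels on any $k$-subset, so the binding constraint is the worst case over the subset size $t$, producing the $\min_{t=1}^{k}$ and the cumulative sums $\overline{p}_{S_t}$ in~\eqref{equation_to_solve_for_topk}. The Gaussian Neyman--Pearson computations behind (i) and (ii) are routine adaptations of the top-1 analysis of~\citet{cohen2019certified}; the remaining work is bookkeeping for ties (in selecting the $b_j$'s and in the definition of $g_k$) and for the out-of-range cases noted above.
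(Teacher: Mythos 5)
Your proposal is correct and follows essentially the same route as the paper's proof: the set-valued Gaussian Neyman--Pearson bound applied to subsets $T$ of the dominating labels, the ``sum of $t$ dominations'' step (equivalently, the paper's ``minimum $\leq$ average'' step), the combinatorial reduction to the worst-case $k$-subset $\{b_1,\dots,b_k\}$ and its $t$ smallest upper bounds $S_t$, and the monotonicity analysis of the resulting function of $\|\delta\|_2$. The only difference is cosmetic --- you argue by contrapositive where the paper argues directly by bounding $\max_{\Gamma_k}\min_{i\in\Gamma_k}\text{Pr}(f(\mathbf{Y})=i)$ from above --- and your explicit counting argument for $\overline{p}_T\leq\overline{p}_{S_t}$ fills in a step the paper states more tersely.
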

\begin{proof}
See Appendix~\ref{proof_theorem_of_certified_radius}.
\end{proof}

\begin{restatable}[Tightness of the Certified Radius]{thm}{tightofthebound}
\label{theorem_of_tight_of_the_bound}
Assuming we have  $\underline{p_{l}}+\sum_{j=1}^{k}\overline{p}_{b_j}\leq 1$ and $\underline{p_{l}}+\sum_{i=1,\cdots, l-1, l+1, \cdots, c}\overline{p}_{i}\geq 1$. 
Then, for any perturbation $||\delta||_2 >R_l$, there exists a base classifier $f^{*}$ consistent with~(\ref{consistent_condition}) but we have $l \notin g_{k}(\mathbf{x}+\delta)$.
\end{restatable}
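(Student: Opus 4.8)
The plan is a Neyman--Pearson-style construction, carried out along the single direction $\mathbf{u}=\delta/\|\delta\|_2$. The only probabilistic facts it uses are that a half-space $H=\{\mathbf{z}:\langle\mathbf{u},\mathbf{z}-\mathbf{x}\rangle\le\sigma c\}$ has $\mathrm{Pr}(\mathbf{x}+\epsilon\in H)=\Phi(c)$ and $\mathrm{Pr}(\mathbf{x}+\delta+\epsilon\in H)=\Phi(c-\|\delta\|_2/\sigma)$, and that its complement has $\mathbf{x}+\delta+\epsilon$-mass $\Phi\big(\Phi^{-1}(1-\Phi(c))+\|\delta\|_2/\sigma\big)$. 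Writing $r=\|\delta\|_2$, $\psi(s)=\Phi(\Phi^{-1}(s)+r/\sigma)$ and $\underline{q_l}=\Phi(\Phi^{-1}(\underline{p_l})-r/\sigma)$, I would first record what $r>R_l$ gives: $\Phi(\Phi^{-1}(\underline{p_l})-R_l/\sigma)$ is strictly decreasing in $R_l$ while $\min_{t}\Phi(\Phi^{-1}(\overline{p}_{S_t})+R_l/\sigma)/t$ is strictly increasing, so uniqueness of the root $R_l$ in~(\ref{equation_to_solve_for_topk}) together with $r>R_l$ yields $\underline{q_l}<\min_{t=1}^{k}\psi(\overline{p}_{S_t})/t$; equivalently, $t\,\underline{q_l}<\psi(\overline{p}_{S_t})$ for every $t\in\{1,\dots,k\}$.

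Next I would build $f^{*}$. Assign label $l$ to the half-space $A_l=\{\mathbf{z}:\langle\mathbf{u},\mathbf{z}-\mathbf{x}\rangle\le\sigma\Phi^{-1}(\underline{p_l})\}$, so $\mathrm{Pr}(f^{*}(\mathbf{x}+\epsilon)=l)=\underline{p_l}$ and $\mathrm{Pr}(f^{*}(\mathbf{x}+\delta+\epsilon)=l)=\underline{q_l}$, the smallest value of $l$'s probability at $\mathbf{x}+\delta$ over all classifiers consistent with~(\ref{consistent_condition}). Assign the competing labels $b_1,\dots,b_k$ to nested shells at the far end of the Gaussian, $b_j$ receiving $\{\mathbf{z}:\sigma\Phi^{-1}(1-\overline{p}_{S_j})<\langle\mathbf{u},\mathbf{z}-\mathbf{x}\rangle\le\sigma\Phi^{-1}(1-\overline{p}_{S_{j-1}})\}$ (with $\overline{p}_{S_0}=0$), of $\mathbf{x}+\epsilon$-mass exactly $\overline{p}_{b_j}$. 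The hypothesis $\underline{p_l}+\sum_{j=1}^{k}\overline{p}_{b_j}\le 1$ forces these shells to be disjoint from $A_l$, so this assignment respects $\mathrm{Pr}(f^{*}(\mathbf{x}+\epsilon)=b_j)\le\overline{p}_{b_j}$; and the hypothesis $\underline{p_l}+\sum_{i\ne l}\overline{p}_i\ge1$ guarantees the leftover $\mathbf{x}+\epsilon$-mass $1-\underline{p_l}-\overline{p}_{S_k}$ can be spread among the labels $i\notin\{l,b_1,\dots,b_k\}$ without exceeding any $\overline{p}_i$. Hence $f^{*}$ satisfies~(\ref{consistent_condition}).

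The last step is to show each competitor strictly beats $l$ at $\mathbf{x}+\delta$: by the identities above $\mathrm{Pr}(f^{*}(\mathbf{x}+\delta+\epsilon)=b_j)=\psi(\overline{p}_{S_j})-\psi(\overline{p}_{S_{j-1}})$, and the union of the $t$ outermost shells has $\mathbf{x}+\delta$-mass $\psi(\overline{p}_{S_t})>t\,\underline{q_l}$. Once we prove $\psi(\overline{p}_{S_j})-\psi(\overline{p}_{S_{j-1}})>\underline{q_l}$ for every $j$, label $l$ lies strictly below at least $k$ labels at $\mathbf{x}+\delta$, so $l\notin g_k(\mathbf{x}+\delta)$ regardless of how $g_k$ breaks ties, which is the claim. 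To get that per-shell bound I would exploit three structural facts: $\psi$ is increasing and strictly concave on $[0,1]$ (its derivative equals $\exp(-\tfrac{r}{\sigma}\Phi^{-1}(s)-\tfrac{r^{2}}{2\sigma^{2}})$); the shell widths $\overline{p}_{b_1}\le\cdots\le\overline{p}_{b_k}$ are nondecreasing; and the minimizing index $t^{\star}=\argmin_{t}\psi(\overline{p}_{S_t})/t$ satisfies $\psi(\overline{p}_{S_t})/t\ge\psi(\overline{p}_{S_{t^{\star}}})/t^{\star}>\underline{q_l}$ for all $t$ (so, for instance, $\psi(\overline{p}_{S_{t^{\star}+1}})-\psi(\overline{p}_{S_{t^{\star}}})\ge\psi(\overline{p}_{S_{t^{\star}}})/t^{\star}>\underline{q_l}$). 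If the shell-by-shell accounting turns out to be awkward, an alternative is to recut the same far block of $\mathbf{x}+\epsilon$-mass $\overline{p}_{S_k}$ into $k$ shells of equal $\mathbf{x}+\delta$-mass $\psi(\overline{p}_{S_k})/k>\underline{q_l}$ and check, via convexity of $\psi^{-1}$ and the definition of $t^{\star}$, that their $\mathbf{x}+\epsilon$-masses can be matched to the budgets $\overline{p}_{b_j}$. Either way, this final deduction — extracting a uniform per-shell lower bound from concavity of $\psi$, the ordering of the $\overline{p}_{b_j}$, and the $\min_t$ structure of~(\ref{equation_to_solve_for_topk}) — is the step I expect to be the main obstacle, and it is precisely the ingredient with no counterpart in the top-$1$ argument of \citet{cohen2019certified}, where $k=1$ collapses~(\ref{equation_to_solve_for_topk}) to a single linear relation.
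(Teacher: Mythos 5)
Your overall strategy is the paper's: collapse everything onto the direction $\delta$, give label $l$ the half-space $A_l$ of $\mathbf{x}+\epsilon$-mass $\underline{p_l}$, give the competitors a far block of total $\mathbf{x}+\epsilon$-mass $\overline{p}_{S_k}$, and use the two hypotheses to check consistency with~(\ref{consistent_condition}); your derivation of $\underline{q_l}<\min_{t}\psi(\overline{p}_{S_t})/t$ from $\|\delta\|_2>R_l$ is also correct. The gap is exactly where you flag it, and it is fatal to both of your constructions: the per-shell bound $\psi(\overline{p}_{S_j})-\psi(\overline{p}_{S_{j-1}})>\underline{q_l}$ is simply false in general. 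Take $k=2$, write $q=\underline{q_l}$, and choose $\overline{p}_{b_1}\le\overline{p}_{b_2}$ so that $\psi(\overline{p}_{S_1})=q+\epsilon$ and $\psi(\overline{p}_{S_2})=2q+\epsilon/2$ for a small $\epsilon>0$. This is realizable: $\psi^{-1}(s)=\Phi(\Phi^{-1}(s)-r/\sigma)$ is strictly convex with $\psi^{-1}(0)=0$, so $\psi^{-1}(2q+\epsilon/2)>2\,\psi^{-1}(q+\epsilon)$ for $\epsilon$ small, which gives $\overline{p}_{b_2}\ge\overline{p}_{b_1}$, and both hypotheses of the theorem can be met by padding with enough extra labels of upper bound at most $\overline{p}_{b_1}$. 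Then $\min_t\psi(\overline{p}_{S_t})/t=q+\epsilon/4>q$, so we are indeed in the regime $\|\delta\|_2>R_l$, yet your inner shell carries $\mathbf{x}+\delta$-mass $\psi(\overline{p}_{S_2})-\psi(\overline{p}_{S_1})=q-\epsilon/2<q$: label $b_2$ loses to $l$, and your $f^*$ does not expel $l$ from the top $2$. Concavity of $\psi$ and the ordering of the $\overline{p}_{b_j}$ cannot repair this, because the trouble occurs at indices $j\le t^{\star}$, where the only information available is $\psi(\overline{p}_{S_j})\ge j\,\psi(\overline{p}_{S_{t^\star}})/t^{\star}$ and nothing prevents $\psi(\overline{p}_{S_{j-1}})$ from already being nearly as large as $\psi(\overline{p}_{S_j})$. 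Your fallback fails for the dual reason: $k$ shells of equal $\mathbf{x}+\delta$-mass cut from $\mathcal{B}_{S_k}$ have $\mathbf{x}+\epsilon$-masses summing to exactly $\overline{p}_{S_k}=\sum_j\overline{p}_{b_j}$, so they can all respect the budgets only by equalling them, which would force $\psi(\overline{p}_{S_j})=j\,\psi(\overline{p}_{S_k})/k$ for every $j$ --- generically false (in the example above the inner equal-mass shell has $\mathbf{x}+\epsilon$-mass strictly exceeding $\overline{p}_{b_2}$, violating~(\ref{consistent_condition})).

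This is precisely why the paper's Lemma~\ref{lemma_of_condition_of_tightness} is long: the worst-case regions cannot be taken to be contiguous slabs. Setting $\tau=\argmin_{t}\text{Pr}(\mathbf{Y}\in\mathcal{B}_{S_t})/t$, the paper builds, inside $\mathcal{B}_{S_\tau}$, $\tau$ disjoint regions whose $\mathbf{X}$-masses are exactly $\overline{p}_{b_1},\dots,\overline{p}_{b_\tau}$ \emph{and} whose $\mathbf{Y}$-masses are all exactly $\text{Pr}(\mathbf{Y}\in\mathcal{B}_{S_\tau})/\tau$; each such region is in general a union of two slabs in the coordinate $\delta^{T}(\mathbf{z}-\mathbf{x})$, and its existence comes from a two-parameter intermediate-value argument (Lemma~\ref{lemma_of_exists}) that moves both endpoints of a candidate slab so as to hit a prescribed $\mathbf{X}$-mass and a prescribed $\mathbf{Y}$-mass simultaneously, iterated by induction over $j$ together with the monotonicity comparison of Lemma~\ref{lemma_q1_q2}. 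Only on the outer block $\mathcal{B}_{S_k}\setminus\mathcal{B}_{S_\tau}$ does an inequality of your per-shell type suffice. To complete your proof you need some version of this two-dimensional construction; a one-dimensional choice of cut points cannot prescribe both masses at once.
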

\begin{proof} We show a proof sketch here. Our detailed proof is in Appendix~\ref{proof_of_tight_of_the_bound}. In our proof, we first show that, via \emph{mathematical induction} and the \emph{intermediate value theorem}, we can construct $k+1$ disjoint regions $\mathcal{C}_i, i\in \{l\}\cup\{b_1,b_{2},\cdots,b_k\}$ that satisfy $\text{Pr}(\mathbf{x}+\epsilon \in \mathcal{C}_l)=\underline{p_l}$, $\text{Pr}(\mathbf{x}+\epsilon \in \mathcal{C}_i)=\overline{p}_i$, and $\text{Pr}(\mathbf{x} + \delta + \epsilon \in \mathcal{C}_{i})$ is no smaller than some critical value for $i \in \{b_1,b_{2},\cdots,b_k\}$. Moreover, we divide the remaining region   $\mathbb{R}^d\setminus(\cup_{i=l,b_1,b_{2},\cdots,b_k} \mathcal{C}_{i})$ into $c-k-1$ regions, which we denote as $\mathcal{C}_{b_{k+1}}, \mathcal{C}_{b_{k+2}}, \cdots, \mathcal{C}_{b_{c-1}}$ and satisfy $\text{Pr}(\mathbf{x}+\epsilon \in \mathcal{C}_i)\leq \overline{p}_i$ for $i=b_{k+1}, b_{k+2}, \cdots, b_{c-1}$. Then, we construct a base classifier $f^{*}$ that predicts label $i$ for an example if and only if the example is in the region $\mathcal{C}_i$, where $i \in \{1, 2, \cdots, c\}$.  As $\underline{p_{l}}+\sum_{j=1}^{k}\overline{p}_{b_j}\leq 1$ and $\underline{p_{l}}+\sum_{i=1,\cdots, l-1, l+1, \cdots, c}\overline{p}_{i}\geq 1$, $f^{*}$ is well-defined. Moreover, $f^{*}$ satisfies the conditions in (\ref{consistent_condition}). Finally, we show that if $||\delta||_2 > R_l$, then we have  $\text{Pr}(f^{*}(\mathbf{x}+\delta + \epsilon)=l) < \min_{j=1}^k \text{Pr}(f^{*}(\mathbf{x}+\delta+\epsilon)=b_j)$, i.e., $l \notin g_{k}(\mathbf{x}+\delta)$.  
\end{proof}

We have several observations about our theorems. 
\begin{itemize}
\item Our certified radius is applicable to any base classifier $f$. 
\item According to Equation~\ref{equation_to_solve_for_topk}, our certified radius $R_l$ depends on $\sigma$, $\underline{p_l}$, and the $k$ largest probability upper bounds $\{\overline{p}_{b_k}, \overline{p}_{b_{k-1}}, \cdots, \overline{p}_{b_1}\}$ excluding  $\overline{p}_l$. When the lower bound $\underline{p_l}$ and the upper bounds $\{\overline{p}_{b_k}, \overline{p}_{b_{k-1}}, \cdots, \overline{p}_{b_1}\}$ are tighter, the certified radius $R_l$ is larger. When $R_l < 0$, the label $l$ is not among the top-$k$ labels predicted by the smoothed classifier even if no perturbation is added, i.e., $l\notin g_k(\mathbf{x})$. 
\item When using randomized smoothing with Gaussian noise and no further assumptions are made on the base classifier, it is impossible to certify a $\ell_2$ radius for top-$k$ predictions that is larger than $R_l$. 
\item When $k=1$, we have $R_l=\frac{\sigma}{2}(\Phi^{-1}(\underline{p_l}) - \Phi^{-1}(\overline{p}_{b_1}))$, where $\overline{p}_{b_1}$ is an upper bound of the  largest label probability excluding $\overline{p}_{l}$. The certified radius derived by~\citet{cohen2019certified} for top-1 predictions (i.e., their Equation 3) is a special case of our certified radius with $k=1$, $l=A$, and $b_1=B$.  
\end{itemize}

\begin{algorithm}[t]
\SetAlgoLined
\SetNoFillComment
\DontPrintSemicolon
\KwIn{$f$, $k$, $\sigma$, $\mathbf{x}$, $n$, and $\alpha$.}
\KwOut{ABSTAIN or predicted top-$k$ labels.}
$T$ $= \emptyset$ \\
counts $=  \textsc{SampleUnderNoise}(f,\sigma,\mathbf{x},n)$ \\
$c_{1},c_{2},\cdots,c_{k+1} = $ top-\{$k+1$\} indices in counts (ties are broken uniformly at random) \\
$n_{c_1},n_{c_2},\cdots,n_{c_{k+1}}=\text{counts}[c_{1}],\text{counts}[c_2],\cdots,\text{counts}[c_{k+1}]$ \\
\For{$t\gets1$ \KwTo $k$}{
\If{\textsc{BinomPValue}$(n_{c_t},n_{c_t}+n_{c_{t+1}},0.5)\leq \alpha$}{
   $T=T\cup c_t$ \\
}
\Else{
    \Return{\textup{ABSTAIN}}
}
}
\Return{$T$}
\caption{\textsc{Predict}}
\label{alg:predict}
\end{algorithm}

\section{Prediction and Certification in practice}
\subsection{Prediction}
It is challenging to compute the top-$k$ labels $g_k(\mathbf{x})$ predicted by the smoothed classifier, because it is challenging to compute the probabilities $p_i$'s exactly. To address the challenge, we resort to a Monte Carlo method that predicts the top-$k$ labels with a probabilistic guarantee. In particular, we leverage the hypothesis testing result from  a recent work~\citep{hung2019rank}. 
Algorithm~\ref{alg:predict} shows our \textsc{Predict} function to estimate the top-$k$ labels predicted by the smoothed classifier. The function \textsc{SampleUnderNoise}$(f,\sigma,\mathbf{x},n)$ first randomly samples $n$ noise $\epsilon_1, \epsilon_2, \cdots,\epsilon_n$ from the Gaussian distribution $\mathcal{N}(0,\sigma^2 I)$, uses the base classifier $f$ to predict the label of $\mathbf{x} + \epsilon_j$ for each $j\in \{1,2,\cdots,n\}$, and returns the frequency of each label, i.e.,   $\text{counts}[i]=\sum_{j=1}^{n}\mathbb{I}(f(\mathbf{x}+\epsilon_j)=i)$ for $i \in \{1,2,\cdots,c\}$. The function \textsc{BinomPValue} performs the hypothesis testing to calibrate the abstention threshold such that we can bound with probability $\alpha$ of returning an incorrect set of top-$k$ labels. Formally, we have the following proposition: 

\begin{proposition}
\label{proposition_1}
 With probability at least $1-\alpha$ over the randomness in \textsc{Predict}, if  \textsc{Predict} returns a set $T$ (i.e., does not ABSTAIN), then we have $g_k(\mathbf{x})=T$. 
\end{proposition}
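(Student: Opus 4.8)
The plan is to bound the probability of the ``failure event'' $E$ on which Proposition~\ref{proposition_1} is violated, namely the event that \textsc{Predict} does \emph{not} ABSTAIN but the set $T$ it returns differs from $g_k(\mathbf{x})$; when \textsc{Predict} ABSTAINs the claimed implication holds vacuously, so $E$ is the only thing to control and the goal is $\Pr(E)\le\alpha$. First I would record two facts about the algorithm. One, by the definition of \textsc{SampleUnderNoise} the vector $(\text{counts}[1],\dots,\text{counts}[c])$ is exactly $\mathrm{Multinomial}(n;p_1,\dots,p_c)$, since the $\epsilon_j$ are i.i.d.\ with $\Pr(f(\mathbf{x}+\epsilon_j)=i)=p_i$; and $\{c_1,\dots,c_k\}$ is the set of $k$ labels with the largest counts (the ``empirical top-$k$''), while $c_{k+1}$ is the label with the $(k+1)$-th largest count. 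Two, \textsc{Predict} reaches its final \texttt{return} $T$ exactly when all $k$ consecutive one-sided tests reject, i.e.\ $\textsc{BinomPValue}(n_{c_t},n_{c_t}+n_{c_{t+1}},0.5)\le\alpha$ for every $t=1,\dots,k$, in which case $T=\{c_1,\dots,c_k\}$. Hence $E\subseteq\{\text{all }k\text{ consecutive tests reject}\}\cap\{\text{empirical top-}k\neq g_k(\mathbf{x})\}$, and it suffices to bound the probability of this intersection.

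The main step is then to invoke the rank-verification guarantee of Hung \& Fithian~\citep{hung2019rank}. Their procedure for certifying that the top $k$ order statistics of an exponential-family sample coincide with the top $k$ population parameters is exactly: sort the sufficient statistics and run the $k$ consecutive pairwise one-sided tests, each at level $\alpha$; the resulting family-wise error is at most $\alpha$ (\emph{not} $k\alpha$). To apply it here I would check that our setting fits: conditioning a multinomial on $n_{c_t}+n_{c_{t+1}}$ makes $n_{c_t}$ a $\mathrm{Binomial}(n_{c_t}+n_{c_{t+1}},\,p_{c_t}/(p_{c_t}+p_{c_{t+1}}))$ variable, so $\textsc{BinomPValue}(n_{c_t},n_{c_t}+n_{c_{t+1}},0.5)$ is precisely the exact conditional $p$-value for the one-sided null $p_{c_t}\le p_{c_{t+1}}$; and the uniform tie-breaking in both the sort and in $g_k(\mathbf{x})$ matches their conventions. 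Plugging the multinomial counts of the previous paragraph into their theorem bounds $\Pr(\{\text{all }k\text{ tests reject}\}\cap\{\text{empirical top-}k\neq g_k(\mathbf{x})\})$ by $\alpha$, and combined with the inclusion above this gives $\Pr(E)\le\alpha$, which is the proposition.

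I expect the genuine difficulty --- and the reason we must import rather than re-derive --- to be why the $k$ consecutive tests may each be run at level $\alpha$ without a Bonferroni correction, even though the index $t$, the competitor, and the very hypothesis being tested are all chosen after seeing the data (because the labels are sorted by count). The intuition we are borrowing has three ingredients: a pigeonhole argument (if the empirical top-$k$ differs from $g_k(\mathbf{x})$ then some $c_t$ with $t\le k$ has $p_{c_t}<p_j$ for a label $j$ lying outside the empirical top-$k$, hence with $n_j\le n_{c_{k+1}}\le n_{c_{t+1}}$); monotonicity of the binomial tail, $m\mapsto\Pr(\mathrm{Bin}(m,1/2)\ge a)$ being nondecreasing, so that testing $c_t$ against the count-runner-up $c_{t+1}$ is \emph{more} conservative than testing it against the true probability-competitor $j$; and the ``two-sided becomes one-sided'' conditioning device of \citep{hung2019rank} (condition on which cell is the current running maximum), which is what actually absorbs the selection induced by sorting. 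A secondary, routine caveat to dispatch is ties among the $p_i$'s at the top-$k$ versus top-$(k+1)$ boundary, where $g_k(\mathbf{x})$ is itself a randomized object; there the statement should be read as ``$T$ agrees with a valid realization of $g_k(\mathbf{x})$,'' or one simply assumes those boundary probabilities are distinct.
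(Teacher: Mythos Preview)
Your proposal is correct and follows essentially the same route as the paper: both identify the failure event as ``\textsc{Predict} does not abstain yet $T\neq g_k(\mathbf{x})$'' and bound its probability by invoking the rank-verification result of \citet{hung2019rank} for the sequence of $k$ conditional binomial tests. The paper's own proof is terser --- it simply writes $\Pr(\text{return wrong }T)\le\Pr(\text{doesn't abstain}\mid g_k(\mathbf{x})\neq T)$ and cites Theorem~1 of \citet{hung2019rank} for the $\alpha$ bound --- whereas you additionally spell out why the multinomial/binomial conditioning matches their setting and sketch the intuition behind the absence of a Bonferroni correction; this extra content is helpful but not a different argument.
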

\begin{proof}
See Appendix~\ref{proof_of_proposition_1}.
\end{proof}

\subsection{Certification}
Given a base classifier $f$, an example $\mathbf{x}$, a label $l$, and the standard deviation $\sigma$ of the Gaussian noise, we aim to compute the certified radius $R_l$. According to our Equation~\ref{equation_to_solve_for_topk}, our $R_l$ relies on a lower bound of $p_l$, i.e., $\underline{p_l}$, and the upper bound of $p_{S_t}$, i.e., $\overline{p}_{S_t}$, which are related to $f$, $\mathbf{x}$, and $\sigma$. We first discuss two Monte Carlo methods to estimate $\underline{p_l}$ and $\overline{p}_{S_t}$ with probabilistic guarantees. However, given     $\underline{p_l}$ and $\overline{p}_{S_t}$, it is still challenging to exactly solve $R_l$ as the Equation~\ref{equation_to_solve_for_topk} does not have an analytical solution. To address the challenge, we design an algorithm to obtain a lower bound of $R_l$ via solving Equation~\ref{equation_to_solve_for_topk} through binary search. Our lower bound can be tuned to be arbitrarily close to $R_l$.

\subsubsection{Estimating $\underline{p_l}$ and $\overline{p}_{S_t}$}
Our approach has two steps. The first step is to estimate $\underline{p_l}$ and $\overline{p}_i$ for $i\neq l$. The second step is to estimate $\overline{p}_{S_t}$ using $\overline{p}_i$ for $i\neq l$. 

\myparatight{Estimating $\underline{p_l}$ and $\overline{p}_i$ for $i\neq l$} The probabilities $p_1, p_2, \cdots, p_c$ can be viewed as a multinomial distribution over the labels $\{1,2,\cdots, c\}$. If we sample a Gaussian noise $\epsilon$ uniformly at random, then the label $f(\mathbf{x} +\epsilon)$ can be viewed as a sample from the multinomial distribution. Therefore,  estimating $\underline{p_l}$ and $\overline{p}_i$ for $i\neq l$ is essentially a one-sided \emph{simultaneous confidence interval} estimation problem. In particular, we aim to estimate these bounds with a confidence level at least $1-\alpha$. In statistics, ~\citet{goodman1965simultaneous,sison1995simultaneous} are well-known methods for simultaneous confidence interval estimations. However, these methods are insufficient for our problem. Specifically, Goodman's method is based on Chi-square test, which requires the expected count for each label to be no less than 5. We found that this is usually not satisfied, e.g., ImageNet has 1,000 labels, some of which have close-to-zero probabilities and do not have more than 5 counts even if we sample a large number of Gaussian noise. Sison \& Glaz's method guarantees a confidence level of \emph{approximately} $1-\alpha$, which means that the confidence level could be (slightly) smaller than $1-\alpha$. However, we aim to achieve a confidence level of at least $1-\alpha$. 
To address these challenges, we discuss two confidence interval estimation methods as follows:

{\bf 1) BinoCP.} This method estimates  $\underline{p_l}$ using the standard one-sided Clopper-Pearson method and treats $\overline{p}_i$ as $\overline{p}_i=1-\underline{p_l}$ for each $i\neq l$. Specifically, we sample $n$ random noise from $\mathcal{N}(0,\sigma I^2)$, i.e., $\epsilon_1, \epsilon_2, \cdots,\epsilon_n$. We denote the count for the label $l$  as $n_l = \sum_{j=1}^{n}\mathbb{I}(f(\mathbf{x}+\epsilon_j)=l)$. $n_l$ follows a binomial distribution with parameters $n$ and $p_l$, i.e., $n_l \sim Bin(n,p_l)$. Therefore, according to the Clopper-Pearson method, we have:
\begin{align}
     \underline{p_{l}}=B(\alpha; n_{l}, n-n_{l}+1),
\end{align}
where $1-\alpha$ is the confidence level and $B(\alpha; u,v)$ is the $\alpha$th quantile of the Beta distribution with shape parameters $u$ and $v$. We note that the Clopper-Pearson method was also adopted by~\citet{cohen2019certified} to estimate label probability for their certified radius of top-1 predictions.

{\bf 2) SimuEM.} The above method estimates $\overline{p}_{i}$ as $1-\underline{p_{l}}$, which may be conservative. A conservative estimation makes the certified radius smaller than what it should be. Therefore, we introduce SimuEM to directly estimate $\overline{p}_{i}$ together with $\underline{p_{l}}$. 
We let $n_i = \sum_{j=1}^n\mathbb{I}(f(\mathbf{x}+\epsilon_j)=i)$ for each $i \in \{1,2,\cdots,c\}$.  Each  $n_i$ follows a binomial distribution with parameters $n$ and $p_i$. We first use the Clopper-Pearson method to estimate a one-sided confidence interval for each label $i$, and then we obtain simultaneous confidence intervals by leveraging the \emph{Bonferroni correction}. Specifically, if we can obtain a confidence interval with confidence level at least $1-\frac{\alpha}{c}$ for each label $i$, then \emph{Bonferroni correction} tells us that the overall confidence level for the simultaneous confidence intervals is at least $1-\alpha$, i.e., we have confidence level at least $1-\alpha$ that all confidence intervals hold at the same time. Formally, we have the following bounds by applying the Clopper-Pearson method with confidence level $1-\frac{\alpha}{c}$ to each label:  
\begin{align}
\label{compute_lower_bound_simuem}
 & \underline{p_l }=B\left(\frac{\alpha}{c};n_l,n-n_l+1\right) \\
 \label{compute_upper_bound_simuem}
 &    \overline{p}_{i}=B(1-\frac{\alpha}{c};n_i+1, n-n_i), \  \forall i \neq l.
\end{align}

\begin{algorithm}[t]
\SetAlgoLined
\SetNoFillComment
\DontPrintSemicolon
\KwIn{$f$, $k$, $\sigma$, $\mathbf{x}$, $l$, $n$, $\mu$, and $\alpha$.}
\KwOut{ABSTAIN or $\underline{R_l}$.}
counts $=  \textsc{SampleUnderNoise}(f,\sigma,\mathbf{x},n,\alpha)$ \\
$[\underline{p_{l}}, \overline{p}_{1},\cdots,\overline{p}_{l-1}, \overline{p}_{l+1}, \cdots, \overline{p}_{c}]$ = \textsc{BinoCP}(counts, $\alpha$) \text{ or } \textsc{SimuEM}(counts, $\alpha$)\\
$\underline{R_l}=0$\\
\For{$t\gets1$ \KwTo $k$}{
    $\overline{p}_{S_t}=\min(\sum_{j=1}^t \overline{p}_{b_j}, 1-\underline{p_{l}})$\\
    \label{combine_to_estimate_p_sn}
    $\underline{R_l}^t=$\textsc{BinarySearch}( $\underline{p_l},\overline{p}_{S_t},t,\sigma,\mu$) \\
    \If{$\underline{R_l}^t > \underline{R_l}$}{
        $\underline{R_l} =\underline{R_l}^t$
    }
}
\If{$\underline{R_l} >0$}{
    \Return{$\underline{R_l}$} 
}
\Else{
    \Return{\textup{ABSTAIN}}
}
\caption{\textsc{Certify}}
\label{alg:certify}
\end{algorithm}

\myparatight{Estimating $\overline{p}_{S_t}$}  One natural method is to estimate $\overline{p}_{S_t}=\sum_{j=1}^{t}\overline{p}_{b_j}$. However, this bound may be loose. For example, when using \textbf{BinoCP} to estimate the probability bounds, we have $\overline{p}_{S_t}=t \cdot (1 - \underline{p_l})$, which may be bigger than $1$. To address the challenge, we derive another bound for $\overline{p}_{S_t}$ from another perspective. Specifically, we have ${p}_{S_t} \leq \sum_{i\neq l}p_i \leq 1 - \underline{p_l}$. Therefore, we can use $1 - \underline{p_l}$ as an upper bound of ${p}_{S_t}$, i.e., $\overline{p}_{S_t}=1 - \underline{p_l}$. Finally,  we  combine the above two estimations by taking the minimal one, i.e., $ \overline{p}_{S_t}=\min(\sum_{j=1}^{t}\overline{p}_{b_j},1 - \underline{p_l})$.

\subsubsection{Estimating a Lower Bound of the Certified Radius $R_l$}
It is challenging to compute the certified radius $R_l$ exactly because Equation~\ref{equation_to_solve_for_topk} does not have an analytical solution. To address the challenge, we design a method to estimate a lower bound of $R_l$ that can be tuned to be arbitrarily close to $R_l$. Specifically, we first approximately solve the following equation for each $t \in \{1,2,\cdots,k\}$:  
\begin{align}
\label{alternative_equation_to_solve_topk}
   \Phi(\Phi^{-1}(\underline{p_{l}})-\frac{R_l^t}{\sigma}))-\frac{\Phi(\Phi^{-1}(\overline{p}_{S_{t}})+\frac{R_l^t}{\sigma}))}{t} = 0. 
\end{align}
We note that it is still difficult to obtain an analytical solution to Equation \ref{alternative_equation_to_solve_topk} when $t>1$. However, we notice that the left-hand side has the following properties: 1) it decreases as $R_l^t$ increases; 2) when $R_l^t \rightarrow-\infty$, it is greater than 0; 3) when $R_l^t\rightarrow\infty$, it is smaller than 0. Therefore, there exists a unique solution $R_l^t$ to Equation \ref{alternative_equation_to_solve_topk}. Moreover, we leverage binary search to find a lower bound  $\underline{R_l}^{t}$ that can be arbitrarily close to the exact solution $R_l^t$. In particular, we run the binary search until the left-hand side of Equation \ref{alternative_equation_to_solve_topk} is non-negative and the width of the search interval is less than a parameter $\mu>0$. Formally, we have:  
\begin{align}
\label{binary_search_goal_in_r_t}
   \underline{R_l}^t \leq R_l^t \leq \underline{R_l}^t + \mu, \  \forall t\in \{1,2,\cdots,k\}. 
\end{align}
After obtaining $\underline{R_l}^t$, we let $\underline{R_l}=\max_{t=1}^k\underline{R_l}^t$ be our lower bound of $R_l$. Based on $R_l=\max_{t=1}^k R_l^t$ and  Equation \ref{binary_search_goal_in_r_t},  we have the following guarantee: 
\begin{align}
    \underline{R_l} \leq R_l  \leq \underline{R_l} + \mu. 
\end{align}

\subsubsection{Complete Certification Algorithm}

Algorithm~\ref{alg:certify} shows our algorithm to estimate the certified radius for a given example $\mathbf{x}$ and a label $l$. The function \textsc{SampleUnderNoise} is the same as in Algorithm~\ref{alg:predict}. Functions \textsc{BinoCP} and \textsc{SimuEM} return the estimated probability bound for each label. 
Function \textsc{BinarySearch} performs  binary search to solve the Equation~\ref{alternative_equation_to_solve_topk} and returns a solution satisfying Equation~\ref{binary_search_goal_in_r_t}. Formally, our algorithm has the following guarantee:

\begin{proposition}
\label{proposition_certify}
With probability at least $1 -  \alpha$ over the randomness in $\textsc{Certify}$, if $\textsc{Certify}$ returns a radius $\underline{R_l}$ (i.e., does not ABSTAIN), then we have     $l \in g_{k}(\mathbf{x}+\delta)$,  $\forall ||\delta||_2 < \underline{R_l}$. 
\end{proposition}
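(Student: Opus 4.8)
The plan is to isolate the sole source of randomness in \textsc{Certify}, namely the $n$ i.i.d.\ samples $\epsilon_1,\dots,\epsilon_n\sim\mathcal{N}(0,\sigma^2 I)$ drawn by \textsc{SampleUnderNoise}, and to argue that on a ``good event'' $E$ of probability at least $1-\alpha$ the estimated bounds $\underline{p_l},\overline{p}_1,\dots,\overline{p}_c$ are simultaneously valid in the sense of~(\ref{consistent_condition}). Everything else in the algorithm (the Clopper--Pearson / Bonferroni computations in \textsc{BinoCP} and \textsc{SimuEM}, the aggregation into $\overline{p}_{S_t}$, and \textsc{BinarySearch}) is a deterministic function of the sampled counts, so it suffices to condition on the sample and invoke Theorem~\ref{theorem_of_certified_radius} pointwise on $E$.

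First I would establish $\text{Pr}(E)\ge 1-\alpha$ for each estimation method. For \textsc{BinoCP}, $\underline{p_l}=B(\alpha;n_l,n-n_l+1)$ is a one-sided Clopper--Pearson lower confidence bound for $p_l$ at level $1-\alpha$, so $\text{Pr}(p_l\ge\underline{p_l})\ge 1-\alpha$; and whenever $p_l\ge\underline{p_l}$ we automatically have $p_i\le 1-p_l\le 1-\underline{p_l}=\overline{p}_i$ for every $i\ne l$, so the whole family of bounds holds on $\{p_l\ge\underline{p_l}\}\subseteq E$. For \textsc{SimuEM}, each per-label one-sided Clopper--Pearson interval is valid at level $1-\alpha/c$, i.e.\ $\text{Pr}(p_l<\underline{p_l})\le\alpha/c$ and $\text{Pr}(p_i>\overline{p}_i)\le\alpha/c$; a union bound (Bonferroni correction) over these $c$ events gives $\text{Pr}(E^c)\le c\cdot(\alpha/c)=\alpha$. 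Either way $\text{Pr}(E)\ge 1-\alpha$.

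Next I would check that, on $E$, the quantities passed to \textsc{BinarySearch} satisfy what is needed to apply Theorem~\ref{theorem_of_certified_radius}. The bounds $\underline{p_l},\overline{p}_i$ satisfy~(\ref{consistent_condition}) by definition of $E$. Moreover $\overline{p}_{S_t}=\min(\sum_{j=1}^t\overline{p}_{b_j},\,1-\underline{p_l})$ is a valid upper bound on $\sum_{j=1}^t p_{b_j}=\text{Pr}(f(\mathbf{x}+\epsilon)\in S_t)$: the $b_j$ are distinct labels different from $l$, so $\sum_{j\le t}p_{b_j}\le\sum_{j\le t}\overline{p}_{b_j}$ and also $\sum_{j\le t}p_{b_j}\le\sum_{i\ne l}p_i=1-p_l\le 1-\underline{p_l}$. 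Here I would note that the proof of Theorem~\ref{theorem_of_certified_radius} uses $\overline{p}_{S_t}$ only through the fact that it upper bounds the probability of the disjoint event $\{f(\mathbf{x}+\epsilon)\in S_t\}$; hence the theorem holds verbatim with $\overline{p}_{S_t}$ taken to be \emph{any} valid upper bound on that probability, and replacing $\sum_j\overline{p}_{b_j}$ by the smaller quantity $\min(\sum_j\overline{p}_{b_j},1-\underline{p_l})$ keeps the conclusion (and can only enlarge $R_l$, since shrinking $\overline{p}_{S_t}$ makes the left-hand side of~(\ref{equation_to_solve_for_topk}) larger and that function is strictly decreasing). Consequently, conditioned on $E$, Theorem~\ref{theorem_of_certified_radius} gives $l\in g_k(\mathbf{x}+\delta)$ for all $\|\delta\|_2<R_l$, where $R_l$ is the unique root of~(\ref{equation_to_solve_for_topk}) for these bounds.

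Finally I would close the gap between $R_l$ and the returned $\underline{R_l}$. Writing $R_l^t$ for the unique root of~(\ref{alternative_equation_to_solve_topk}), the observation of Section~3.3.2 (the left-hand side of~(\ref{equation_to_solve_for_topk}) is the pointwise maximum over $t$ of the strictly decreasing left-hand sides of~(\ref{alternative_equation_to_solve_topk})) gives $R_l=\max_{t=1}^k R_l^t$, while \textsc{Certify} returns $\underline{R_l}=\max_{t=1}^k\underline{R_l}^t$ with $\underline{R_l}^t\le R_l^t$ by the \textsc{BinarySearch} guarantee~(\ref{binary_search_goal_in_r_t}). Hence $\underline{R_l}\le R_l$, so $\{\delta:\|\delta\|_2<\underline{R_l}\}\subseteq\{\delta:\|\delta\|_2<R_l\}$ and the conclusion $l\in g_k(\mathbf{x}+\delta)$ holds for all $\|\delta\|_2<\underline{R_l}$ on $E$; when \textsc{Certify} abstains the asserted implication is vacuous. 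Thus $E$ is contained in the event described in the proposition, which therefore has probability at least $1-\alpha$. The step I would be most careful about is the one flagged in the previous paragraph: verifying that plugging the tightened aggregate $\min(\sum_j\overline{p}_{b_j},1-\underline{p_l})$ into~(\ref{alternative_equation_to_solve_topk}), rather than the literal sum appearing in the statement of Theorem~\ref{theorem_of_certified_radius}, is legitimate, i.e.\ that it is still an admissible choice of ``$\overline{p}_{S_t}$'' in that theorem.
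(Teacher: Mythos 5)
Your proof is correct and follows essentially the same route as the paper's: reduce to the event that the Clopper--Pearson/Bonferroni bounds hold simultaneously (probability at least $1-\alpha$) and then invoke Theorem~\ref{theorem_of_certified_radius} deterministically on that event. The paper's own proof is only a four-line sketch, and the two points you flag and verify explicitly --- that the tightened aggregate $\min(\sum_j\overline{p}_{b_j},\,1-\underline{p_l})$ is still an admissible choice of $\overline{p}_{S_t}$ in the theorem, and that $\underline{R_l}\le R_l$ via the \textsc{BinarySearch} guarantee --- are exactly the details it glosses over, so your write-up is a strictly more complete version of the same argument.
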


\begin{proof}
See Appendix~\ref{proof_of_proposition_2}. 
\end{proof}

\section{Experiments}

\subsection{Experimental Setup}

\myparatight{Datasets and models}
We conduct  experiments  on the standard CIFAR10 \citep{krizhevsky2009learning} and ImageNet \citep{deng2009imagenet} datasets to evaluate our method. We use the publicly available pre-trained models from \citet{cohen2019certified}. Specifically, the architectures of the base classifiers are ResNet-110 and ResNet-50 for CIFAR10 and ImageNet, respectively. 

\myparatight{Parameter setting} We study the impact of $k$, the confidence level $1-\alpha$, the noise level $\sigma$, the number of samples $n$, and the confidence interval estimation methods on the certified radius. Unless otherwise mentioned, we use the following default parameters: $k=3$, $\alpha = 0.001$, $\sigma = 0.5$, $n=100,000$, and $\mu=10^{-5}$. Moreover, we use $\textbf{SimuEM}$ to estimate bounds of label probabilities. When studying the impact of one parameter on the certified radius, we fix the other parameters to their default values. 

\myparatight{Approximate certified top-$k$ accuracy} For each testing example $\mathbf{x}$ whose true label is $l$, we compute the certified radius $\underline{R_l}$ using the \textsc{Certify} algorithm. Then, we compute the {certified top-$k$ accuracy} at a radius $r$  as the fraction of testing examples whose certified radius are at least $r$. Note that  our computed certified top-$k$ accuracy is an \emph{approximate certified top-$k$ accuracy} instead of the true certified top-$k$ accuracy. However, we can obtain a lower bound of the true certified top-$k$ accuracy based on the approximate certified top-$k$ accuracy. Appendix~\ref{certified_test_set_accuracy} shows the details.   Moreover, the gap between the lower bound of the true certified top-$k$ accuracy and the {approximate top-$k$ accuracy} is negligible when  $\alpha$ is small. For convenience, we  simply use the term certified top-$k$ accuracy in the paper.  

\begin{figure*}[!t]
	 \vspace{-13mm}
	 \centering
\subfloat[CIFAR10]{\includegraphics[width=0.40\textwidth]{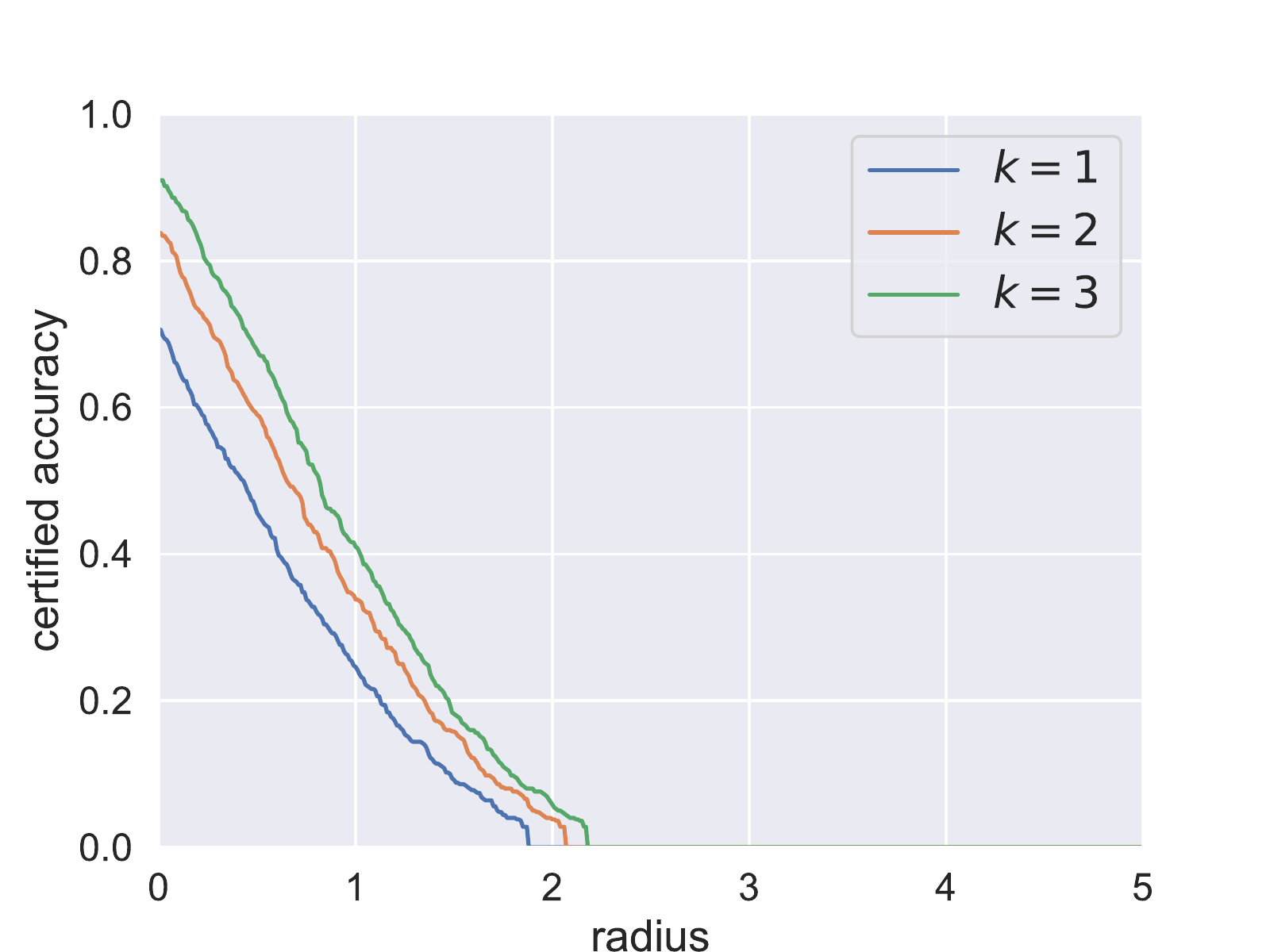}}
\subfloat[ImageNet]{\includegraphics[width=0.40\textwidth]{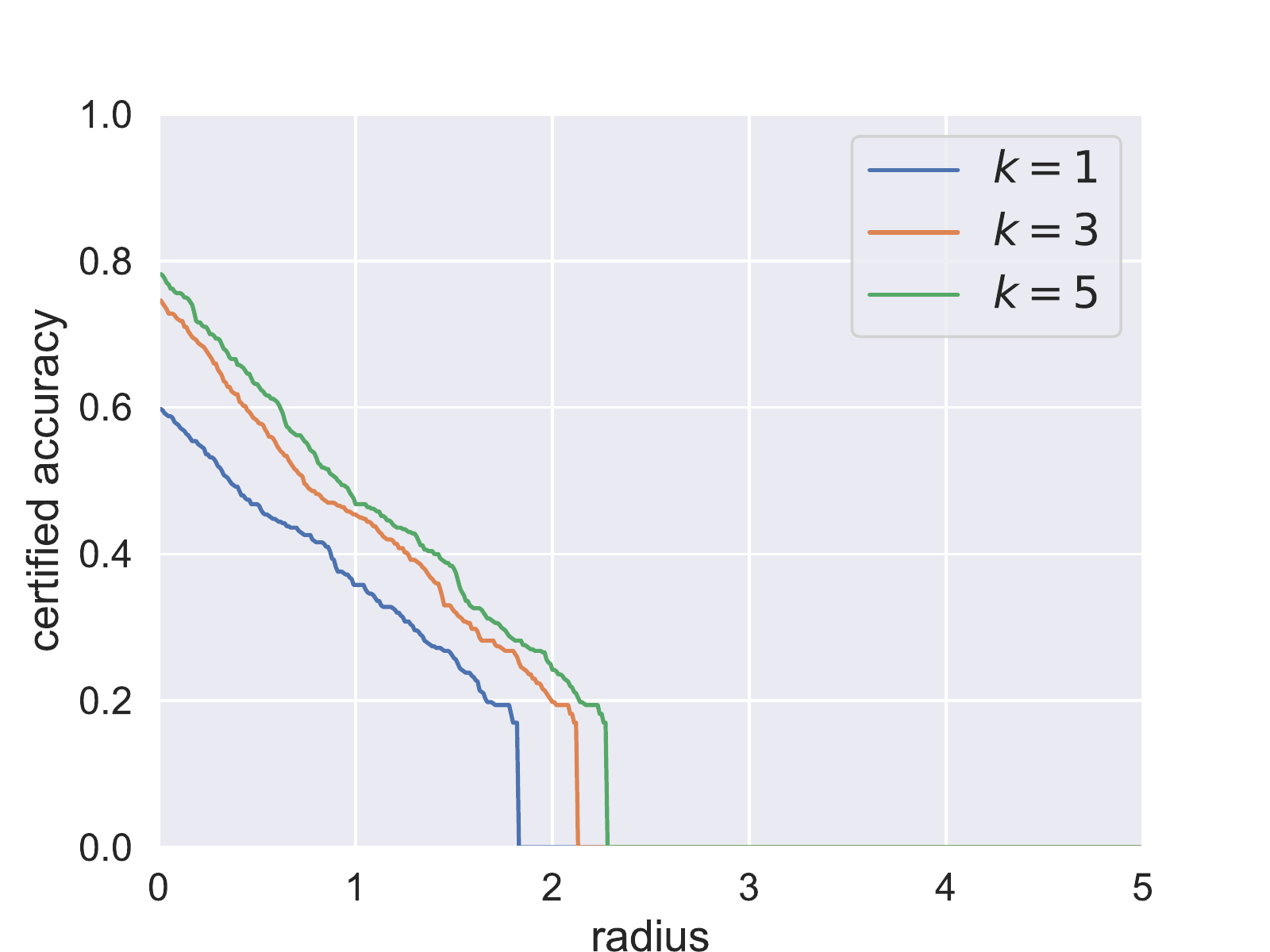}}
	 \caption{Impact of $k$ on the certified top-$k$ accuracy.}
	 \label{different_k_cifar10_imagenet}
\end{figure*}

\begin{figure*}[!t]
	 \vspace{-5mm}
	 \centering
	 \subfloat[CIFAR10]{\includegraphics[width=0.4\textwidth]{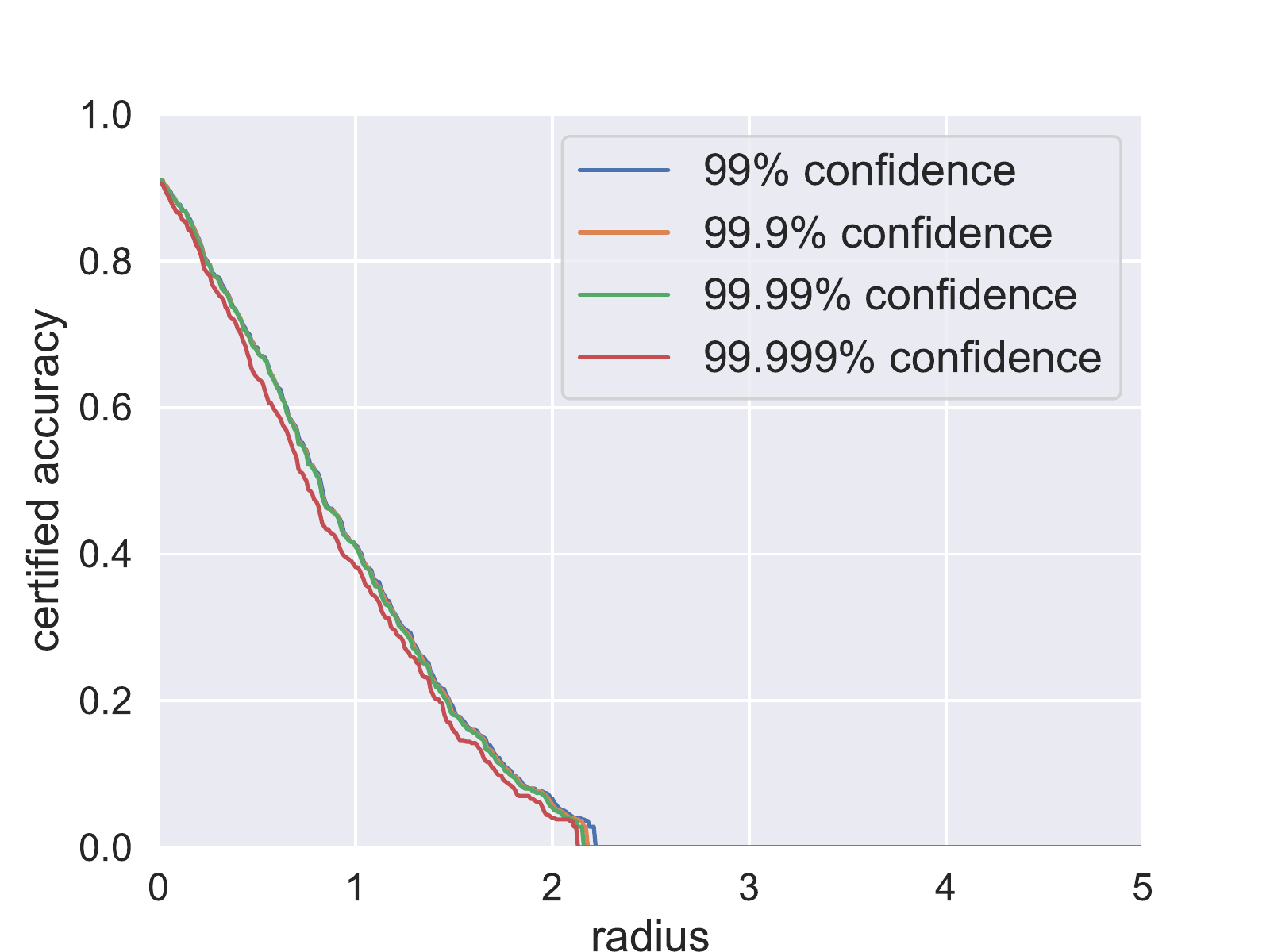}}
\subfloat[ImageNet]{\includegraphics[width=0.4\textwidth]{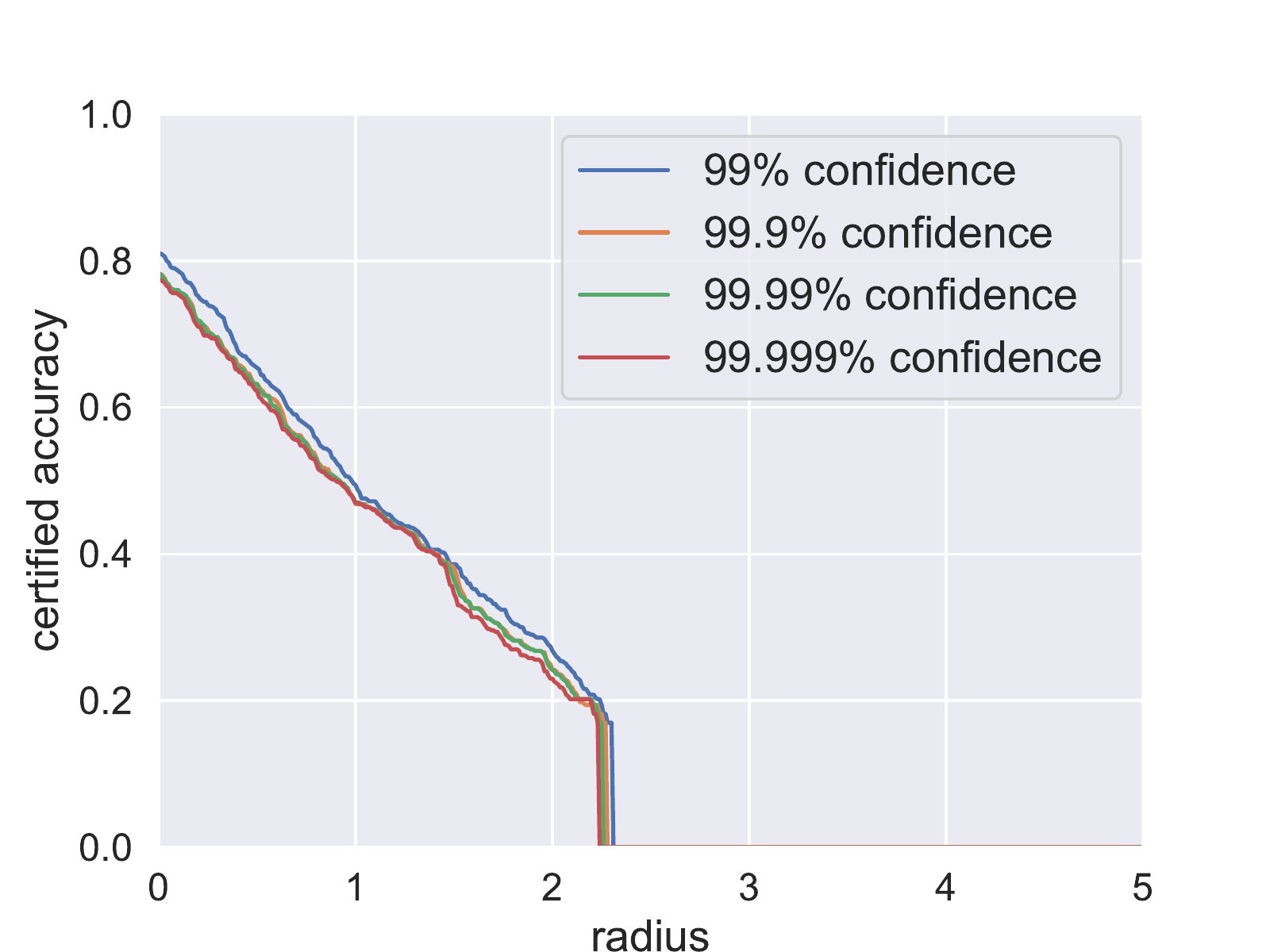}}
	 \caption{Impact of the confidence level $1-\alpha$ on the certified top-$3$ accuracy.}
	 \label{fig_compare_alpha_cifar10_imagenet}
	 \vspace{-8mm}
\end{figure*}

\begin{figure*}[!t]
	 \centering
\subfloat[CIFAR10]{\includegraphics[width=0.4\textwidth]{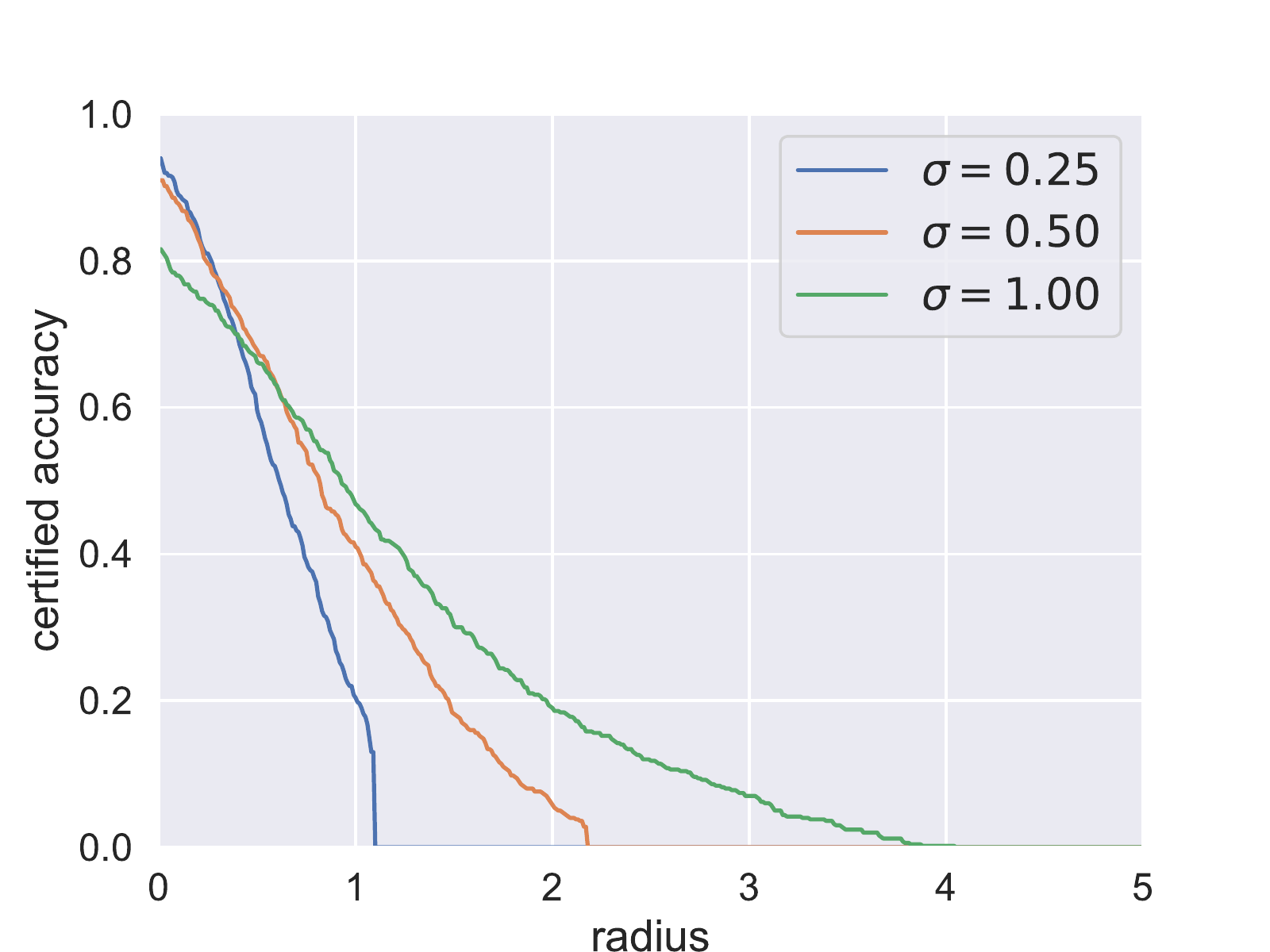}}
\subfloat[ImageNet]{\includegraphics[width=0.4\textwidth]{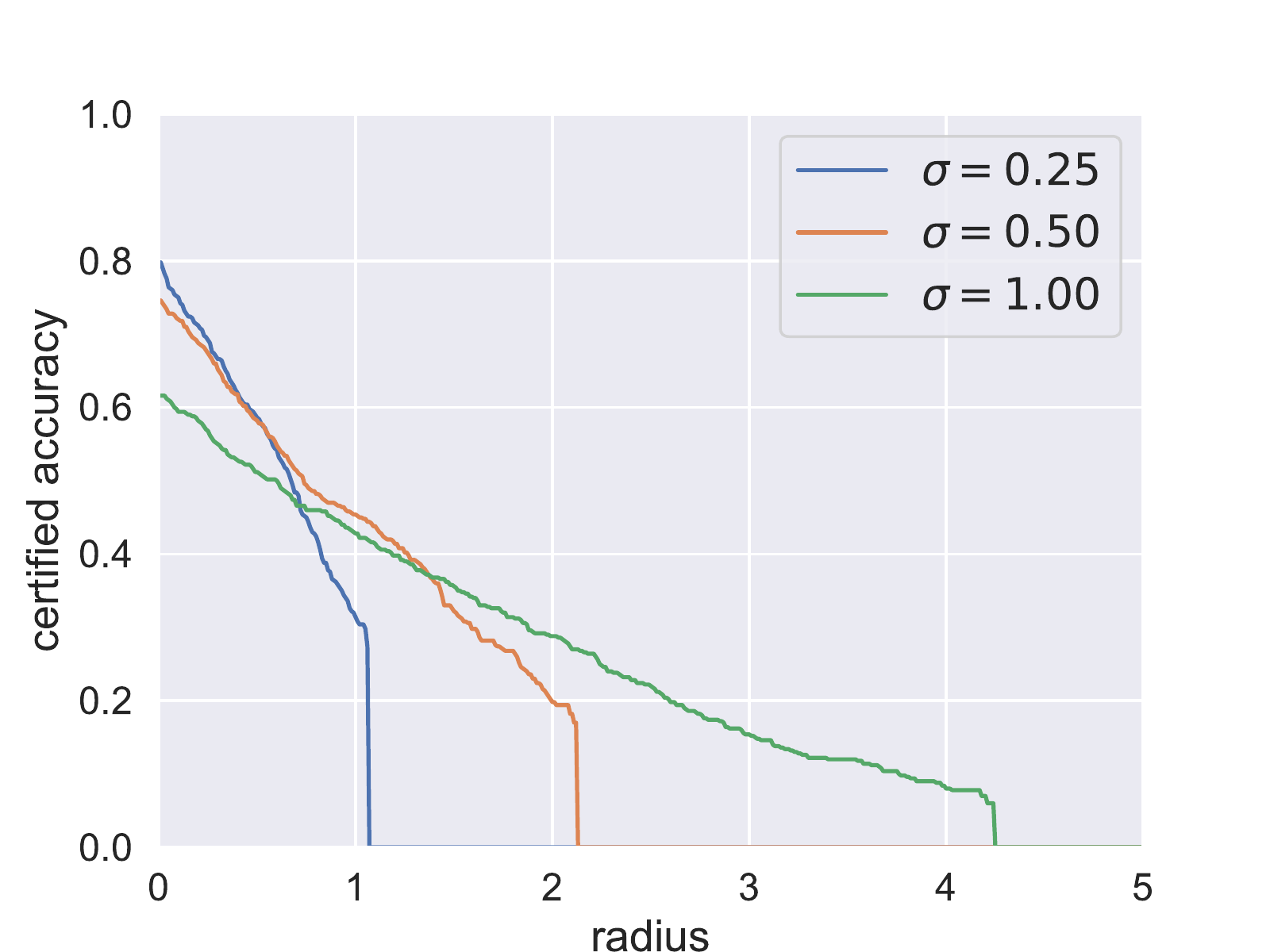}}
	 \caption{Impact of $\sigma$ on the certified top-$3$ accuracy.}
	 \label{different_sigma}
	 \vspace{-8mm}
\end{figure*}

\begin{figure*}[!t]
	 \centering
\subfloat[CIFAR10]{\includegraphics[width=0.4\textwidth]{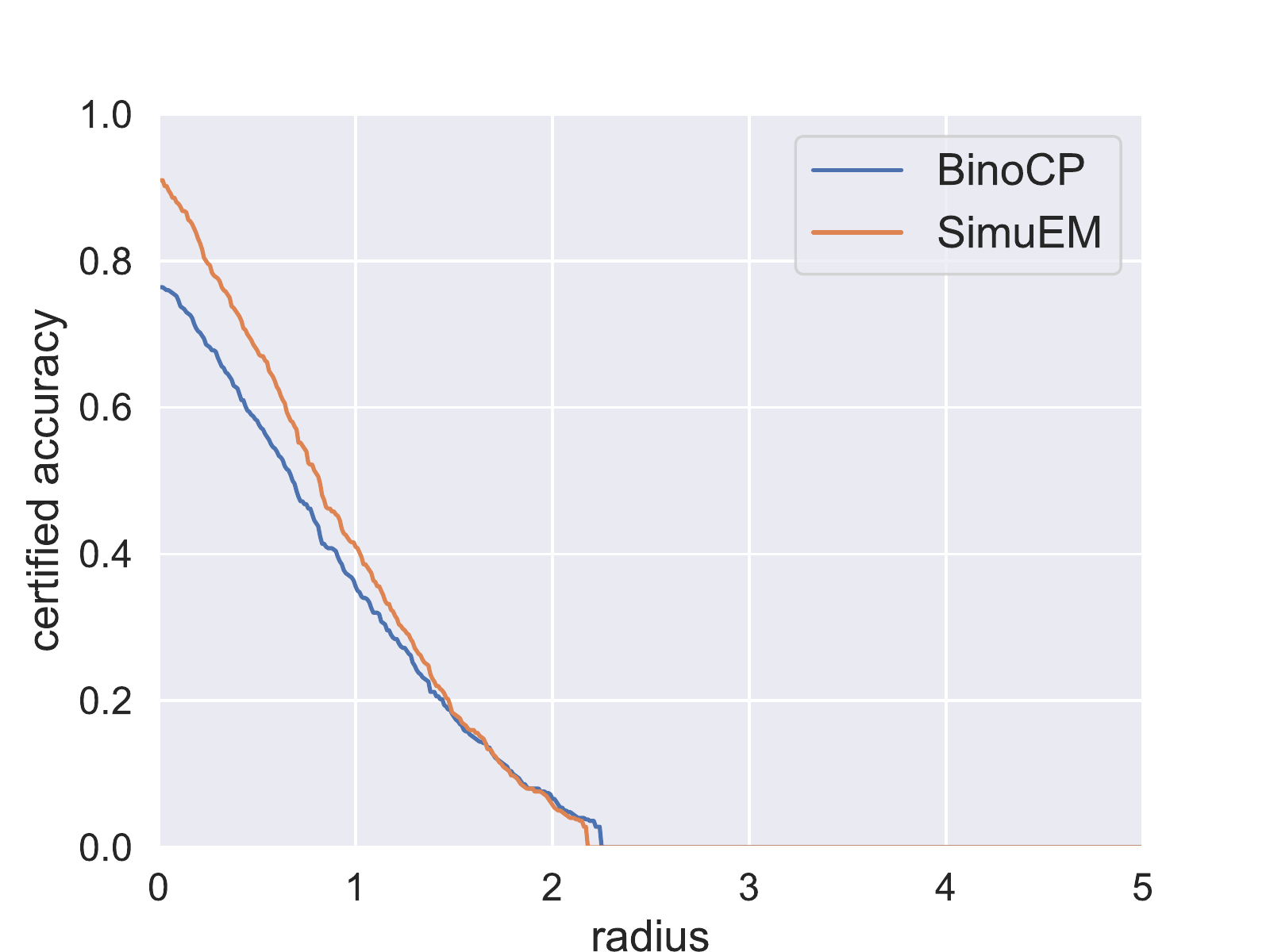}}
\subfloat[ImageNet]{\includegraphics[width=0.4\textwidth]{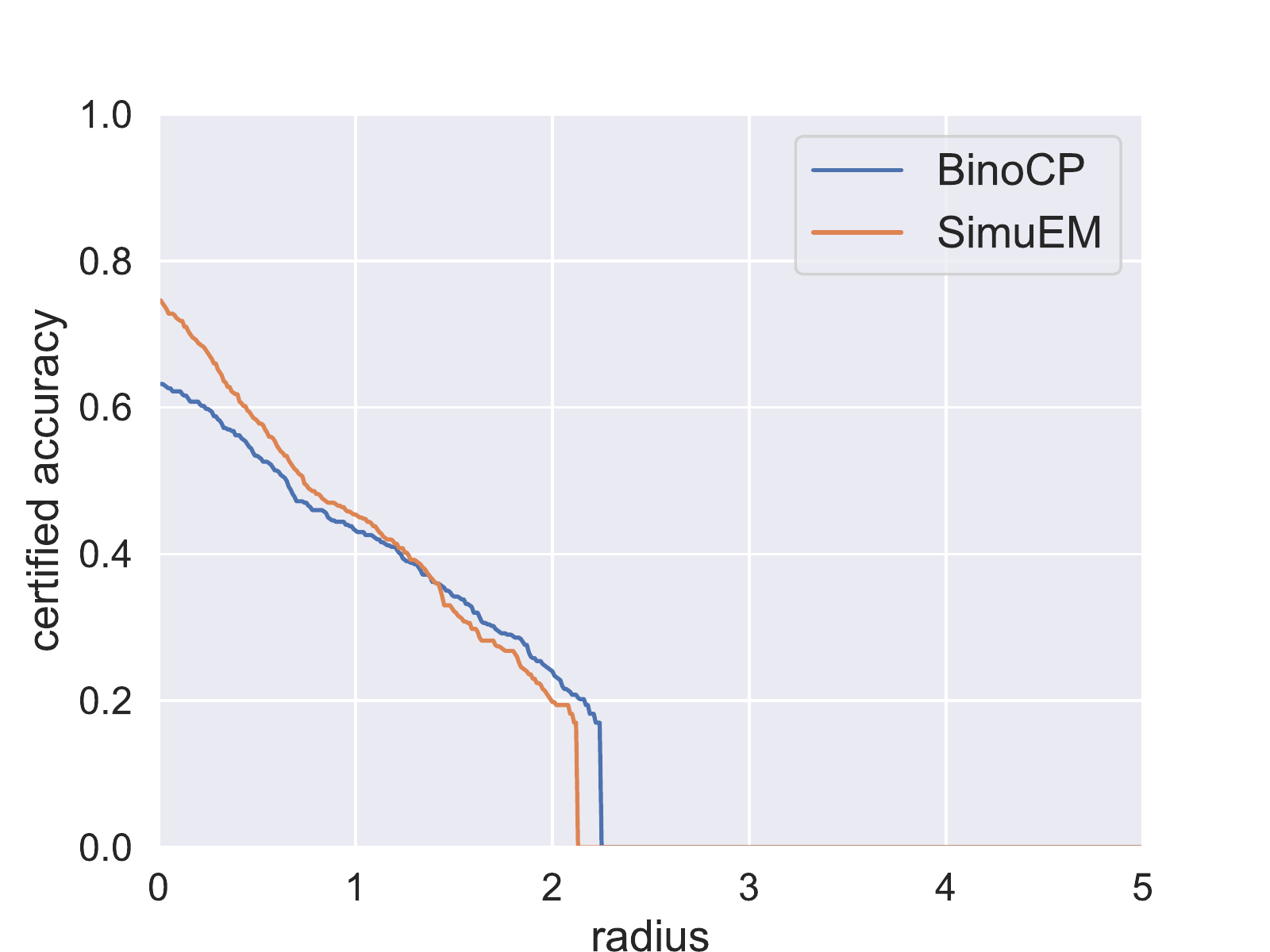}}
	 \caption{BinoCP vs. SimuEM, where $k=3$.}
	 \label{different_confidence_interval_estimation}
	 \vspace{-4mm}
\end{figure*}

\subsection{Experimental results}
Figure~\ref{different_k_cifar10_imagenet} shows the  certified top-$k$ accuracy as the radius $r$ increases for different $k$. Naturally, the  certified top-$k$ accuracy increases as  $k$ increases. On CIFAR10, we respectively achieve certified top-1, top-2, and top-3  accuracies as 45.2\%, 58.8\%, and 67.2\% when the $\ell_2$-norm of the adversarial perturbation is less than 0.5 (127/255). On ImageNet, we respectively achieve certified top-1, top-3, and top-5  accuracies as 46.6\%, 57.8\%, and 62.8\% when the $\ell_2$-norm of the adversarial perturbation is less than 0.5. 
On CIFAR10, the gaps between the certified top-$k$  accuracy for different $k$ are smaller than those between the top-$k$ accuracy under no attacks, and they become smaller as the radius increases. On ImageNet, the gaps between the certified top-$k$  accuracy for different $k$ remain similar to those between the top-$k$ accuracy under no attacks as the radius increases.
Figure~\ref{fig_compare_alpha_cifar10_imagenet} shows the influence of the confidence level. We observe that confidence level has a small influence on the certified top-$k$ accuracy as the different curves almost overlap. The reason is that the estimated confidence intervals of the probabilities shrink slowly as the confidence level increases. Figure~\ref{different_sigma} shows the influence of $\sigma$. We observe that $\sigma$ controls a trade-off between normal  accuracy under no attacks and robustness. Specifically, when $\sigma$ is smaller, the accuracy under no attacks (i.e., the accuracy when radius is 0) is larger, but the certified top-$k$ accuracy drops more quickly as the radius increases. 
Figure~\ref{different_confidence_interval_estimation} compares \textbf{BinoCP} with $\textbf{SimuEM}$. The results show that  $\textbf{SimuEM}$ is better when the certified radius is small, while \textbf{BinoCP} is better when the certified radius is large. We found the reason is that when the certified radius is large,  $\underline{p_l}$ is relatively large, and thus $1 - \underline{p_l}$ already provides a  good estimation for $\overline{p}_i$, where $i\neq l$.

\section{Related Work}

Numerous defenses have been proposed against adversarial perturbations in the past several years. These defenses either show  robustness against existing attacks empirically, or prove the robustness against arbitrary bounded-perturbations (known as \emph{certified defenses}).   

\subsection{Empirical defenses}
The community has proposed many empirical defenses. The most effective empirical defense is \emph{adversarial training}~\citep{goodfellow2014explaining,Kurakin2017AdversarialML,tramer2018ensemble,madry2017towards}. However, adversarial training does not have certified robustness guarantees. Other examples of  empirical defenses include defensive distillation~\citep{papernot2016distillation}, MagNet~\citep{meng2017magnet}, PixelDefend~\citep{song2017pixeldefend}, Feature squeezing~\citep{xu2017feature}, and many others~\citep{liu2018adv,svoboda2018peernets,schott2019towards,buckman2018thermometer,ma2018characterizing,guo2018countering,dhillon2018stochastic,xie2018mitigating,song2018pixeldefend,samangouei2018defense,na2017cascade,metzen2017detecting}. However, many of these defenses were soon broken by adaptive attacks~\citep{carlini2017adversarial,athalye2018obfuscated,uesato2018adversarial,athalye2018robustness}.

\subsection{Certified defenses}
To end the arms race between defenders and adversaries, researchers have developed certified defenses against adversarial perturbations. Specifically, in a certifiably robust classifier, the predicted top-1 label is verifiably constant within a certain region (e.g., $\ell_2$-norm ball) around an input example, which provides a lower bound of the adversarial perturbation. Such certified defenses include satisfiability modulo theories based methods~\citep{katz2017reluplex,carlini2017provably,ehlers2017formal,huang2017safety}, mixed integer linear programming based methods~\citep{cheng2017maximum,lomuscio2017approach,dutta2017output,fischetti2018deep,bunel2018unified},  abstract interpretation based methods~\citep{gehr2018ai2,tjeng2017evaluating}, and global (or local) Lipschitz constant based methods~\citep{cisse2017parseval,gouk2018regularisation,tsuzuku2018lipschitz,anil2018sorting,wong2018provable,wang2018mixtrain,wang2018efficient,raghunathan2018certified,raghunathan2018semidefinite,wong2018scaling,dvijotham2018training,dvijotham2018dual,croce2018provable,gehr2018ai2,mirman2018differentiable,singh2018fast,gowal2018effectiveness,weng2018towards,zhang2018efficient}. However, these methods are not scalable to large neural networks and/or make assumptions on the architectures of the neural networks. For example, these defenses are not scalable/applicable to the complex neural networks for  ImageNet.

Randomized smoothing was first proposed as an empirical defense~\citep{cao2017mitigating,liu2018towards} without deriving the certified robustness guarantees. 
For instance, \citet{cao2017mitigating} proposed randomized smoothing with uniform noise from a hypercube centered at an example. 
\citet{lecuyer2018certified} was the first to prove the certified robustness guarantee of randomized smoothing for top-1 predictions. Their results leverage differential privacy. Subsequently, \citet{li2018second} further leverages information theory to improve the certified radius bound. \citet{cohen2019certified} obtains a tight certified radius bound for randomized smoothing with Gaussian noise by leveraging the Neyman-Pearson Lemma. \citet{pinot2019theoretical} theoretically demonstrated the robustness to adversarial attacks of randomized smoothing when adding noise from Exponential family distributions and devised an upper bound on the adversarial generalization gap of randomized neural networks. \citet{lee2019tight} generalized randomized smoothing to discrete data. \citet{salman2019provably} employed adversarial training to improve the performance of randomized smoothing. Unlike the other certified defenses, randomized smoothing is scalable to large neural networks and applicable to arbitrary classifiers. Our work derives the first certified robustness guarantee of randomized smoothing for top-$k$ predictions. Moreover, we show that our robustness guarantee is tight for randomized smoothing with Gaussian noise.

\section{Conclusion}
Adversarial perturbation poses a fundamental security threat to classifiers. Existing certified defenses focus on top-1 predictions, leaving top-$k$ predictions untouched. 
In this work, we derive the first certified radius under $\ell_2$-norm  for top-$k$ predictions. Our results are based on randomized smoothing. Moreover, we prove that our certified radius is tight for randomized smoothing with Gaussian noise.  In order to compute the certified radius in practice, we further propose simultaneous confidence interval estimation methods as well as design an algorithm to estimate a lower bound of the certified radius. 
Interesting directions for future work include 1) deriving a tight certified radius under other norms such as $\ell_1$ and $\ell_{\infty}$, 2) studying which noise gives the tightest certified radius for randomized smoothing, and 3) studying certified robustness for top-$k$ ranking.

\noindent\textbf{ACKNOWLEDGMENTS}\\
We thank the anonymous reviewers for insightful reviews. This work was supported by NSF grant No. 1937786.

\bibliography{refs,iclr2019_conference}

\begin{thebibliography}{72}
\providecommand{\natexlab}[1]{#1}
\providecommand{\url}[1]{\texttt{#1}}
\expandafter\ifx\csname urlstyle\endcsname\relax
  \providecommand{\doi}[1]{doi: #1}\else
  \providecommand{\doi}{doi: \begingroup \urlstyle{rm}\Url}\fi

\bibitem[Anil et~al.(2019)Anil, Lucas, and Grosse]{anil2018sorting}
Cem Anil, James Lucas, and Roger Grosse.
\newblock Sorting out lipschitz function approximation.
\newblock In \emph{International Conference on Machine Learning}, 2019.

\bibitem[Athalye \& Carlini(2018)Athalye and Carlini]{athalye2018robustness}
Anish Athalye and Nicholas Carlini.
\newblock On the robustness of the cvpr 2018 white-box adversarial example
  defenses.
\newblock \emph{The Bright and Dark Sides of Computer Vision: Challenges and
  Opportunities for Privacy and Security (CV-COPS)}, 2018.

\bibitem[Athalye et~al.(2018)Athalye, Carlini, and
  Wagner]{athalye2018obfuscated}
Anish Athalye, Nicholas Carlini, and David Wagner.
\newblock Obfuscated gradients give a false sense of security: Circumventing
  defenses to adversarial examples.
\newblock In \emph{International Conference on Machine Learning}, pp.\
  274--283, 2018.

\bibitem[Buckman et~al.(2018)Buckman, Roy, Raffel, and
  Goodfellow]{buckman2018thermometer}
Jacob Buckman, Aurko Roy, Colin Raffel, and Ian Goodfellow.
\newblock Thermometer encoding: One hot way to resist adversarial examples.
\newblock In \emph{ICLR}, 2018.

\bibitem[Bunel et~al.(2018)Bunel, Turkaslan, Torr, Kohli, and
  Mudigonda]{bunel2018unified}
Rudy~R Bunel, Ilker Turkaslan, Philip Torr, Pushmeet Kohli, and Pawan~K
  Mudigonda.
\newblock A unified view of piecewise linear neural network verification.
\newblock In \emph{Advances in Neural Information Processing Systems}, pp.\
  4790--4799, 2018.

\bibitem[Cao \& Gong(2017)Cao and Gong]{cao2017mitigating}
Xiaoyu Cao and Neil~Zhenqiang Gong.
\newblock Mitigating evasion attacks to deep neural networks via region-based
  classification.
\newblock In \emph{Proceedings of the 33rd Annual Computer Security
  Applications Conference}, pp.\  278--287. ACM, 2017.

\bibitem[Carlini \& Wagner(2017{\natexlab{a}})Carlini and
  Wagner]{carlini2017adversarial}
Nicholas Carlini and David Wagner.
\newblock Adversarial examples are not easily detected: Bypassing ten detection
  methods.
\newblock In \emph{Proceedings of the 10th ACM Workshop on Artificial
  Intelligence and Security}, pp.\  3--14. ACM, 2017{\natexlab{a}}.

\bibitem[Carlini \& Wagner(2017{\natexlab{b}})Carlini and
  Wagner]{carlini2017towards}
Nicholas Carlini and David Wagner.
\newblock Towards evaluating the robustness of neural networks.
\newblock In \emph{2017 IEEE Symposium on Security and Privacy (SP)}, pp.\
  39--57. IEEE, 2017{\natexlab{b}}.

\bibitem[Carlini et~al.(2017)Carlini, Katz, Barrett, and
  Dill]{carlini2017provably}
Nicholas Carlini, Guy Katz, Clark Barrett, and David~L Dill.
\newblock Provably minimally-distorted adversarial examples.
\newblock \emph{arXiv preprint arXiv:1709.10207}, 2017.

\bibitem[Cheng et~al.(2017)Cheng, N{\"u}hrenberg, and Ruess]{cheng2017maximum}
Chih-Hong Cheng, Georg N{\"u}hrenberg, and Harald Ruess.
\newblock Maximum resilience of artificial neural networks.
\newblock In \emph{International Symposium on Automated Technology for
  Verification and Analysis}, pp.\  251--268. Springer, 2017.

\bibitem[Cisse et~al.(2017)Cisse, Bojanowski, Grave, Dauphin, and
  Usunier]{cisse2017parseval}
Moustapha Cisse, Piotr Bojanowski, Edouard Grave, Yann Dauphin, and Nicolas
  Usunier.
\newblock Parseval networks: Improving robustness to adversarial examples.
\newblock In \emph{Proceedings of the 34th International Conference on Machine
  Learning-Volume 70}, pp.\  854--863. JMLR. org, 2017.

\bibitem[{Clarifai}()]{clarifai_demo}
{Clarifai}.
\newblock \url{https://www.clarifai.com/demo}.
\newblock July 2019.

\bibitem[Cohen et~al.(2019)Cohen, Rosenfeld, and Kolter]{cohen2019certified}
Jeremy~M Cohen, Elan Rosenfeld, and J~Zico Kolter.
\newblock Certified adversarial robustness via randomized smoothing.
\newblock \emph{arXiv preprint arXiv:1902.02918}, 2019.

\bibitem[Croce et~al.(2018)Croce, Andriushchenko, and Hein]{croce2018provable}
Francesco Croce, Maksym Andriushchenko, and Matthias Hein.
\newblock Provable robustness of relu networks via maximization of linear
  regions.
\newblock In \emph{Proceedings of the 22nd International Conference on
  Artificial Intelligence and Statistics}, 2018.

\bibitem[Deng et~al.(2009)Deng, Dong, Socher, Li, Li, and
  Fei-Fei]{deng2009imagenet}
Jia Deng, Wei Dong, Richard Socher, Li-Jia Li, Kai Li, and Li~Fei-Fei.
\newblock Imagenet: A large-scale hierarchical image database.
\newblock In \emph{2009 IEEE conference on computer vision and pattern
  recognition}, pp.\  248--255. Ieee, 2009.

\bibitem[Dhillon et~al.(2018)Dhillon, Azizzadenesheli, Lipton, Bernstein,
  Kossaifi, Khanna, and Anandkumar]{dhillon2018stochastic}
Guneet~S Dhillon, Kamyar Azizzadenesheli, Zachary~C Lipton, Jeremy Bernstein,
  Jean Kossaifi, Aran Khanna, and Anima Anandkumar.
\newblock Stochastic activation pruning for robust adversarial defense.
\newblock In \emph{ICLR}, 2018.

\bibitem[Dutta et~al.(2017)Dutta, Jha, Sanakaranarayanan, and
  Tiwari]{dutta2017output}
Souradeep Dutta, Susmit Jha, Sriram Sanakaranarayanan, and Ashish Tiwari.
\newblock Output range analysis for deep neural networks.
\newblock \emph{arXiv preprint arXiv:1709.09130}, 2017.

\bibitem[Dvijotham et~al.(2018{\natexlab{a}})Dvijotham, Gowal, Stanforth,
  Arandjelovic, O'Donoghue, Uesato, and Kohli]{dvijotham2018training}
Krishnamurthy Dvijotham, Sven Gowal, Robert Stanforth, Relja Arandjelovic,
  Brendan O'Donoghue, Jonathan Uesato, and Pushmeet Kohli.
\newblock Training verified learners with learned verifiers.
\newblock \emph{arXiv preprint arXiv:1805.10265}, 2018{\natexlab{a}}.

\bibitem[Dvijotham et~al.(2018{\natexlab{b}})Dvijotham, Stanforth, Gowal, Mann,
  and Kohli]{dvijotham2018dual}
Krishnamurthy Dvijotham, Robert Stanforth, Sven Gowal, Timothy~A Mann, and
  Pushmeet Kohli.
\newblock A dual approach to scalable verification of deep networks.
\newblock In \emph{UAI}, pp.\  550--559, 2018{\natexlab{b}}.

\bibitem[Ehlers(2017)]{ehlers2017formal}
Ruediger Ehlers.
\newblock Formal verification of piece-wise linear feed-forward neural
  networks.
\newblock In \emph{International Symposium on Automated Technology for
  Verification and Analysis}, pp.\  269--286. Springer, 2017.

\bibitem[Fischetti \& Jo(2018)Fischetti and Jo]{fischetti2018deep}
Matteo Fischetti and Jason Jo.
\newblock Deep neural networks and mixed integer linear optimization.
\newblock \emph{Constraints}, 23:\penalty0 296--309, 2018.

\bibitem[Gehr et~al.(2018)Gehr, Mirman, Drachsler-Cohen, Tsankov, Chaudhuri,
  and Vechev]{gehr2018ai2}
Timon Gehr, Matthew Mirman, Dana Drachsler-Cohen, Petar Tsankov, Swarat
  Chaudhuri, and Martin Vechev.
\newblock Ai2: Safety and robustness certification of neural networks with
  abstract interpretation.
\newblock In \emph{2018 IEEE Symposium on Security and Privacy (SP)}, pp.\
  3--18. IEEE, 2018.

\bibitem[Goodfellow et~al.(2015)Goodfellow, Shlens, and
  Szegedy]{goodfellow2014explaining}
Ian~J Goodfellow, Jonathon Shlens, and Christian Szegedy.
\newblock Explaining and harnessing adversarial examples.
\newblock In \emph{International Conference on Learning Representations}, 2015.

\bibitem[Goodman(1965)]{goodman1965simultaneous}
Leo~A Goodman.
\newblock On simultaneous confidence intervals for multinomial proportions.
\newblock \emph{Technometrics}, 7\penalty0 (2):\penalty0 247--254, 1965.

\bibitem[{Google Cloud Vision}()]{Google_Cloud_Vision}
{Google Cloud Vision}.
\newblock \url{https://cloud.google.com/vision/}.
\newblock July 2019.

\bibitem[Gouk et~al.(2018)Gouk, Frank, Pfahringer, and
  Cree]{gouk2018regularisation}
Henry Gouk, Eibe Frank, Bernhard Pfahringer, and Michael Cree.
\newblock Regularisation of neural networks by enforcing lipschitz continuity.
\newblock \emph{arXiv preprint arXiv:1804.04368}, 2018.

\bibitem[Gowal et~al.(2018)Gowal, Dvijotham, Stanforth, Bunel, Qin, Uesato,
  Mann, and Kohli]{gowal2018effectiveness}
Sven Gowal, Krishnamurthy Dvijotham, Robert Stanforth, Rudy Bunel, Chongli Qin,
  Jonathan Uesato, Timothy Mann, and Pushmeet Kohli.
\newblock On the effectiveness of interval bound propagation for training
  verifiably robust models.
\newblock \emph{arXiv preprint arXiv:1810.12715}, 2018.

\bibitem[Guo et~al.(2018)Guo, Rana, Cisse, and Van
  Der~Maaten]{guo2018countering}
Chuan Guo, Mayank Rana, Moustapha Cisse, and Laurens Van Der~Maaten.
\newblock Countering adversarial images using input transformations.
\newblock In \emph{ICLR}, 2018.

\bibitem[Huang et~al.(2017)Huang, Kwiatkowska, Wang, and Wu]{huang2017safety}
Xiaowei Huang, Marta Kwiatkowska, Sen Wang, and Min Wu.
\newblock Safety verification of deep neural networks.
\newblock In \emph{International Conference on Computer Aided Verification},
  pp.\  3--29. Springer, 2017.

\bibitem[Hung et~al.(2019)Hung, Fithian, et~al.]{hung2019rank}
Kenneth Hung, William Fithian, et~al.
\newblock Rank verification for exponential families.
\newblock \emph{The Annals of Statistics}, 47\penalty0 (2):\penalty0 758--782,
  2019.

\bibitem[Jia \& Gong(2018)Jia and Gong]{jia2018attriguard}
Jinyuan Jia and Neil~Zhenqiang Gong.
\newblock {AttriGuard}: A practical defense against attribute inference attacks
  via adversarial machine learning.
\newblock In \emph{USENIX Security Symposium}, 2018.

\bibitem[Katz et~al.(2017)Katz, Barrett, Dill, Julian, and
  Kochenderfer]{katz2017reluplex}
Guy Katz, Clark Barrett, David~L Dill, Kyle Julian, and Mykel~J Kochenderfer.
\newblock Reluplex: An efficient smt solver for verifying deep neural networks.
\newblock In \emph{International Conference on Computer Aided Verification},
  pp.\  97--117. Springer, 2017.

\bibitem[Krizhevsky \& Hinton(2009)Krizhevsky and
  Hinton]{krizhevsky2009learning}
Alex Krizhevsky and Geoffrey Hinton.
\newblock Learning multiple layers of features from tiny images.
\newblock Technical report, Citeseer, 2009.

\bibitem[Kurakin et~al.(2017)Kurakin, Goodfellow, and
  Bengio]{Kurakin2017AdversarialML}
Alexey Kurakin, Ian~J. Goodfellow, and Samy Bengio.
\newblock Adversarial machine learning at scale.
\newblock In \emph{International Conference on Learning Representations}, 2017.

\bibitem[Lecuyer et~al.(2019)Lecuyer, Atlidakis, Geambasu, Hsu, and
  Jana]{lecuyer2018certified}
Mathias Lecuyer, Vaggelis Atlidakis, Roxana Geambasu, Daniel Hsu, and Suman
  Jana.
\newblock Certified robustness to adversarial examples with differential
  privacy.
\newblock In \emph{IEEE Symposium on Security and Privacy (SP)}, 2019.

\bibitem[Lee et~al.(2019)Lee, Yuan, Chang, and Jaakkola]{lee2019tight}
Guang-He Lee, Yang Yuan, Shiyu Chang, and Tommi Jaakkola.
\newblock Tight certificates of adversarial robustness for randomly smoothed
  classifiers.
\newblock In \emph{Advances in Neural Information Processing Systems}, pp.\
  4911--4922, 2019.

\bibitem[Li et~al.(2018)Li, Chen, Wang, and Carin]{li2018second}
Bai Li, Changyou Chen, Wenlin Wang, and Lawrence Carin.
\newblock Second-order adversarial attack and certifiable robustness.
\newblock \emph{arXiv preprint arXiv:1809.03113}, 2018.

\bibitem[Liu et~al.(2018)Liu, Cheng, Zhang, and Hsieh]{liu2018towards}
Xuanqing Liu, Minhao Cheng, Huan Zhang, and Cho-Jui Hsieh.
\newblock Towards robust neural networks via random self-ensemble.
\newblock In \emph{Proceedings of the European Conference on Computer Vision
  (ECCV)}, pp.\  369--385, 2018.

\bibitem[Liu et~al.(2019)Liu, Li, Wu, and Hsieh]{liu2018adv}
Xuanqing Liu, Yao Li, Chongruo Wu, and Cho-Jui Hsieh.
\newblock Adv-bnn: Improved adversarial defense through robust bayesian neural
  network.
\newblock In \emph{ICLR}, 2019.

\bibitem[Lomuscio \& Maganti(2017)Lomuscio and Maganti]{lomuscio2017approach}
Alessio Lomuscio and Lalit Maganti.
\newblock An approach to reachability analysis for feed-forward relu neural
  networks.
\newblock \emph{arXiv preprint arXiv:1706.07351}, 2017.

\bibitem[Ma et~al.(2018)Ma, Li, Wang, Erfani, Wijewickrema, Schoenebeck, Song,
  Houle, and Bailey]{ma2018characterizing}
Xingjun Ma, Bo~Li, Yisen Wang, Sarah~M Erfani, Sudanthi Wijewickrema, Grant
  Schoenebeck, Dawn Song, Michael~E Houle, and James Bailey.
\newblock Characterizing adversarial subspaces using local intrinsic
  dimensionality.
\newblock In \emph{ICLR}, 2018.

\bibitem[Madry et~al.(2018)Madry, Makelov, Schmidt, Tsipras, and
  Vladu]{madry2017towards}
Aleksander Madry, Aleksandar Makelov, Ludwig Schmidt, Dimitris Tsipras, and
  Adrian Vladu.
\newblock Towards deep learning models resistant to adversarial attacks.
\newblock In \emph{International Conference on Learning Representations}, 2018.

\bibitem[Meng \& Chen(2017)Meng and Chen]{meng2017magnet}
Dongyu Meng and Hao Chen.
\newblock Magnet: a two-pronged defense against adversarial examples.
\newblock In \emph{Proceedings of the 2017 ACM SIGSAC Conference on Computer
  and Communications Security}, pp.\  135--147. ACM, 2017.

\bibitem[Metzen et~al.(2017)Metzen, Genewein, Fischer, and
  Bischoff]{metzen2017detecting}
Jan~Hendrik Metzen, Tim Genewein, Volker Fischer, and Bastian Bischoff.
\newblock On detecting adversarial perturbations.
\newblock In \emph{ICLR}, 2017.

\bibitem[Mirman et~al.(2018)Mirman, Gehr, and Vechev]{mirman2018differentiable}
Matthew Mirman, Timon Gehr, and Martin Vechev.
\newblock Differentiable abstract interpretation for provably robust neural
  networks.
\newblock In \emph{International Conference on Machine Learning}, pp.\
  3575--3583, 2018.

\bibitem[Na et~al.(2018)Na, Ko, and Mukhopadhyay]{na2017cascade}
Taesik Na, Jong~Hwan Ko, and Saibal Mukhopadhyay.
\newblock Cascade adversarial machine learning regularized with a unified
  embedding.
\newblock In \emph{ICLR}, 2018.

\bibitem[Neyman \& Pearson(1933)Neyman and Pearson]{neyman1933ix}
Jerzy Neyman and Egon~Sharpe Pearson.
\newblock On the problem of the most efficient tests of statistical hypotheses.
\newblock \emph{Philosophical Transactions of the Royal Society of London.
  Series A, Containing Papers of a Mathematical or Physical Character},
  231\penalty0 (694-706):\penalty0 289--337, 1933.

\bibitem[Papernot et~al.(2016)Papernot, McDaniel, Wu, Jha, and
  Swami]{papernot2016distillation}
Nicolas Papernot, Patrick McDaniel, Xi~Wu, Somesh Jha, and Ananthram Swami.
\newblock Distillation as a defense to adversarial perturbations against deep
  neural networks.
\newblock In \emph{2016 IEEE Symposium on Security and Privacy (SP)}, pp.\
  582--597. IEEE, 2016.

\bibitem[Pinot et~al.(2019)Pinot, Meunier, Araujo, Kashima, Yger, Gouy-Pailler,
  and Atif]{pinot2019theoretical}
Rafael Pinot, Laurent Meunier, Alexandre Araujo, Hisashi Kashima, Florian Yger,
  Cedric Gouy-Pailler, and Jamal Atif.
\newblock Theoretical evidence for adversarial robustness through
  randomization.
\newblock In \emph{Advances in Neural Information Processing Systems}, pp.\
  11838--11848, 2019.

\bibitem[Raghunathan et~al.(2018{\natexlab{a}})Raghunathan, Steinhardt, and
  Liang]{raghunathan2018certified}
Aditi Raghunathan, Jacob Steinhardt, and Percy Liang.
\newblock Certified defenses against adversarial examples.
\newblock In \emph{International Conference on Learning Representations},
  2018{\natexlab{a}}.

\bibitem[Raghunathan et~al.(2018{\natexlab{b}})Raghunathan, Steinhardt, and
  Liang]{raghunathan2018semidefinite}
Aditi Raghunathan, Jacob Steinhardt, and Percy~S Liang.
\newblock Semidefinite relaxations for certifying robustness to adversarial
  examples.
\newblock In \emph{Advances in Neural Information Processing Systems}, pp.\
  10877--10887, 2018{\natexlab{b}}.

\bibitem[Salman et~al.(2019)Salman, Li, Razenshteyn, Zhang, Zhang, Bubeck, and
  Yang]{salman2019provably}
Hadi Salman, Jerry Li, Ilya Razenshteyn, Pengchuan Zhang, Huan Zhang, Sebastien
  Bubeck, and Greg Yang.
\newblock Provably robust deep learning via adversarially trained smoothed
  classifiers.
\newblock In \emph{Advances in Neural Information Processing Systems}, pp.\
  11289--11300, 2019.

\bibitem[Samangouei et~al.(2018)Samangouei, Kabkab, and
  Chellappa]{samangouei2018defense}
Pouya Samangouei, Maya Kabkab, and Rama Chellappa.
\newblock Defense-gan: Protecting classifiers against adversarial attacks using
  generative models.
\newblock In \emph{ICLR}, 2018.

\bibitem[Schott et~al.(2019)Schott, Rauber, Bethge, and
  Brendel]{schott2019towards}
Lukas Schott, Jonas Rauber, Matthias Bethge, and Wieland Brendel.
\newblock Towards the first adversarially robust neural network model on mnist.
\newblock In \emph{ICLR}, 2019.

\bibitem[Singh et~al.(2018)Singh, Gehr, Mirman, P{\"u}schel, and
  Vechev]{singh2018fast}
Gagandeep Singh, Timon Gehr, Matthew Mirman, Markus P{\"u}schel, and Martin
  Vechev.
\newblock Fast and effective robustness certification.
\newblock In \emph{Advances in Neural Information Processing Systems}, pp.\
  10802--10813, 2018.

\bibitem[Sison \& Glaz(1995)Sison and Glaz]{sison1995simultaneous}
Cristina~P Sison and Joseph Glaz.
\newblock Simultaneous confidence intervals and sample size determination for
  multinomial proportions.
\newblock \emph{Journal of the American Statistical Association}, 90\penalty0
  (429):\penalty0 366--369, 1995.

\bibitem[Song et~al.(2017)Song, Kim, Nowozin, Ermon, and
  Kushman]{song2017pixeldefend}
Yang Song, Taesup Kim, Sebastian Nowozin, Stefano Ermon, and Nate Kushman.
\newblock Pixeldefend: Leveraging generative models to understand and defend
  against adversarial examples.
\newblock In \emph{ICLR}, 2017.

\bibitem[Song et~al.(2018)Song, Kim, Nowozin, Ermon, and
  Kushman]{song2018pixeldefend}
Yang Song, Taesup Kim, Sebastian Nowozin, Stefano Ermon, and Nate Kushman.
\newblock Pixeldefend: Leveraging generative models to understand and defend
  against adversarial examples.
\newblock In \emph{ICLR}, 2018.

\bibitem[Svoboda et~al.(2019)Svoboda, Masci, Monti, Bronstein, and
  Guibas]{svoboda2018peernets}
Jan Svoboda, Jonathan Masci, Federico Monti, Michael~M Bronstein, and Leonidas
  Guibas.
\newblock Peernets: Exploiting peer wisdom against adversarial attacks.
\newblock In \emph{ICLR}, 2019.

\bibitem[Szegedy et~al.(2014)Szegedy, Zaremba, Sutskever, Bruna, Erhan,
  Goodfellow, and Fergus]{Szegedy14}
Christian Szegedy, Wojciech Zaremba, Ilya Sutskever, Joan Bruna, Dumitru Erhan,
  Ian Goodfellow, and Rob Fergus.
\newblock Intriguing properties of neural networks.
\newblock In \emph{ICLR}, 2014.

\bibitem[Tjeng et~al.(2018)Tjeng, Xiao, and Tedrake]{tjeng2017evaluating}
Vincent Tjeng, Kai Xiao, and Russ Tedrake.
\newblock Evaluating robustness of neural networks with mixed integer
  programming.
\newblock In \emph{ICML}, 2018.

\bibitem[Tram{\`e}r et~al.(2018)Tram{\`e}r, Kurakin, Papernot, Goodfellow,
  Boneh, and McDaniel]{tramer2018ensemble}
Florian Tram{\`e}r, Alexey Kurakin, Nicolas Papernot, Ian Goodfellow, Dan
  Boneh, and Patrick McDaniel.
\newblock Ensemble adversarial training: Attacks and defenses.
\newblock In \emph{ICLR}, 2018.

\bibitem[Tsuzuku et~al.(2018)Tsuzuku, Sato, and Sugiyama]{tsuzuku2018lipschitz}
Yusuke Tsuzuku, Issei Sato, and Masashi Sugiyama.
\newblock Lipschitz-margin training: Scalable certification of perturbation
  invariance for deep neural networks.
\newblock In \emph{Advances in Neural Information Processing Systems}, pp.\
  6541--6550, 2018.

\bibitem[Uesato et~al.(2018)Uesato, O'Donoghue, Kohli, and
  Oord]{uesato2018adversarial}
Jonathan Uesato, Brendan O'Donoghue, Pushmeet Kohli, and Aaron Oord.
\newblock Adversarial risk and the dangers of evaluating against weak attacks.
\newblock In \emph{International Conference on Machine Learning}, pp.\
  5032--5041, 2018.

\bibitem[Wang et~al.(2018{\natexlab{a}})Wang, Chen, Abdou, and
  Jana]{wang2018mixtrain}
Shiqi Wang, Yizheng Chen, Ahmed Abdou, and Suman Jana.
\newblock Mixtrain: Scalable training of formally robust neural networks.
\newblock \emph{arXiv preprint arXiv:1811.02625}, 2018{\natexlab{a}}.

\bibitem[Wang et~al.(2018{\natexlab{b}})Wang, Pei, Whitehouse, Yang, and
  Jana]{wang2018efficient}
Shiqi Wang, Kexin Pei, Justin Whitehouse, Junfeng Yang, and Suman Jana.
\newblock Efficient formal safety analysis of neural networks.
\newblock In \emph{Advances in Neural Information Processing Systems}, pp.\
  6367--6377, 2018{\natexlab{b}}.

\bibitem[Weng et~al.(2018)Weng, Zhang, Chen, Song, Hsieh, Boning, Dhillon, and
  Daniel]{weng2018towards}
Tsui-Wei Weng, Huan Zhang, Hongge Chen, Zhao Song, Cho-Jui Hsieh, Duane Boning,
  Inderjit~S Dhillon, and Luca Daniel.
\newblock Towards fast computation of certified robustness for relu networks.
\newblock \emph{arXiv preprint arXiv:1804.09699}, 2018.

\bibitem[Wong \& Kolter(2018)Wong and Kolter]{wong2018provable}
Eric Wong and Zico Kolter.
\newblock Provable defenses against adversarial examples via the convex outer
  adversarial polytope.
\newblock In \emph{International Conference on Machine Learning}, pp.\
  5283--5292, 2018.

\bibitem[Wong et~al.(2018)Wong, Schmidt, Metzen, and Kolter]{wong2018scaling}
Eric Wong, Frank Schmidt, Jan~Hendrik Metzen, and J~Zico Kolter.
\newblock Scaling provable adversarial defenses.
\newblock In \emph{Advances in Neural Information Processing Systems}, pp.\
  8400--8409, 2018.

\bibitem[Xie et~al.(2018)Xie, Wang, Zhang, Ren, and Yuille]{xie2018mitigating}
Cihang Xie, Jianyu Wang, Zhishuai Zhang, Zhou Ren, and Alan Yuille.
\newblock Mitigating adversarial effects through randomization.
\newblock In \emph{ICLR}, 2018.

\bibitem[Xu et~al.(2018)Xu, Evans, and Qi]{xu2017feature}
Weilin Xu, David Evans, and Yanjun Qi.
\newblock Feature squeezing: Detecting adversarial examples in deep neural
  networks.
\newblock In \emph{NDSS}, 2018.

\bibitem[Zhang et~al.(2018)Zhang, Weng, Chen, Hsieh, and
  Daniel]{zhang2018efficient}
Huan Zhang, Tsui-Wei Weng, Pin-Yu Chen, Cho-Jui Hsieh, and Luca Daniel.
\newblock Efficient neural network robustness certification with general
  activation functions.
\newblock In \emph{Advances in Neural Information Processing Systems}, pp.\
  4939--4948, 2018.

\end{thebibliography}
\bibliographystyle{iclr2020_conference}

\appendix

\section{Proof of Theorem~\ref{theorem_of_certified_radius}}
\label{proof_theorem_of_certified_radius}

Given an example $\mathbf{x}$, we define the following two random variables: 
\begin{align}
   \label{definition_of_x}
   & \mathbf{X}=\mathbf{x}+\epsilon \sim \mathcal{N}(\mathbf{x},\sigma^{2}I), \\
     \label{definition_of_y}
   & \mathbf{Y}=\mathbf{x}+\delta+\epsilon \sim \mathcal{N}(\mathbf{x}+\delta,\sigma^{2}I),
\end{align}
where $\epsilon \sim \mathcal{N}(0,\sigma^2 I)$. The random variables $\mathbf{X}$ and $\mathbf{Y}$ represent random samples obtained by adding isotropic Gaussian noise to the example $\mathbf{x}$ and its perturbed version $\mathbf{x}+\delta$, respectively. 
\citet{cohen2019certified} applied the standard Neyman-Pearson Lemma~\citep{neyman1933ix} to the above two random variables, and obtained the following lemma: 
\begin{restatable}[Neyman-Pearson for Gaussians with different means]{lem}{lemmanpgaussian}
\label{lemma_np_gaussian}
Let $\mathbf{X}\sim \mathcal{N}(\mathbf{x},\sigma^2 I)$, $\mathbf{Y}\sim \mathcal{N}(\mathbf{x}+\delta,\sigma^2 I)$, and $M:\mathbb{R}^d\xrightarrow{} \{0,1\}$ be a random or deterministic function. Then, we have the following:  

(1) If $Z=\{\mathbf{z}\in \mathbb{R}^d: \delta^{T}\mathbf{z} \leq \beta  \}$ for some $\beta$ and $\text{Pr}(M(\mathbf{X})=1)\geq \text{Pr}(\mathbf{X}\in Z)$, then $\text{Pr}(M(\mathbf{Y})=1)\geq \text{Pr}(\mathbf{Y}\in Z)$

(2) If $Z=\{\mathbf{z}\in \mathbb{R}^d: \delta^{T}\mathbf{z} \geq \beta  \}$ for some $\beta$ and $\text{Pr}(M(\mathbf{X})=1)\leq \text{Pr}(\mathbf{X}\in Z)$, then $\text{Pr}(M(\mathbf{Y})=1)\leq \text{Pr}(\mathbf{Y}\in Z)$
\end{restatable}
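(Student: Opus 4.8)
\textbf{Proof proposal for Lemma \ref{lemma_np_gaussian}.}

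The plan is to prove both parts directly from the standard Neyman--Pearson Lemma by recognizing the half-space $Z$ as precisely the acceptance region of a likelihood-ratio test between the two Gaussians $\mathbf{X}\sim\mathcal{N}(\mathbf{x},\sigma^2 I)$ and $\mathbf{Y}\sim\mathcal{N}(\mathbf{x}+\delta,\sigma^2 I)$. First I would write down the two densities explicitly and form the likelihood ratio. Writing $p_{\mathbf{X}}(\mathbf{z})$ and $p_{\mathbf{Y}}(\mathbf{z})$ for the two densities, a short computation shows that the ratio $p_{\mathbf{Y}}(\mathbf{z})/p_{\mathbf{X}}(\mathbf{z})$ is a strictly increasing function of the linear statistic $\delta^{T}\mathbf{z}$, because the quadratic terms in the exponent cancel and only the cross term survives. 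Concretely, the exponent of the ratio is an affine function of $\delta^{T}\mathbf{z}$. Consequently, a set of the form $\{\mathbf{z}:\delta^{T}\mathbf{z}\le\beta\}$ is exactly a set of the form $\{\mathbf{z}: p_{\mathbf{Y}}(\mathbf{z})/p_{\mathbf{X}}(\mathbf{z})\le\tau\}$ for a corresponding threshold $\tau$, and similarly $\{\mathbf{z}:\delta^{T}\mathbf{z}\ge\beta\}$ corresponds to a likelihood-ratio upper set.

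For part (1), I would set up the hypothesis-testing frame where rejecting the null corresponds to the event $M=1$. The half-space $Z=\{\delta^{T}\mathbf{z}\le\beta\}$ is the rejection region of the likelihood-ratio test that rejects when the ratio $p_{\mathbf{Y}}/p_{\mathbf{X}}$ is small. The hypothesis $\text{Pr}(M(\mathbf{X})=1)\ge\text{Pr}(\mathbf{X}\in Z)$ says that the test $M$ has type-I-style measure under $\mathbf{X}$ at least as large as that of the likelihood-ratio test $Z$. The Neyman--Pearson Lemma then guarantees that the likelihood-ratio test is the \emph{most powerful} among all tests whose measure under $\mathbf{X}$ is bounded by $\text{Pr}(\mathbf{X}\in Z)$; equivalently, any test $M$ whose $\mathbf{X}$-measure is at least $\text{Pr}(\mathbf{X}\in Z)$ must have $\mathbf{Y}$-measure at least that of $Z$, yielding $\text{Pr}(M(\mathbf{Y})=1)\ge\text{Pr}(\mathbf{Y}\in Z)$. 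Part (2) is the mirror image: $Z=\{\delta^{T}\mathbf{z}\ge\beta\}$ is the region where the ratio is large, and the analogous optimality statement gives the reversed inequality $\text{Pr}(M(\mathbf{Y})=1)\le\text{Pr}(\mathbf{Y}\in Z)$ whenever $\text{Pr}(M(\mathbf{X})=1)\le\text{Pr}(\mathbf{X}\in Z)$.

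The main obstacle I anticipate is handling the technicalities so that the classical Neyman--Pearson Lemma applies verbatim: I must allow $M$ to be randomized (so the ``test'' is a measurable function $\phi(\mathbf{z})=\text{Pr}(M(\mathbf{z})=1)\in[0,1]$ rather than an indicator), and I must address the boundary set $\{\delta^{T}\mathbf{z}=\beta\}$, which has Lebesgue measure zero under both Gaussians (since $\delta^{T}\mathbf{z}$ has a nondegenerate one-dimensional Gaussian law when $\delta\neq0$), so the weak/strict inequality in the definition of $Z$ is immaterial. The degenerate case $\delta=0$ should be dispatched separately, since then $\mathbf{X}$ and $\mathbf{Y}$ have the same distribution and both conclusions hold trivially. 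Once these measure-theoretic caveats are in place, the argument reduces to invoking the standard lemma with the monotone likelihood-ratio identification established in the first step.
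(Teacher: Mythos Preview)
Your proposal is correct and matches the paper's approach: the paper does not give its own detailed proof of this lemma but simply attributes it to \citet{cohen2019certified}, noting that it is obtained by applying the standard Neyman--Pearson Lemma to the two Gaussians. Your sketch---computing the likelihood ratio, observing it is a monotone function of $\delta^{T}\mathbf{z}$ so that the half-spaces are exactly the likelihood-ratio level sets, and then invoking Neyman--Pearson (with the routine caveats about randomized $M$, the null boundary, and the degenerate case $\delta=0$)---is precisely the argument underlying that citation.
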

Moreover, we have the following lemma from~\citet{cohen2019certified}. 
\begin{restatable}{lem}{lemmacomputealgebra}
\label{lemma_compute_algebra}
Given an example $\mathbf{x}$, a number $q\in [0,1]$, and regions $\mathcal{A}$ and $\mathcal{B}$ defined as follows: 
\begin{align}
    &\mathcal{A}=\{\mathbf{z}: \delta^{T}(\mathbf{z}-\mathbf{x})\leq \sigma\lnorm{\delta}_2\Phi^{-1}(q)\} \\
    &\mathcal{B}=\{\mathbf{z}: \delta^{T}(\mathbf{z}-\mathbf{x})\geq \sigma\lnorm{\delta}_2\Phi^{-1}(1-q)\} 
\end{align}
Then, we have the following equations: 
\begin{align}
&\text{Pr}(\mathbf{X}\in\mathcal{A})= q \\ &\text{Pr}(\mathbf{X}\in\mathcal{B})= q \\
&\text{Pr}(\mathbf{Y}\in \mathcal{A})=\Phi(\Phi^{-1}(q)-\frac{\lnorm{\delta}_2}{\sigma}) \\
&\text{Pr}(\mathbf{Y}\in \mathcal{B})=\Phi(\Phi^{-1}(q)+\frac{\lnorm{\delta}_2}{\sigma})
\end{align}
\end{restatable}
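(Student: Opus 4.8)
\section*{Proof proposal}

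The plan is to reduce the entire statement to a one-dimensional Gaussian tail computation. The crucial observation is that both $\mathcal{A}$ and $\mathcal{B}$ are half-spaces whose membership depends on a point $\mathbf{z}$ only through the scalar linear functional $W(\mathbf{z}) = \delta^{T}(\mathbf{z}-\mathbf{x})$; that is, $\mathcal{A} = \{\mathbf{z} : W(\mathbf{z}) \le \sigma\lnorm{\delta}_2\,\Phi^{-1}(q)\}$ and $\mathcal{B} = \{\mathbf{z} : W(\mathbf{z}) \ge \sigma\lnorm{\delta}_2\,\Phi^{-1}(1-q)\}$. (I assume $\lnorm{\delta}_2 > 0$, the only case of interest.) Hence it suffices to determine the distribution of the scalar $W$ when its argument is drawn from $\mathbf{X}$ and from $\mathbf{Y}$, and then to evaluate the corresponding one-sided tail probabilities of that scalar Gaussian.

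First I would compute the law of $W$ under each measure, using that a linear image of a multivariate Gaussian is again Gaussian. Writing $\mathbf{X}-\mathbf{x}=\epsilon$ with $\epsilon\sim\mathcal{N}(0,\sigma^2 I)$, the scalar $W(\mathbf{X})=\delta^{T}\epsilon$ has mean $0$ and variance $\delta^{T}(\sigma^2 I)\delta=\sigma^2\lnorm{\delta}_2^2$, so $W(\mathbf{X})\sim\mathcal{N}(0,\sigma^2\lnorm{\delta}_2^2)$. Similarly $\mathbf{Y}-\mathbf{x}=\delta+\epsilon$, whence $W(\mathbf{Y})=\lnorm{\delta}_2^2+\delta^{T}\epsilon\sim\mathcal{N}(\lnorm{\delta}_2^2,\sigma^2\lnorm{\delta}_2^2)$. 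Standardizing by the common scale $\sigma\lnorm{\delta}_2$ turns each of the four probabilities into a value of $\Phi$. For the two $\mathbf{X}$-cases this gives $\text{Pr}(\mathbf{X}\in\mathcal{A})=\Phi(\Phi^{-1}(q))=q$ and, for the upper tail, $\text{Pr}(\mathbf{X}\in\mathcal{B})=1-\Phi(\Phi^{-1}(1-q))=q$. For $\text{Pr}(\mathbf{Y}\in\mathcal{A})$ the nonzero mean $\lnorm{\delta}_2^2$ contributes the extra shift, yielding $\Phi\big(\Phi^{-1}(q)-\lnorm{\delta}_2/\sigma\big)$ directly after standardizing the threshold.

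The only case requiring a short extra step is $\text{Pr}(\mathbf{Y}\in\mathcal{B})$: standardizing the upper-tail event gives $1-\Phi\big(\Phi^{-1}(1-q)-\lnorm{\delta}_2/\sigma\big)$, which does not visibly match the claimed form until I invoke the symmetry of the standard normal. Applying the identities $1-\Phi(t)=\Phi(-t)$ and $\Phi^{-1}(1-q)=-\Phi^{-1}(q)$ rewrites this expression as $\Phi\big(\Phi^{-1}(q)+\lnorm{\delta}_2/\sigma\big)$, as required. I expect no genuine obstacle: the whole argument is the standardization of a one-dimensional Gaussian, and the lone subtlety is the careful bookkeeping of the mean shift $\lnorm{\delta}_2^2$ together with the reflection of $\Phi$ used in the two $\mathcal{B}$ computations.
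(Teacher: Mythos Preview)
Your proposal is correct: the reduction to the scalar $W(\mathbf{z})=\delta^{T}(\mathbf{z}-\mathbf{x})$, the identification of $W(\mathbf{X})\sim\mathcal{N}(0,\sigma^2\lnorm{\delta}_2^2)$ and $W(\mathbf{Y})\sim\mathcal{N}(\lnorm{\delta}_2^2,\sigma^2\lnorm{\delta}_2^2)$, and the subsequent standardizations (plus the symmetry identities $1-\Phi(t)=\Phi(-t)$, $\Phi^{-1}(1-q)=-\Phi^{-1}(q)$ for the $\mathcal{B}$-case) yield exactly the four claimed equalities. The paper itself does not supply a proof for this lemma; it simply cites \citet{cohen2019certified}, whose argument is precisely this one-dimensional Gaussian computation, so your approach coincides with the intended one.
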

\begin{proof}
Please refer to \citet{cohen2019certified}. 
\end{proof}

Based on Lemma~\ref{lemma_np_gaussian} and~\ref{lemma_compute_algebra}, we derive the following lemma: 
\begin{restatable}{lem}{theoremcompare}
\label{theorem_compare}
Suppose we have an arbitrary base classifier $f$, an example $\mathbf{x}$, a set of labels which are denoted as $S$, two probabilities $\underline{p_S}$ and $\overline{p}_{S}$ that satisfy $\underline{p_S} \leq p_{S} =\text{Pr}(f(\mathbf{X})\in S)\leq \overline{p}_{S}$, and regions $\mathcal{A}_S$ and $\mathcal{B}_S$ defined as follows: 
\begin{align}
&\mathcal{A}_S=\{\mathbf{z}: \delta^{T}(\mathbf{z}-\mathbf{x})\leq \sigma\lnorm{\delta}_2\Phi^{-1}(\underline{p_{S}})\} \\
&\mathcal{B}_S=\{\mathbf{z}: \delta^{T}(\mathbf{z}-\mathbf{x})\geq \sigma\lnorm{\delta}_2\Phi^{-1}(1-\overline{p}_{S})\} 
\end{align}
Then, we have: 
\begin{align}
\label{equation_x_s_a}
&\text{Pr}(\mathbf{X}\in\mathcal{A}_S) \leq \text{Pr}(f(\mathbf{X})\in S)\leq \text{Pr}(\mathbf{X}\in\mathcal{B}_S)  \\
\label{equation_y_s_b}
&\text{Pr}(\mathbf{Y}\in \mathcal{A}_S) \leq \text{Pr}(f(\mathbf{Y})\in S)\leq \text{Pr}(\mathbf{Y}\in \mathcal{B}_S)
\end{align}
\end{restatable}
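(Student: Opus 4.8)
The plan is to reduce the statement entirely to Lemma~\ref{lemma_np_gaussian} and Lemma~\ref{lemma_compute_algebra} by choosing the right test function and identifying $\mathcal{A}_S,\mathcal{B}_S$ with the half-spaces those lemmas are stated for. First I would observe that $\mathcal{A}_S$ is precisely the region $\mathcal{A}$ of Lemma~\ref{lemma_compute_algebra} with $q=\underline{p_S}$, and $\mathcal{B}_S$ is the region $\mathcal{B}$ with $q=\overline{p}_S$. Hence Lemma~\ref{lemma_compute_algebra} immediately gives $\text{Pr}(\mathbf{X}\in\mathcal{A}_S)=\underline{p_S}$, $\text{Pr}(\mathbf{X}\in\mathcal{B}_S)=\overline{p}_S$, $\text{Pr}(\mathbf{Y}\in\mathcal{A}_S)=\Phi(\Phi^{-1}(\underline{p_S})-\lnorm{\delta}_2/\sigma)$, and $\text{Pr}(\mathbf{Y}\in\mathcal{B}_S)=\Phi(\Phi^{-1}(\overline{p}_S)+\lnorm{\delta}_2/\sigma)$. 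Combining $\text{Pr}(\mathbf{X}\in\mathcal{A}_S)=\underline{p_S}\leq \text{Pr}(f(\mathbf{X})\in S)\leq\overline{p}_S=\text{Pr}(\mathbf{X}\in\mathcal{B}_S)$ already establishes~(\ref{equation_x_s_a}).

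Next I would introduce the (possibly random) $\{0,1\}$-valued function $M(\mathbf{z})=\mathbb{I}(f(\mathbf{z})\in S)$, so that $\text{Pr}(M(\mathbf{X})=1)=\text{Pr}(f(\mathbf{X})\in S)$ and $\text{Pr}(M(\mathbf{Y})=1)=\text{Pr}(f(\mathbf{Y})\in S)$. For the lower bound in~(\ref{equation_y_s_b}), rewrite $\mathcal{A}_S=\{\mathbf{z}:\delta^{T}\mathbf{z}\leq\beta\}$ with $\beta=\delta^{T}\mathbf{x}+\sigma\lnorm{\delta}_2\Phi^{-1}(\underline{p_S})$; since the hypothesis gives $\text{Pr}(M(\mathbf{X})=1)\geq\underline{p_S}=\text{Pr}(\mathbf{X}\in\mathcal{A}_S)$, part (1) of Lemma~\ref{lemma_np_gaussian} yields $\text{Pr}(M(\mathbf{Y})=1)\geq\text{Pr}(\mathbf{Y}\in\mathcal{A}_S)$. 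Symmetrically, $\mathcal{B}_S=\{\mathbf{z}:\delta^{T}\mathbf{z}\geq\beta'\}$ with $\beta'=\delta^{T}\mathbf{x}+\sigma\lnorm{\delta}_2\Phi^{-1}(1-\overline{p}_S)$, and $\text{Pr}(M(\mathbf{X})=1)\leq\overline{p}_S=\text{Pr}(\mathbf{X}\in\mathcal{B}_S)$, so part (2) of Lemma~\ref{lemma_np_gaussian} gives $\text{Pr}(M(\mathbf{Y})=1)\leq\text{Pr}(\mathbf{Y}\in\mathcal{B}_S)$. Together these give~(\ref{equation_y_s_b}).

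The only point requiring a little care is the boundary cases $\underline{p_S}\in\{0,1\}$ or $\overline{p}_S\in\{0,1\}$, where $\Phi^{-1}$ takes values $\pm\infty$ and $\mathcal{A}_S$ or $\mathcal{B}_S$ degenerates to $\emptyset$ or $\mathbb{R}^d$; in those cases the corresponding inequalities hold trivially (and the non-degenerate inequalities follow by the same argument applied with an interior value and a limiting argument). Beyond this bookkeeping, the proof is essentially a dictionary translation, so I do not anticipate a genuine obstacle — the main thing to get right is the matching of the definitions of $\mathcal{A}_S$ and $\mathcal{B}_S$ with the normal form $\{\delta^{T}\mathbf{z}\leq\beta\}$ demanded by Lemma~\ref{lemma_np_gaussian} and with the parametrization by $q$ demanded by Lemma~\ref{lemma_compute_algebra}.
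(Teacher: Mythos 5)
Your proposal is correct and follows essentially the same route as the paper: identify $\mathcal{A}_S$ and $\mathcal{B}_S$ with the half-spaces of Lemma~\ref{lemma_compute_algebra} to get~(\ref{equation_x_s_a}), then apply the Neyman--Pearson Lemma~\ref{lemma_np_gaussian} to $M(\mathbf{z})=\mathbb{I}(f(\mathbf{z})\in S)$ to get~(\ref{equation_y_s_b}). Your extra remark on the degenerate cases $\underline{p_S},\overline{p}_S\in\{0,1\}$ is a harmless bit of added care that the paper leaves implicit.
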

\begin{proof}
We know that $\text{Pr}(\mathbf{X}\in\mathcal{A}_S)=\underline{p_S}$ based on Lemma~\ref{lemma_compute_algebra}. Combined with the condition that $\underline{p_S} \leq \text{Pr}(f(\mathbf{X})\in S)$, we obtain the first inequality in~(\ref{equation_x_s_a}). Similarly, we can obtain the second inequality in~(\ref{equation_x_s_a}). We define $M(\mathbf{z})=\mathbb{I}(f(\mathbf{z})\in S)$. Based on the first inequality in~(\ref{equation_x_s_a}) and Lemma~\ref{lemma_np_gaussian}, we have the following:
\begin{align}
\text{Pr}(\mathbf{Y}\in \mathcal{A}_S)\leq\text{Pr}(M(\mathbf{Y})=1)=\text{Pr}(f(\mathbf{Y})\in S), 
\end{align}
which is the first inequality in~(\ref{equation_y_s_b}). The second inequality in~(\ref{equation_y_s_b}) can be obtained similarly.  
\end{proof}
Next, we {restate} Theorem~\ref{theorem_of_certified_radius} and show our proof. 

\certifyiedradiustheorem*
\begin{proof}
Roughly speaking, our idea is to make the probability that the base classifier $f$  predicts $l$ when taking $\mathbf{Y}$ as input larger than the smallest one among the probabilities that $f$ predicts for a set of arbitrary $k$ labels selected from all labels except $l$. 
For simplicity, we let $\Gamma=\{1,2,\cdots,{c}\}\setminus\{l\}$, i.e., all labels except $l$. We denote by $\Gamma_{k}$  a set of  $k$ labels in $\Gamma$. We aim to find a certified radius $R_l$ such that we have $\max_{\Gamma_{k} \subseteq \Gamma} \min_{i\in \Gamma_{k}}\text{Pr}(f(\mathbf{Y})=i) < \text{Pr}(f(\mathbf{Y})=l)$, which  guarantees $l \in g_{k}(\mathbf{x}+\delta)$. We first upper bound the minimal probability $\min_{i\in \Gamma_{k}}\text{Pr}(f(\mathbf{Y})=i)$ for a given $\Gamma_k$, and then we upper bound the maximum value of the minimal probability among all possible $\Gamma_k\subseteq\Gamma$. Finally, we obtain the certified radius $R_l$ via letting the upper bound of the maximum value smaller than $\text{Pr}(f(\mathbf{Y})=l)$.

\myparatight{Bounding $\min_{i\in \Gamma_{k}}\text{Pr}(f(\mathbf{Y})=i)$ for a given $\Gamma_k$} 
We use $S$ to denote a non-empty subset of $\Gamma_{k}$ and use $|S|$ to denote its size. We define $\overline{p}_{S} =\sum_{i\in S}\overline{p}_{i}$,  which is the sum of the upper bounds of the probabilities for the labels in $S$. Moreover, we define the following region associated with the set $S$:
\begin{align}
  \mathcal{B}_{S}=\{\mathbf{z}: \delta^{T}(\mathbf{z}-\mathbf{x})\geq \sigma\lnorm{\delta}_2\Phi^{-1}(1-\overline{p}_{S})\}  
\end{align}
We have $\text{Pr}(f(\mathbf{Y})\in S)\leq \text{Pr}(\mathbf{Y}\in\mathcal{B}_{S})$  by applying Lemma~\ref{theorem_compare} to the set $S$. In addition, we have $\sum_{i\in S}\text{Pr}(f(\mathbf{Y})=i) = \text{Pr}(f(\mathbf{Y})\in S)$. Therefore, we have: 
\begin{align}
\label{theorem_certified_radius_group}
    \sum_{i\in S}\text{Pr}(f(\mathbf{Y})=i) = \text{Pr}(f(\mathbf{Y})\in S)\leq \text{Pr}(\mathbf{Y}\in\mathcal{B}_{S})
\end{align}
Moreover, we have: 
\begin{align}
    \min_{i\in \Gamma_{k}}\text{Pr}(f(\mathbf{Y})=i)&\leq \min_{i\in S}\text{Pr}(f(\mathbf{Y})=i)\\
    									&\leq \frac{\sum_{i\in S}\text{Pr}(f(\mathbf{Y})=i)}{|S|} \\
									\label{minimal_value_condition}
									&\leq \frac{\text{Pr}(\mathbf{Y}\in\mathcal{B}_{S})}{|S|}, 
\end{align}
where we have the first inequality  because  $S$ is a subset of $\Gamma_{k}$ and we have the second inequality because the smallest value in a set is no larger than the average value of the set. 
 Equation~\ref{minimal_value_condition}  holds for any $S\subseteq \Gamma_{k}$. Therefore, by taking all possible sets $S$ into consideration, we have the following: 
\begin{align}
     \min_{i\in \Gamma_{k}}\text{Pr}(f(\mathbf{Y})=i)\leq & \min_{S\subseteq \Gamma_{k} }\frac{\text{Pr}(\mathbf{Y}\in\mathcal{B}_{S})}{|S|} \\
     \label{equationbound1}
     = & \min_{t=1}^k \min_{S\subseteq \Gamma_{k}, |S|=t}\frac{\text{Pr}(\mathbf{Y}\in\mathcal{B}_{S})}{t} \\
     \label{equationbound2}
     = &\min_{t=1}^k \frac{\text{Pr}(\mathbf{Y}\in\mathcal{B}_{S_t})}{t},
\end{align}
where $S_{t}$  is the set of $t$ labels in $\Gamma_k$ whose probability upper bounds are the smallest, where ties are broken uniformly at random. We have Equation~\ref{equationbound2} from Equation~\ref{equationbound1} because $\text{Pr}(\mathbf{Y}\in \mathcal{B}_S)$ decreases as  $\overline{p}_{S}$ decreases. 

\myparatight{Bounding $\max_{\Gamma_{k} \subseteq \Gamma} \min_{i\in \Gamma_{k}}\text{Pr}(f(\mathbf{Y})=i)$} Since $\text{Pr}(\mathbf{Y}\in\mathcal{B}_{S_t})$ increases as $\overline{p}_{S_t}$ increases, Equation~\ref{equationbound2} reaches its maximum value when $\Gamma_{k}=\{b_1,b_2,\cdots,b_k\}$, i.e., when $\Gamma_{k}$ is the set of $k$ labels in $\Gamma$ with the largest probability upper bounds. Formally, we have:
\begin{align}
\max_{\Gamma_{k} \subseteq \Gamma} \min_{i\in \Gamma_{k}}\text{Pr}(f(\mathbf{Y})=i) \leq \min_{t=1}^k \frac{\text{Pr}(\mathbf{Y}\in\mathcal{B}_{S_t})}{t},
\end{align}
where $S_t=\{b_1,b_2,\cdots,b_t\}$. 

\myparatight{Obtaining $R_l$} According to Lemma~\ref{theorem_compare}, we have the following for $S=\{l\}$: 
\begin{align}
    \text{Pr}(f(\mathbf{Y})=l)\geq \text{Pr}(\mathbf{Y}\in \mathcal{A}_{\{l\}})
\end{align}
Recall that our goal is to make $ \text{Pr}(f(\mathbf{Y})=l) > \max_{\Gamma_{k} \subseteq \Gamma} \min_{i\in \Gamma_{k}}\text{Pr}(f(\mathbf{Y})=i)$. It suffices to let: 
\begin{align}
   \text{Pr}(\mathbf{Y}\in\mathcal{A}_{\{l\}})>\min_{t=1}^k \frac{\text{Pr}(\mathbf{Y}\in\mathcal{B}_{S_t})}{t}. 
\end{align}
According to Lemma~\ref{lemma_compute_algebra}, we have $\text{Pr}(\mathbf{Y}\in\mathcal{A}_{\{l\}})=\Phi(\Phi^{-1}(\underline{p_{l}})-\frac{||\delta||_2}{\sigma}))$ and $\text{Pr}(\mathbf{Y}\in\mathcal{B}_{S_t})=\Phi(\Phi^{-1}(\overline{p}_{S_{t}})+\frac{||\delta||_2}{\sigma}))$. Therefore, we have the following constraint on $\delta$:
\begin{align}
\Phi(\Phi^{-1}(\underline{p_{l}})-\frac{||\delta||_2}{\sigma})) - \min_{t=1}^k \frac{\Phi(\Phi^{-1}(\overline{p}_{S_{t}})+\frac{||\delta||_2}{\sigma}))}{t}>0.
\end{align}
Since the left-hand side of the above inequality 1) decreases as $||\delta||_2$ increases, 2) is larger than 0 when $||\delta||_2\rightarrow -\infty$, and 3) is smaller than 0 when $||\delta||_2\rightarrow \infty$, we have the constraint $||\delta||_2 < R_l$, where $R_l$ is the unique solution to the following equation:
\begin{align}
\label{solutionRl}
\Phi(\Phi^{-1}(\underline{p_{l}})-\frac{R_l}{\sigma}))-\min_{t=1}^k \frac{\Phi(\Phi^{-1}(\overline{p}_{S_{t}})+\frac{R_l}{\sigma})) }{t}=0.
\end{align}
\end{proof}

\section{Proof of Theorem~\ref{theorem_of_tight_of_the_bound}}
\label{proof_of_tight_of_the_bound}
Following the terminology we used in proving Theorem~\ref{theorem_of_certified_radius}, we define a region $\mathcal{A}_{\{l\}}$ as follows:
\begin{align}
  \mathcal{A}_{\{l\}}=\{\mathbf{z}: \delta^{T}(\mathbf{z}-\mathbf{x})\leq \sigma\lnorm{\delta}_2\Phi^{-1}(\underline{p_{l}})\}. 
\end{align}
According to Lemma~\ref{lemma_compute_algebra}, we have $\text{Pr}(\mathbf{X}\in \mathcal{A}_{\{l\}})=\underline{p_{l}}$.
We first show the following lemma, which is the key to prove our Theorem~\ref{theorem_of_tight_of_the_bound}.
\begin{restatable}{lem}{lemmaofconditionoftightness}
\label{lemma_of_condition_of_tightness}
Assuming we have $\underline{p_l}+\sum_{j=1}^{k}\overline{p}_{b_j}\leq 1$. For any perturbation $\lnorm{\delta}_2 > R_l$, there exists $k$ disjoint regions $\mathcal{C}_{b_j}\subseteq \mathbb{R}^d\setminus\mathcal{A}_{\{l\}}$, $j\in\{1, 2, \cdots, k\}$ that satisfy the following: 
\begin{align}
\label{condition1}
   & \text{Pr}(\mathbf{X}\in \mathcal{C}_{b_j})=\overline{p}_{b_j},\ \forall j\in\{1, 2, \cdots, k\} \\
   \label{condition2}
    & \text{Pr}(\mathbf{Y}\in \mathcal{C}_{b_j})\geq \min_{t=1}^k\frac{\text{Pr}(\mathbf{Y}\in\mathcal{B}_{S_t})}{t}, \ \forall j\in\{1, 2, \cdots, k\}, 
\end{align}
where the random variables $\mathbf{X}$ and $\mathbf{Y}$ are defined in Equation~\ref{definition_of_x} and~\ref{definition_of_y}, respectively; and $\{b_1,b_2,\cdots,b_k\}$ and $S_t$ are defined in Theorem~\ref{theorem_of_certified_radius}.
\end{restatable}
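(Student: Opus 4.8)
Roughly speaking, I would reduce the whole construction to a one-dimensional problem in the direction of $\delta$ and then build the $\mathcal{C}_{b_j}$ one at a time with the intermediate value theorem. Put $a=\lnorm{\delta}_2/\sigma$ and $W(\mathbf{z})=\delta^{T}(\mathbf{z}-\mathbf{x})/(\sigma\lnorm{\delta}_2)$, so that $W(\mathbf{X})\sim\mathcal{N}(0,1)$, $W(\mathbf{Y})\sim\mathcal{N}(a,1)$, and every set in the statement is the preimage under $W$ of a half-line: $\mathcal{A}_{\{l\}}=\{W\le\Phi^{-1}(\underline{p_l})\}$ and $\mathcal{B}_{S_t}=\{W\ge\Phi^{-1}(1-\overline{p}_{S_t})\}$. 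Let $c:=\min_{t=1}^{k}\text{Pr}(\mathbf{Y}\in\mathcal{B}_{S_t})/t$ be the right-hand side of (\ref{condition2}) and let $t^{\star}$ attain this minimum, so that $\text{Pr}(\mathbf{Y}\in\mathcal{B}_{S_t})\ge t\,c$ for all $t$ with equality at $t^{\star}$. (Via Lemma~\ref{lemma_compute_algebra} and $\lnorm{\delta}_2>R_l$ one also gets $\text{Pr}(\mathbf{Y}\in\mathcal{A}_{\{l\}})<c$; this is not needed for the present lemma but is the reason the hypothesis $\lnorm{\delta}_2>R_l$ appears, and it is exactly what is consumed later in the proof of Theorem~\ref{theorem_of_tight_of_the_bound}. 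Also, if some $\overline{p}_{b_j}=0$ then $\text{Pr}(\mathbf{Y}\in\mathcal{B}_{S_1})=0$, hence $c=0$ and (\ref{condition2}) is vacuous, so we may assume all $\overline{p}_{b_j}>0$.)

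Two structural facts drive the argument. First, the hypothesis $\underline{p_l}+\sum_{j=1}^{k}\overline{p}_{b_j}\le1$ is precisely $\underline{p_l}\le1-\overline{p}_{S_k}$, so $\mathcal{B}_{S_k}\subseteq\mathbb{R}^d\setminus\mathcal{A}_{\{l\}}$, while $\text{Pr}(\mathbf{X}\in\mathcal{B}_{S_k})=\overline{p}_{S_k}=\sum_{j}\overline{p}_{b_j}$; thus $\mathcal{B}_{S_k}$ carries exactly the total $\mathbf{X}$-mass the $\mathcal{C}_{b_j}$ must share, and it suffices to partition $\mathcal{B}_{S_k}$. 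Second (likelihood-ratio monotonicity), the ratio $d\mathbf{Y}/d\mathbf{X}$ is increasing in $W$; equivalently $p\mapsto\Phi(\Phi^{-1}(p)+a)$ is concave and vanishes at $0$. Consequently, inside any $W$-interval $J$, among all subsets of a fixed $\mathbf{X}$-mass $m$ the $\mathbf{Y}$-mass is minimized by the lowest sub-interval of $J$ and maximized by the highest, and (by absolute continuity of the Gaussian measure, so no atoms) the $\mathbf{Y}$-mass of a sliding width-$m$ sub-interval sweeps continuously over the whole range in between; so one can carve from $J$ a piece of any prescribed $\mathbf{X}$-mass and any prescribed $\mathbf{Y}$-mass in that range.

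The regions are then built by induction on the number of pieces, splitting the recursion at $t^{\star}$. In the first block I partition $\mathcal{B}_{S_{t^{\star}}}$ into $\mathcal{C}_{b_1},\dots,\mathcal{C}_{b_{t^{\star}}}$ with $\text{Pr}(\mathbf{X}\in\mathcal{C}_{b_j})=\overline{p}_{b_j}$ and $\text{Pr}(\mathbf{Y}\in\mathcal{C}_{b_j})=c$ for every $j$. The total masses are consistent, since $\sum_{j\le t^{\star}}\overline{p}_{b_j}=\overline{p}_{S_{t^{\star}}}=\text{Pr}(\mathbf{X}\in\mathcal{B}_{S_{t^{\star}}})$ and $t^{\star}c=\text{Pr}(\mathbf{Y}\in\mathcal{B}_{S_{t^{\star}}})$, and at each carving step the target $\mathbf{Y}$-mass $c$ lies in the admissible range: the maximum achievable $\mathbf{Y}$-mass for a piece of the required $\mathbf{X}$-mass is $\ge\text{Pr}(\mathbf{Y}\in\mathcal{B}_{S_1})\ge c$ (the highest such sub-interval contains $\mathcal{B}_{S_1}$), while the minimum is $\le c$ (its complementary top portion contains $\mathcal{B}_{S_{t^{\star}-1}}$, using the ordering $\overline{p}_{b_1}\le\dots\le\overline{p}_{b_{t^{\star}}}$, so that portion carries $\mathbf{Y}$-mass $\ge(t^{\star}-1)c$ and at most $c$ is left below it); the concavity/vanishing-at-$0$ of $p\mapsto\Phi(\Phi^{-1}(p)+a)$ is what turns these containments into the needed inequalities. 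After this block the remaining region is the band $\mathcal{B}_{S_k}\setminus\mathcal{B}_{S_{t^{\star}}}$, and for $t>t^{\star}$ the set $\mathcal{B}_{S_t}\setminus\mathcal{B}_{S_{t^{\star}}}$ has $\mathbf{Y}$-mass $\text{Pr}(\mathbf{Y}\in\mathcal{B}_{S_t})-t^{\star}c\ge(t-t^{\star})c$, so the induction hypothesis applies verbatim to this band, the labels $b_{t^{\star}+1},\dots,b_k$, and the same critical value $c$. The resulting $\mathcal{C}_{b_1},\dots,\mathcal{C}_{b_k}$ are disjoint, tile $\mathcal{B}_{S_k}\subseteq\mathbb{R}^d\setminus\mathcal{A}_{\{l\}}$, satisfy (\ref{condition1}) by construction and (\ref{condition2}) since each has $\mathbf{Y}$-mass $\ge c$.

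The part I expect to be the real work is making the ``carve one piece at a time'' step fully rigorous: once the first piece is removed with exact $\mathbf{Y}$-mass $c$ it is a sub-interval of $\mathcal{B}_{S_{t^{\star}}}$ but in general a \emph{middle} one, so the leftover is a union of two (and later several) $W$-intervals rather than a single half-line. I would need to track the shape of the leftover region and re-verify the sandwich ($c$ between the minimum and maximum achievable $\mathbf{Y}$-mass for the next prescribed $\mathbf{X}$-mass) at each step, again leaning on $\text{Pr}(\mathbf{Y}\in\mathcal{B}_{S_t})\ge tc$, on the sortedness of the $\overline{p}_{b_j}$, and on concavity; choosing a canonical carving order (e.g. always taking the piece of largest $\mathbf{X}$-mass first) to keep the leftover as simple as possible is where the bookkeeping has to be handled carefully. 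Everything else — the one-dimensional reduction, likelihood-ratio monotonicity, the continuity needed for the intermediate value theorem, and the mass accounting — is routine.
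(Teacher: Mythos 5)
Your proposal follows essentially the same route as the paper's proof: the same one-dimensional reduction along $\delta$, the same split of $\mathcal{B}_{S_k}$ at the minimizing index $t^{\star}$ (the paper's $\tau$), the same exact-mass-$c$ partition of $\mathcal{B}_{S_{t^{\star}}}$ versus the mass-$\geq c$ partition of the outer band $\mathcal{B}_{S_k}\setminus\mathcal{B}_{S_{t^{\star}}}$, and the same carve-one-piece-at-a-time induction driven by the intermediate value theorem and likelihood-ratio monotonicity in the direction of $\delta$. The bookkeeping you flag as the remaining work --- tracking the leftover as a union of $W$-bands and re-verifying the sandwich on the achievable $\mathbf{Y}$-mass at each step --- is precisely what the paper's Lemmas~\ref{lemma_q1_q2}--\ref{lemma_of_compare_disjoint} and its two-step induction carry out.
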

\begin{proof}
Our proof is based on \emph{mathematical induction} and the \emph{intermediate value theorem}. For convenience, we defer the proof to  Appendix~\ref{proof_of_lemma_condition}. 
\end{proof}

Next, we {restate}  Theorem~\ref{theorem_of_tight_of_the_bound} and show our proof. 
\tightofthebound*
\begin{proof}

Our idea is to construct a base classifier such that  $l$ is not among the  top-$k$ labels predicted by the smoothed classifier for any perturbed example $\mathbf{x} + \delta$ when $||\delta||_2>R_l$. 
First, according to Lemma~\ref{lemma_of_condition_of_tightness}, we know there exists $k$ disjoint regions $\mathcal{C}_{b_j}\subseteq \mathbb{R}^d\setminus\mathcal{A}_{\{l\}}$, $j\in\{1, 2, \cdots, k\}$ that satisfy Equation~\ref{condition1} and~\ref{condition2}. Moreover, we divide the remaining region  $\mathbb{R}^d\setminus(\mathcal{A}_{\{l\}}\cup_{j=1}^k \mathcal{C}_{b_j})$ into $c-k-1$ regions, which we denote as $\mathcal{C}_{b_{k+1}}, \mathcal{C}_{b_{k+2}}, \cdots, \mathcal{C}_{b_{c-1}}$ and satisfy $\text{Pr}(\mathbf{X}\in \mathcal{C}_{b_j})\leq \overline{p}_{b_j}$ for $j\in\{k+1, k+2, \cdots, c-1\}$. Note that $b_1, b_2, \cdots, b_{c-1}$ is some permutation of $\{1,2,\cdots,c\}\setminus \{l\}$.  We can divide the remaining region into such $c-k-1$ regions because $\underline{p_{l}}+\sum_{i=1,\cdots, l-1, l+1, \cdots, c}\overline{p}_{i}\geq 1$. Then, based on these regions, we construct the following base classifier:
\begin{align}
    f^{*}(\mathbf{x})=
    \begin{cases}
     l, &\text{ if } \mathbf{x}\in \mathcal{A}_{\{l\}} \\
     b_j,  &\text{ if }  \mathbf{x}\in\mathcal{C}_{b_j}, \forall j\in \{1, 2,\cdots,c-1\} \\
    \end{cases}
\end{align}
Based on the definition of $f^*$, we have the following: 
\begin{align}
\text{Pr}(f^{*}(\mathbf{X})=l)&=\text{Pr}(\mathbf{X}\in \mathcal{A}_{\{l\}}) =  \underline{p_{l}} \\
\text{Pr}(f^{*}(\mathbf{X})=b_j)&=\text{Pr}(\mathbf{X}\in \mathcal{C}_{b_j}) =  \overline{p}_{b_j},\ j=1,2,\cdots,k \\
\text{Pr}(f^{*}(\mathbf{X})=b_j)&=\text{Pr}(\mathbf{X}\in \mathcal{C}_{b_j})\leq  \overline{p}_{b_j}, \  j=k+1,k+2,\cdots, c-1
\end{align}
Therefore, $f^*$ satisfies the conditions in~(\ref{consistent_condition}). Next, we show that $l$ is not among the top-$k$ labels predicted by the smoothed classifier for any perturbed example $\mathbf{x} + \delta$ when $||\delta||_2>R_l$. Specifically, we have:
\begin{align}
&\text{Pr}(f^{*}(\mathbf{Y})=l|\lnorm{\delta}_2 >R_l) \\
=&\text{Pr}(\mathbf{Y}\in\mathcal{A}_{\{l\}}|\lnorm{\delta}_2 >R_l) \hskip12mm (\text{Definition of } f^*)\\
<&\text{Pr}(\mathbf{Y}\in \mathcal{A}_{\{l\}}|\lnorm{\delta}_2 = R_l) \hskip12mm (\text{Pr}(\mathbf{Y}\in \mathcal{A}_{\{l\}})\text{ increases as } \lnorm{\delta}_2\text{ decreases})\\
=&\min_{t=1}^k \frac{\text{Pr}(\mathbf{Y}\in\mathcal{B}_{S_t}|\lnorm{\delta}_2 = R_l)}{t} \hskip6mm (\text{Definition of } R_l) \\
<&\min_{t=1}^k \frac{\text{Pr}(\mathbf{Y}\in\mathcal{B}_{S_t}|\lnorm{\delta}_2 >R_l)}{t} \hskip6mm (\text{Pr}(\mathbf{Y}\in\mathcal{B}_{S_t})  \text{ increases  as } \lnorm{\delta}_2 \text{ increases})\\
\leq&\text{Pr}(\mathbf{Y} \in \mathcal{C}_{b_j}|\lnorm{\delta}_2 >R_l)\hskip14mm (Lemma~\ref{lemma_of_condition_of_tightness})\\  
=&\text{Pr}(f^{*}(\mathbf{Y})=b_j|\lnorm{\delta}_2 >R_l) \hskip9mm (\text{Definition of } f^*),  
\end{align}
where $j=1,2,\cdots,k$. Since we have found $k$ labels whose probabilities are larger than the probability of the label $l$, we have $l \notin g_{k}(\mathbf{x}+\delta)$ when $\lnorm{\delta}_2 > R_l$. 
\end{proof}

\subsection{Proof of Lemma~\ref{lemma_of_condition_of_tightness}}
\label{proof_of_lemma_condition}
We first define some key notations and lemmas that will be used in our proof.
\begin{definition}[$\mathcal{C}(q_1,q_2)$, $\mathcal{C}^{\prime}(q_1,q_2)$, $r_{x}(q_1,q_2)$, $r_{y}(q_1,q_2)$]
\label{definition_for_the_region}
Given two values  $q_1$ and $q_2$ that satisfy $0\leq q_1 < q_2 \leq 1$, we define the following region:
\begin{align}
\mathcal{C}(q_1,q_2)=\{\mathbf{z}:\sigma\lnorm{\delta}_2\Phi^{-1}(1-q_2) < \delta^{T}(\mathbf{z} - \mathbf{x}) 
\leq \sigma\lnorm{\delta}_2\Phi^{-1}(1-q_1)\} 
\end{align}
According to Lemma~\ref{lemma_compute_algebra}, we have: 
\begin{align}
\label{equation_0_definition_of_the_region}
\text{Pr}(\mathbf{X}\in \mathcal{C}(q_1,q_2))=q_2-q_1, 
\end{align}
where the Gaussian random variable $\mathbf{X}$ is defined in Equation~\ref{definition_of_x}.
Moreover, assuming we have pairs of $(q_1^{i},q_2^{i})$, $i=1,2,3,\cdots$, where $q_1^i , q_2^i \in [q_1,q_2]$, $\forall i$. We define the following region: 
\begin{align}
    \mathcal{C}^{\prime}(q_1,q_2)=\mathcal{C}(q_1,q_2)\setminus(\cup_{i}\mathcal{C}(q_1^i,q_2^i)).
\end{align}
$\mathcal{C}^{\prime}(q_1,q_2)$ is the remaining region of $\mathcal{C}(q_1,q_2)$ excluding $\mathcal{C}(q_1^1,q_2^1), \mathcal{C}(q_1^2,q_2^2), \cdots$.
Given two values $q_1^{\lambda}$ and $q_2^{\lambda}$ that satisfy $q_1 \leq q_1^{\lambda}\leq q_2^{\lambda} \leq q_2$, we also define the following two functions:
\begin{align}
    &r_{x}(q_1^{\lambda},q_2^{\lambda})=\text{Pr}(\mathbf{X}\in \mathcal{C}^{\prime}(q_1,q_2)\cap \mathcal{C}(q_1^{\lambda},q_2^{\lambda})) \\
    &r_{y}(q_1^{\lambda},q_2^{\lambda})=\text{Pr}(\mathbf{Y}\in \mathcal{C}^{\prime}(q_1,q_2)\cap\mathcal{C}(q_1^{\lambda},q_2^{\lambda})),
\end{align}
where the random variables $\mathbf{X}$ and $\mathbf{Y}$ are defined in Equation~\ref{definition_of_x} and~\ref{definition_of_y}, respectively.
\end{definition}

Next, we show a key property of our defined functions $r_x(q_1^{\lambda},q_2^{\lambda})$ and $r_y(q_1^{\lambda},q_2^{\lambda})$. 
\begin{restatable}{lem}{lemmaq1q2}
\label{lemma_q1_q2}
If $r_x(q_1^{\kappa},q_2^{\kappa})\leq r_x(q_1^{\lambda},q_2^{\lambda})$ and $q_2^{\kappa}\geq q_2^{\lambda}$ (or $q_1^{\kappa}\geq q_1^{\lambda})$, then we have $r_y(q_1^{\kappa},q_2^{\kappa}) \leq r_y(q_1^{\lambda},q_2^{\lambda})$. 
\end{restatable}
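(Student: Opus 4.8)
\textbf{Setup and intuition.}
The plan is to work directly with the defining integrals of $r_x$ and $r_y$ over the slab-shaped regions $\mathcal{C}(q_1^{\lambda},q_2^{\lambda})\cap\mathcal{C}^{\prime}(q_1,q_2)$. Recall that $\mathcal{C}(a,b)$ is a ``slab'' orthogonal to the direction $\delta$, delimited by the hyperplanes $\delta^{T}(\mathbf{z}-\mathbf{x})=\sigma\lnorm{\delta}_2\Phi^{-1}(1-b)$ and $\delta^{T}(\mathbf{z}-\mathbf{x})=\sigma\lnorm{\delta}_2\Phi^{-1}(1-a)$. Since $\mathbf{X}\sim\mathcal N(\mathbf{x},\sigma^2 I)$ and $\mathbf{Y}\sim\mathcal N(\mathbf{x}+\delta,\sigma^2 I)$ differ only by a mean shift in the $\delta$-direction, everything reduces to a one-dimensional problem after projecting onto the coordinate $u=\delta^{T}(\mathbf{z}-\mathbf{x})/\lnorm{\delta}_2$: under $\mathbf{X}$ this coordinate is $\mathcal N(0,\sigma^2)$, and under $\mathbf{Y}$ it is $\mathcal N(\lnorm{\delta}_2,\sigma^2)$. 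The key quantitative fact I would isolate is the \emph{likelihood ratio} $\frac{d\mathbf{Y}}{d\mathbf{X}}(\mathbf{z}) = \exp\!\big(\tfrac{1}{\sigma^2}\delta^T(\mathbf z - \mathbf x) - \tfrac{\lnorm{\delta}_2^2}{2\sigma^2}\big)$, which is a strictly increasing function of $u$ alone; so moving to larger values of $u$ (equivalently, smaller values of the probability-level arguments) only increases the $\mathbf{Y}$-to-$\mathbf{X}$ density ratio. This is exactly the monotone-likelihood-ratio structure underlying the Neyman--Pearson lemma used earlier.

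\textbf{Main steps.}
First, I would rewrite $r_y(q_1^{\lambda},q_2^{\lambda}) = \int_{\mathcal{C}^{\prime}(q_1,q_2)\cap\mathcal{C}(q_1^{\lambda},q_2^{\lambda})} \frac{d\mathbf{Y}}{d\mathbf{X}}(\mathbf{z})\, d\mathbf{X}(\mathbf{z})$, so $r_y$ is an integral of the (increasing in $u$) likelihood ratio against the $\mathbf{X}$-measure restricted to the region in question. Second, observe that the region for the $\kappa$-pair and the region for the $\lambda$-pair both sit inside the fixed ``frame'' $\mathcal{C}^{\prime}(q_1,q_2)$, and that within this frame the $u$-coordinate ranges are controlled by $q_2^{\kappa}\ge q_2^{\lambda}$ (or $q_1^{\kappa}\ge q_1^{\lambda}$): these inequalities say that the $\kappa$-region is pushed toward \emph{larger} $u$ relative to the $\lambda$-region, in the sense that the essential supremum / infimum of $u$ over the $\kappa$-region dominates that over the $\lambda$-region. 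Concretely I would argue there is a threshold $u_0$ such that, up to $\mathbf X$-null sets, the part of the $\kappa$-region with $u < u_0$ is contained in the $\lambda$-region and the part of the $\lambda$-region with $u > u_0$ is contained in the $\kappa$-region — i.e., the symmetric difference splits cleanly below and above $u_0$. Third, using $r_x(q_1^{\kappa},q_2^{\kappa})\le r_x(q_1^{\lambda},q_2^{\lambda})$ (the $\mathbf X$-masses), the ``mass lost'' by going from the $\lambda$-region to the $\kappa$-region (the low-$u$ piece, where the likelihood ratio is $\le$ its value at $u_0$) has $\mathbf X$-mass at least as large as the ``mass gained'' (the high-$u$ piece, where the ratio is $\ge$ its value at $u_0$). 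Multiplying each piece by the likelihood ratio and using monotonicity — ratio $\le$ ratio$(u_0)$ on the lost piece, $\ge$ ratio$(u_0)$ on the gained piece — gives
\begin{align}
r_y(\lambda) - r_y(\kappa) \ \ge\ \mathrm{ratio}(u_0)\big(r_x(\lambda)-r_x(\kappa)\big)\ \ge\ 0,
\end{align}
which is the claim. (Here I abbreviate $r_y(\lambda)=r_y(q_1^\lambda,q_2^\lambda)$ etc.)

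\textbf{Main obstacle.}
The delicate point is Step~2: the regions $\mathcal{C}^{\prime}(q_1,q_2)$ are \emph{not} slabs — they are slabs with countably many sub-slabs $\mathcal{C}(q_1^i,q_2^i)$ removed — so it is not literally true that the $\kappa$- and $\lambda$-regions are nested or that their symmetric difference splits at a single $u$-threshold as subsets of $\mathbb{R}^d$. What I expect to actually hold, and what I would need to prove carefully, is that this clean splitting holds \emph{in the $u$-coordinate modulo the excised set}: because the excised sub-slabs are themselves defined by $u$-thresholds, the ``frame'' $\mathcal{C}^{\prime}(q_1,q_2)$ intersected with any $\mathcal C(a,b)$ is determined by which $u$-values survive the excision, and this surviving set is shared between the $\kappa$ and $\lambda$ cases on their common $u$-range. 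So the comparison of $\mathbf X$-masses and of likelihood-ratio-weighted masses can still be reduced to comparing one-dimensional integrals of the surviving density over nested $u$-intervals. Making this reduction rigorous — handling the countable family of removed slabs and the tie-breaking — is where the real work lies; everything after that is the standard MLR rearrangement inequality. I would also need to dispatch the trivial boundary cases (e.g. when $r_x(q_1^{\kappa},q_2^{\kappa})=0$, or when the regions are empty) separately, but these are immediate.
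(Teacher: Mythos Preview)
Your approach---writing $r_y$ as the $\mathbf X$-integral of the likelihood ratio $\tfrac{d\mathbf Y}{d\mathbf X}(\mathbf z)=\exp\!\bigl(\sigma^{-2}\delta^T(\mathbf z-\mathbf x)-\tfrac{\lnorm{\delta}_2^2}{2\sigma^2}\bigr)$ and exploiting its monotonicity in $u$---is exactly what the paper does. But you have the direction reversed. In Definition~\ref{definition_for_the_region} the slab $\mathcal{C}(a,b)$ is $\{\sigma\lnorm{\delta}_2\Phi^{-1}(1-b)<\delta^T(\mathbf z-\mathbf x)\le\sigma\lnorm{\delta}_2\Phi^{-1}(1-a)\}$, and $\Phi^{-1}(1-\cdot)$ is \emph{decreasing}, so larger $q$-arguments correspond to \emph{smaller} $u$. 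Hence $q_2^{\kappa}\ge q_2^{\lambda}$ (or $q_1^{\kappa}\ge q_1^{\lambda}$) pushes the $\kappa$-region toward \emph{smaller} $u$, where the likelihood ratio is \emph{smaller}. With your stated orientation the ratio bounds on the two pieces of the symmetric difference point the wrong way, and the chain you display would yield $r_y(\lambda)-r_y(\kappa)\le\mathrm{ratio}(u_0)\bigl(r_x(\lambda)-r_x(\kappa)\bigr)$ rather than $\ge$. Once you correct the orientation---$\lambda\setminus\kappa$ at high $u$, $\kappa\setminus\lambda$ at low $u$---your argument goes through and coincides with the paper's Scenario~I computation (their Equations~\ref{equation_1_lemma_q1_q2}--\ref{lemma_q1_q2_equation_2}).

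Your ``main obstacle'' is not actually an obstacle. Both the $\kappa$- and $\lambda$-regions are intersected with the \emph{same} frame $\mathcal{C}'(q_1,q_2)$, so the excised sub-slabs are common to both; writing $E$ for the surviving $u$-set, the symmetric difference is $(I_\kappa\cap E)\,\triangle\,(I_\lambda\cap E)=(I_\kappa\,\triangle\, I_\lambda)\cap E$, which inherits the clean low-$u$/high-$u$ split from $I_\kappa\,\triangle\, I_\lambda$. The paper avoids this discussion entirely by doing an explicit three-way case split on the relative order of $q_1^{\kappa},q_2^{\kappa},q_1^{\lambda},q_2^{\lambda}$ (disjoint, overlapping, nested $q$-intervals): the disjoint case is handled via a single separating constant $u$ with $r_y(\kappa)\le u\cdot r_x(\kappa)\le u\cdot r_x(\lambda)\le r_y(\lambda)$, the overlapping case is reduced to the disjoint one by cancelling the common part, and the nested case forces $r_x(\kappa)=r_x(\lambda)$ and hence equality. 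Your unified ``threshold $u_0$'' phrasing is a clean way to merge the first two cases, provided the direction is fixed.
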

\begin{proof}
We consider three scenarios. 

\myparatight{Scenario I}  $q_2^{\kappa}\geq q_1^{\kappa}\geq q_2^{\lambda}\geq q_1^{\lambda}$. We denote $h_x$ and $h_y$ as the probability densities for the random variables $\mathbf{X}$ and $\mathbf{Y}$, respectively. Then, we have  
$h_{x}(\mathbf{z})=(\frac{1}{\sqrt{2\pi}\sigma})^d\exp(-\frac{\sum_{i=1}^{d}(z_i - x_i)^2}{2\sigma^2})$ and $h_{y}(\mathbf{z})=(\frac{1}{\sqrt{2\pi}\sigma})^d\exp(-\frac{\sum_{i=1}^{d}(z_i - x_i - \delta_i)^2}{2\sigma^2})$. Therefore, the ratio of the probability density of $\mathbf{Y}$ and the probability density of $\mathbf{X}$ at a given point $\mathbf{z}$ is as follows: 
\begin{align}
\label{equation_0_lemma_q1_q2}
\frac{h_y(\mathbf{z})}{h_x(\mathbf{z})}=\exp(\frac{\delta^{T}(\mathbf{z}-\mathbf{x})}{\sigma^2}-\frac{\lnorm{\delta}_2}{2 \sigma^2})
\end{align}
Next, we compare the ratio for the points in different regions and have the following: 
\begin{align}
\label{equation_1_lemma_q1_q2}
& \frac{h_y(\mathbf{z}|\mathbf{z}\in \mathcal{C}^{\prime}(q_1^{\kappa},q_2^{\kappa}))}{h_x(\mathbf{z}|\mathbf{z}\in \mathcal{C}^{\prime}(q_1^{\kappa},q_2^{\kappa}))} \\
\label{equation_2_lemma_q1_q2}
\leq & \exp(\frac{\sigma\lnorm{\delta}_2\Phi^{-1}(1-q_1^{\kappa})}{\sigma^2}-\frac{\lnorm{\delta}_2}{2 \sigma^2}) \\
\label{equation_3_lemma_q1_q2}
\leq &
\exp(\frac{\sigma\lnorm{\delta}_2\Phi^{-1}(1-q_2^{\lambda})}{\sigma^2}-\frac{\lnorm{\delta}_2}{2 \sigma^2}) \\
\label{equation_4_lemma_q1_q2}
\leq & \frac{h_y(\mathbf{z}|\mathbf{z}\in \mathcal{C}^{\prime}(q_1^{\lambda},q_2^{\lambda}))}{h_x(\mathbf{z}|\mathbf{z}\in \mathcal{C}^{\prime}(q_1^{\prime},q_2^{\lambda}))}
\end{align}
The Equation~\ref{equation_2_lemma_q1_q2} from~\ref{equation_1_lemma_q1_q2} is based on Equation~\ref{equation_0_lemma_q1_q2} and the fact that $\delta^T(\mathbf{z}-\mathbf{x})\leq \sigma\lnorm{\delta}_2\Phi^{-1}(1-q_1^{\kappa})$ for any point $\mathbf{z}$ in the region $\mathcal{C}^{\prime}(q_1^{\kappa},q_2^{\kappa})$ from Definition~\ref{definition_for_the_region}. Similarly, we can obtain Equation~\ref{equation_4_lemma_q1_q2} from~\ref{equation_3_lemma_q1_q2}. We note that the Equation~\ref{equation_3_lemma_q1_q2} from~\ref{equation_2_lemma_q1_q2} is because $q_2^{\lambda}\leq q_1^{\kappa}$. Based on Equation~\ref{equation_2_lemma_q1_q2} and ~\ref{equation_3_lemma_q1_q2}, we know that there exists a real number $u$ such that:
\begin{align}
\label{equation_5_of_lemma_q1_q2}
 \exp(\frac{\sigma\lnorm{\delta}_2\Phi^{-1}(1-q_1^{\kappa})}{\sigma^2}-\frac{\lnorm{\delta}_2}{2 \sigma^2}) \leq u\leq \exp(\frac{\sigma\lnorm{\delta}_2\Phi^{-1}(1-q_2^{\lambda})}{\sigma^2}-\frac{\lnorm{\delta}_2}{2 \sigma^2})   
\end{align}
Combining the Equation~\ref{equation_1_lemma_q1_q2},~\ref{equation_2_lemma_q1_q2}, and~\ref{equation_5_of_lemma_q1_q2}, we have the following:
\begin{align}
\label{equation_6_lemma_q1_q2}
h_y(\mathbf{z}|\mathbf{z}\in \mathcal{C}^{\prime}(q_1^{\kappa},q_2^{\kappa})) \leq u\cdot h_x(\mathbf{z}|\mathbf{z}\in \mathcal{C}^{\prime}(q_1^{\kappa},q_2^{\kappa}))
\end{align}
Taking an integral on both sides of the Equation~\ref{equation_6_lemma_q1_q2} in the region $\mathcal{C}^{\prime}(q_1^{\kappa},q_2^{\kappa})$ and recalling the definition of $r_x(q_1^{\kappa},q_2^{\kappa})$ and $r_y(q_1^{\kappa},q_2^{\kappa})$, we have the following: 
\begin{align}
\label{lemma_q1_q2_equation_1}
    r_y(q_1^{\kappa},q_2^{\kappa})\leq u\cdot r_x(q_1^{\kappa},q_2^{\kappa})
\end{align}
Similarly, we have: 
\begin{align}
\label{lemma_q1_q2_equation_2}
u\cdot r_x(q_1^{\lambda},q_2^{\lambda})\leq r_y(q_1^{\lambda},q_2^{\lambda})
\end{align}
Based on  Equation~\ref{lemma_q1_q2_equation_1},~\ref{lemma_q1_q2_equation_2}, and the condition that $r_x(q_1^{\kappa},q_2^{\kappa}) \leq r_x(q_1^{\lambda},q_2^{\lambda})$, we have the following: 
\begin{align}
     r_y(q_1^{\kappa},q_2^{\kappa})\leq r_y(q_1^{\lambda},q_2^{\lambda})
\end{align}

\myparatight{Scenario II} $q_2^{\kappa} \geq q_2^{\lambda} \geq q_1^{\kappa} \geq q_1^{\lambda}$. 
We have: 
\begin{align}
    r_x(q_1^{\kappa},q_2^{\kappa})=r_x(q_1^{\kappa},q_2^{\lambda})+r_x(q_2^{\lambda},q_2^{\kappa})\leq r_x(q_1^{\lambda},q_1^{\kappa}) + r_x(q_1^{\kappa},q_2^{\lambda})=r_x(q_1^{\lambda},q_2^{\lambda})
\end{align}
Therefore, we have the following equation: 
\begin{align}
\label{lemma_q1_q2_equation_3}
r_x(q_2^{\lambda},q_2^{\kappa}) \leq r_x(q_1^{\lambda},q_1^{\kappa})
\end{align}
Similar to \textbf{Scenario I}, we know that there exists $u$ such that: 
\begin{align}
   & \frac{h_y(\mathbf{z}|\mathbf{z}\in \mathcal{C}^{\prime}(q_2^{\lambda},q_2^{\kappa}))}{h_x(\mathbf{z}|\mathbf{z}\in \mathcal{C}^{\prime}(q_2^{\lambda},q_2^{\kappa}))} \leq u \leq  \frac{h_y(\mathbf{z}|\mathbf{z}\in \mathcal{C}^{\prime}(q_1^{\lambda},q_1^{\kappa}))}{h_x(\mathbf{z}|\mathbf{z}\in \mathcal{C}^{\prime}(q_1^{\lambda},q_1^{\kappa}))}
\end{align}
Similar to \textbf{Scenario I}, we have the following based on Equation~\ref{lemma_q1_q2_equation_3}: 
\begin{align}
     r_y(q_2^{\lambda},q_2^{\kappa})\leq r_y(q_1^{\lambda},q_1^{\kappa})
\end{align}
Therefore, we have the following: 
\begin{align}
     r_y(q_1^{\kappa},q_2^{\kappa})=r_y(q_1^{\kappa},q_2^{\lambda})+r_y(q_2^{\lambda},q_2^{\kappa}) \leq r_y(q_1^{\lambda},q_1^{\kappa})+ r_y(q_1^{\kappa},q_2^{\lambda}) = r_y(q_1^{\lambda},q_2^{\lambda})
\end{align}

\myparatight{Scenario III} $q_2^{\kappa} \geq q_2^{\lambda} \geq q_1^{\lambda} \geq q_1^{\kappa}$. As   $r_x(q_1^{\kappa},q_2^{\kappa}) \leq r_x(q_1^{\lambda},q_2^{\lambda})$, we  have  $r_x(q_1^{\kappa},q_2^{\kappa}) = r_x(q_1^{\lambda},q_2^{\lambda})$. Therefore, we have  $r_y(q_1^{\kappa},q_2^{\kappa}) = r_y(q_1^{\lambda},q_2^{\lambda})$. 
\end{proof}

Next, we list the well-known  \emph{Intermediate Value Theorem} and show several other properties of our defined functions $r_x(q_1^{\lambda},q_2^{\lambda})$ and $r_y(q_1^{\lambda},q_2^{\lambda})$.

\begin{restatable}[Intermediate Value Theorem]{lem}{lemmaofintermediatevaluetheorem}
\label{lemma_of_intermediate_value_theorem}
If a function $F$ is continuous at every point in the interval $[a,b]$ and $(F(a)-v)\cdot (F(b)-v)\leq 0$, then there exists $x$ such that $F(x)=v$. 
\end{restatable}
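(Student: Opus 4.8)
The plan is to prove this instance of the Intermediate Value Theorem from the completeness (least-upper-bound property) of $\mathbb{R}$. First I would normalize the hypothesis: the assumption $(F(a)-v)(F(b)-v)\le 0$ says that one of $F(a),F(b)$ is at most $v$ and the other at least $v$; since both the hypothesis and the conclusion are unchanged under swapping the roles of $a$ and $b$, and the conclusion for $(F,v)$ is equivalent to that for $(-F,-v)$, it suffices to treat the case $F(a)\le v\le F(b)$. If $F(a)=v$ or $F(b)=v$ we are done immediately (take $x=a$ or $x=b$), so assume from now on that $F(a)<v<F(b)$.

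Next I would define $S=\{t\in[a,b]:F(t)\le v\}$, which is nonempty (it contains $a$) and bounded above by $b$, hence has a supremum $x^{*}=\sup S\in[a,b]$; the claim is $F(x^{*})=v$. I would rule out $F(x^{*})<v$ as follows: by continuity of $F$ at $x^{*}$ there is $\eta>0$ with $F(t)<v$ for every $t\in[a,b]$ satisfying $|t-x^{*}|<\eta$; since $F(b)>v$ forces $x^{*}<b$, such a neighborhood contains points $t>x^{*}$ lying in $[a,b]$, and each such $t$ lies in $S$, contradicting that $x^{*}$ is an upper bound of $S$. Symmetrically I would rule out $F(x^{*})>v$: continuity gives $\eta>0$ with $F(t)>v$ on $[a,b]\cap(x^{*}-\eta,x^{*}+\eta)$; since $F(a)<v$ forces $x^{*}>a$, and since $x^{*}=\sup S$ there must exist $s\in S$ with $x^{*}-\eta<s\le x^{*}$, whence $F(s)>v$ contradicts $s\in S$. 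Hence neither strict inequality can hold, so $F(x^{*})=v$ and $x=x^{*}$ satisfies the conclusion.

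I do not expect a genuine obstacle here; the only care needed is in the reduction to a single sign pattern and in correctly handling the endpoint possibilities $x^{*}=a$ or $x^{*}=b$ inside the two continuity arguments. As an alternative, one could instead invoke the topological fact that a continuous image of the connected interval $[a,b]$ is a connected subset of $\mathbb{R}$, hence an interval, and therefore contains every value between $F(a)$ and $F(b)$, including $v$; but the elementary supremum argument above is self-contained and in keeping with the rest of the appendix, so I would present that one.
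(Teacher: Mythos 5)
Your proof is correct: the reduction to the single sign pattern $F(a)\le v\le F(b)$ is legitimate, and the supremum argument (ruling out both $F(x^{*})<v$ and $F(x^{*})>v$ via continuity, with the endpoint cases handled by the observations $x^{*}<b$ and $x^{*}>a$) is the standard completeness-based proof of the Intermediate Value Theorem. Note, however, that the paper does not prove this lemma at all --- it simply states the IVT as a well-known classical fact and immediately applies it (in Lemma~\ref{lemma_of_exists} and the construction of the regions $\mathcal{C}_{b_j}$), so your self-contained argument supplies a proof where the paper deliberately omits one; this is fine, though in context a citation to any real-analysis text would serve equally well.
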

Roughly speaking, the Intermediate Value Theorem tells us that if  a continuous function has values no larger and no smaller (or no smaller and no larger) than $v$ at the two end points of an interval, respectively, then  the function takes value $v$ at some point in the interval.

\begin{restatable}{lem}{lemmaofexist}
\label{lemma_of_exists}
 Given two probabilities $q_x, q_y$, if we have: 
\begin{align}
\label{lemma_of_exists_equation_1}
q_x\leq r_x(q_1 , q_2)
\end{align}
Then, there exists $q_1^{\prime}, q_2^{\prime}\in[q_1,q_2]$ such that: 
\begin{align}
    & r_x(q_1,q_2^{\prime})=q_x \\
    & r_x(q_1^{\prime},q_2)=q_x
\end{align}
Furthermore, if we have: 
\begin{align}
\label{equation_2_of_lemma_of_exists}
(r_y(q_1,q_2^{\prime})-q_y)\cdot ( r_y(q_1^{\prime},q_2)-q_y)\leq 0,
\end{align}
then there exists $q_1^{\prime\prime}, q_2^{\prime\prime}\in[q_1,q_2]$ such that:
\begin{align}
  &  r_x(q_1^{\prime\prime},q_2^{\prime\prime})=q_x \\
  &  r_y(q_1^{\prime\prime},q_2^{\prime\prime})=q_y
\end{align}
\end{restatable}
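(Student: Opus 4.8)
The plan is to prove the two assertions separately, both via the Intermediate Value Theorem (Lemma~\ref{lemma_of_intermediate_value_theorem}) applied to suitable one-variable restrictions of $r_x$ and $r_y$. For the first assertion, I would fix the left endpoint at $q_1$ and consider the map $s\mapsto r_x(q_1,s)$ on the interval $[q_1,q_2]$. This map is continuous (it is a probability mass of $\mathbf{X}$ over a region whose boundary hyperplane moves continuously with $s$), it equals $0$ at $s=q_1$ (empty region), and it equals $r_x(q_1,q_2)$ at $s=q_2$. Since $0\le q_x\le r_x(q_1,q_2)$ by~(\ref{lemma_of_exists_equation_1}), the Intermediate Value Theorem yields some $q_2^{\prime}\in[q_1,q_2]$ with $r_x(q_1,q_2^{\prime})=q_x$. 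Symmetrically, fixing the right endpoint at $q_2$ and looking at $s\mapsto r_x(s,q_2)$ on $[q_1,q_2]$, which runs from $r_x(q_1,q_2)$ down to $0$, gives some $q_1^{\prime}\in[q_1,q_2]$ with $r_x(q_1^{\prime},q_2)=q_x$. (Monotonicity of these maps, if needed for uniqueness, follows because enlarging the slab only adds mass; but only existence is required here.)

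For the second assertion, the idea is to interpolate continuously between the two "anchor" regions $\mathcal{C}^{\prime}(q_1,q_2^{\prime})$ and $\mathcal{C}^{\prime}(q_1^{\prime},q_2)$ found above — both of which have $\mathbf{X}$-mass exactly $q_x$ — through a one-parameter family of regions all of $\mathbf{X}$-mass $q_x$, and then invoke the Intermediate Value Theorem on the $\mathbf{Y}$-mass along this family. Concretely, I would introduce a parameter $\theta\in[0,1]$ and, for each $\theta$, choose the left endpoint to be $q_1^{\prime}(\theta)=(1-\theta)q_1+\theta q_1^{\prime}$ and then pick the corresponding right endpoint $q_2^{\prime}(\theta)$ by the same IVT argument as above so that $r_x(q_1^{\prime}(\theta),q_2^{\prime}(\theta))=q_x$; such a $q_2^{\prime}(\theta)$ exists because $q_x\le r_x(q_1^{\prime}(\theta),q_2)$, which holds since $r_x(q_1^{\prime}(\theta),q_2)\ge r_x(q_1^{\prime},q_2)=q_x$ as $q_1^{\prime}(\theta)\le q_1^{\prime}$. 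At $\theta=0$ this recovers $(q_1,q_2^{\prime})$ and at $\theta=1$ it recovers $(q_1^{\prime},q_2)$. The function $\theta\mapsto r_y(q_1^{\prime}(\theta),q_2^{\prime}(\theta))$ is continuous, takes the values $r_y(q_1,q_2^{\prime})$ and $r_y(q_1^{\prime},q_2)$ at the endpoints, and by hypothesis~(\ref{equation_2_of_lemma_of_exists}) these two values lie on opposite sides of $q_y$ (or one equals $q_y$). The Intermediate Value Theorem then produces $\theta^{*}$ with $r_y(q_1^{\prime}(\theta^{*}),q_2^{\prime}(\theta^{*}))=q_y$, and setting $q_1^{\prime\prime}=q_1^{\prime}(\theta^{*})$, $q_2^{\prime\prime}=q_2^{\prime}(\theta^{*})$ gives simultaneously $r_x(q_1^{\prime\prime},q_2^{\prime\prime})=q_x$ and $r_y(q_1^{\prime\prime},q_2^{\prime\prime})=q_y$, as desired.

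The two points that need care are the continuity claims and the well-definedness of the path $\theta\mapsto(q_1^{\prime}(\theta),q_2^{\prime}(\theta))$. Continuity of $r_x$ and $r_y$ in their arguments follows from the fact that the defining slabs $\mathcal{C}(q_1^{\lambda},q_2^{\lambda})$ have boundaries given by hyperplanes $\delta^{T}(\mathbf{z}-\mathbf{x})=\sigma\lnorm{\delta}_2\Phi^{-1}(1-q)$ that vary continuously with $q$ (using continuity of $\Phi^{-1}$ on $(0,1)$ and absolute continuity of the Gaussian measures), so the measures $\text{Pr}(\mathbf{X}\in\cdot)$ and $\text{Pr}(\mathbf{Y}\in\cdot)$ of these regions — and of their intersections with the fixed region $\mathcal{C}^{\prime}(q_1,q_2)$ — depend continuously on the endpoints. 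For the path, I expect the main obstacle to be the implicit selection $\theta\mapsto q_2^{\prime}(\theta)$: one must argue this can be chosen to vary continuously. This is where I would lean on a mild monotonicity observation — $s\mapsto r_x(q_1^{\prime}(\theta),s)$ is nondecreasing, so on the set where it strictly increases the solution $q_2^{\prime}(\theta)$ is unique and, by continuity of $r_x$ jointly in both arguments together with the implicit-function-type argument for monotone functions, depends continuously on $\theta$; on any flat stretch one can simply take the minimal solution, which is lower semicontinuous, and a short argument (or a reparametrization) repairs continuity. Alternatively, and perhaps more cleanly, I would avoid moving both endpoints at once: first deform from $(q_1,q_2^{\prime})$ to $(q_1^{\prime},q_2^{\prime\prime}_0)$ keeping the right endpoint's induced $\mathbf{X}$-mass fixed is awkward, so the single-parameter scheme above is the one I would pursue, absorbing the continuity bookkeeping into the proof rather than the statement.
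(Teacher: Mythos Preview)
Your proposal is correct and follows essentially the same approach as the paper: apply the Intermediate Value Theorem to $s\mapsto r_x(q_1,s)$ and $s\mapsto r_x(s,q_2)$ to obtain the anchor points $q_2'$ and $q_1'$, then build a one-parameter family of endpoint pairs with $r_x$ held at $q_x$ and apply the Intermediate Value Theorem to $r_y$ along that family. The only cosmetic difference is that the paper parametrizes by the right endpoint $q_2^e\in[q_2',q_2]$ and solves implicitly for the left endpoint, whereas you linearly interpolate the left endpoint and solve for the right; your version is, if anything, more explicit about the continuity bookkeeping for the implicit selection than the paper's proof.
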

\begin{proof}
We define function $F(x)=r_x(q_1,x)$. Then, we have  $(F(q_1)-q_x)\cdot (F(q_2)-q_x)\leq 0$ since $F(q_1)=0 \leq q_x$ and $F(q_2)=r_x(q_1,q_2)>q_x$ based on  Equation~\ref{lemma_of_exists_equation_1}. Therefore, according to  Lemma~\ref{lemma_of_intermediate_value_theorem},  there exists $q_1^{\prime}\in [q_1,q_2]$ such that: 
\begin{align}
r_x(q_1,q_2^{\prime})=q_x
\end{align}
Similarly, we can prove that there exists $q_2^{\prime}\in [q_1,q_2]$ such that $r_x(q_1^{\prime},q_2)=q_x$. 

For any $q_2^e\in [q_2^{\prime},q_2]$, we define $H(x)=r_x(x,q_2^e)$. Then, we know  $H(q_1)=r_x(q_1,q_2^e)\geq r_x(q_1,q_2^{\prime})=q_x$ since $q_2^e \geq q_2^{\prime}$. Moreover, we have  $H(q_2^e)=0\leq q_x$. Therefore, we have  $(H(q_1)-q_x)\cdot (H(q_2^e)-q_x)\leq 0$. According to Lemma~\ref{lemma_of_intermediate_value_theorem},we know  there exists $q_1^e \in [q_1,q_2^e]$ such that $r_x(q_1^e,q_2^e)=q_x$ for arbitrary $q_2^e\in [q_2^{\prime},q_2]$. We define $G(x)=r_y(q_1^e,x)$ where $x\in [q_2^{\prime},q_2]$, and $q_1^e$ are a value such that $r_x(q_1^e,x)=q_x$ for a given $x$. When $x=q_2^{\prime}$, we can let $q_1^e=q_1$ since $r_x(q_1,q_2^{\prime})=q_x$, and when $x=q_1$, we can let $q_1^e=q_2^{\prime}$ since $r_x(q_2^{\prime},q_2)=q_x$. Based on  Equation~\ref{equation_2_of_lemma_of_exists} and Lemma~\ref{lemma_of_intermediate_value_theorem}, we know that there exists $x\in [q_2^{\prime},q_2]$ such that $G(x)=q_y$. Therefore, there exists $q_1^{\prime\prime}$ and $q_2^{\prime\prime}$ such that:
\begin{align}
  &  r_x(q_1^{\prime\prime},q_2^{\prime\prime})=q_x \\
  &  r_y(q_1^{\prime\prime},q_2^{\prime\prime})=q_y
\end{align}
\end{proof}

\begin{restatable}{lem}{lemma1lambda1q2}
\label{lemma_lambda_1_q_2}
Assuming we have  $q_1^{\lambda}=0$, $q_2^{\lambda} = \overline{p}_{S_t}$. If $q_2^{\lambda}=\overline{p}_{S_t} \leq \min_{i}q_1^i$, then we have the following: 
\begin{align}
\label{equation_o_of_lemma_lambda_1_q2}
&r_x(q_1^{\lambda},q_2^{\lambda})=q_2^{\lambda}-q_1^{\lambda}=\overline{p}_{S_t} \\
\label{equation_of_s_j}
&r_y(q_1^{\lambda},q_2^{\lambda}) =\text{Pr}(\mathbf{Y}\in\mathcal{B}_{S_t})
\end{align}
\end{restatable}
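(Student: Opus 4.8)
The plan is to show that, under the hypothesis $\overline{p}_{S_t}\le\min_i q_1^i$, the set $\mathcal{C}^{\prime}(q_1,q_2)\cap\mathcal{C}(q_1^{\lambda},q_2^{\lambda})$ is exactly $\mathcal{C}(0,\overline{p}_{S_t})$, up to a Lebesgue-null set, from which both claimed identities drop out of Definition~\ref{definition_for_the_region} and Lemma~\ref{lemma_compute_algebra}.

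First I would record the trivial book-keeping: since $\mathcal{C}^{\prime}(q_1,q_2)$ is built from pairs with $q_1\le q_1^{\lambda}\le q_2^{\lambda}\le q_2$ and here $q_1^{\lambda}=0\ge q_1\ge 0$, we have $q_1=0$, and $q_2^{\lambda}=\overline{p}_{S_t}\le q_2$. Because $\Phi^{-1}$ is strictly increasing and $\Phi^{-1}(1)=+\infty$, the region $\mathcal{C}(q_1^{\lambda},q_2^{\lambda})=\mathcal{C}(0,\overline{p}_{S_t})=\{\mathbf{z}:\delta^{T}(\mathbf{z}-\mathbf{x})>\sigma\lnorm{\delta}_2\Phi^{-1}(1-\overline{p}_{S_t})\}$ is contained in $\mathcal{C}(q_1,q_2)$. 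Next, for each excluded interval $(q_1^i,q_2^i)$ the hypothesis gives $q_1^i\ge\overline{p}_{S_t}$, so every $\mathbf{z}\in\mathcal{C}(q_1^i,q_2^i)$ satisfies $\delta^{T}(\mathbf{z}-\mathbf{x})\le\sigma\lnorm{\delta}_2\Phi^{-1}(1-q_1^i)\le\sigma\lnorm{\delta}_2\Phi^{-1}(1-\overline{p}_{S_t})$, which is the negation of the defining inequality of $\mathcal{C}(0,\overline{p}_{S_t})$; hence $\mathcal{C}(0,\overline{p}_{S_t})$ is disjoint from $\cup_i\mathcal{C}(q_1^i,q_2^i)$. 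Combining the two observations, $\mathcal{C}^{\prime}(q_1,q_2)\cap\mathcal{C}(q_1^{\lambda},q_2^{\lambda})=\mathcal{C}(q_1,q_2)\cap\mathcal{C}(0,\overline{p}_{S_t})=\mathcal{C}(0,\overline{p}_{S_t})$.

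It then remains only to evaluate the two probabilities. For the first identity, $r_x(q_1^{\lambda},q_2^{\lambda})=\text{Pr}(\mathbf{X}\in\mathcal{C}(0,\overline{p}_{S_t}))=\overline{p}_{S_t}-0$ by Equation~\ref{equation_0_definition_of_the_region}. For the second, $\mathcal{C}(0,\overline{p}_{S_t})$ agrees with $\mathcal{B}_{S_t}=\{\mathbf{z}:\delta^{T}(\mathbf{z}-\mathbf{x})\ge\sigma\lnorm{\delta}_2\Phi^{-1}(1-\overline{p}_{S_t})\}$ except on the hyperplane $\{\mathbf{z}:\delta^{T}(\mathbf{z}-\mathbf{x})=\sigma\lnorm{\delta}_2\Phi^{-1}(1-\overline{p}_{S_t})\}$, which is Lebesgue-null for $\delta\ne 0$ and therefore carries no $\mathbf{Y}$-mass, so $r_y(q_1^{\lambda},q_2^{\lambda})=\text{Pr}(\mathbf{Y}\in\mathcal{C}(0,\overline{p}_{S_t}))=\text{Pr}(\mathbf{Y}\in\mathcal{B}_{S_t})$. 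There is no substantive obstacle in this lemma; the only thing requiring care is the disjointness step --- making sure the hypothesis $\overline{p}_{S_t}\le\min_i q_1^i$ is exactly what forces every excluded sub-interval entirely onto the ``$\le$'' side of the threshold $\sigma\lnorm{\delta}_2\Phi^{-1}(1-\overline{p}_{S_t})$ --- together with the harmless measure-zero gap between the open half-space $\mathcal{C}(0,\overline{p}_{S_t})$ and the closed half-space $\mathcal{B}_{S_t}$.
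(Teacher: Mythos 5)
Your proof is correct and follows essentially the same route as the paper's: both arguments reduce to observing that under $\overline{p}_{S_t}\leq\min_i q_1^i$ none of the excluded sub-regions meets $\mathcal{C}(0,\overline{p}_{S_t})$, so the intersection defining $r_x$ and $r_y$ is just $\mathcal{C}(0,\overline{p}_{S_t})=\mathcal{B}_{S_t}$ (up to a null boundary), after which Lemma~\ref{lemma_compute_algebra} gives both identities. You simply make explicit two details the paper glosses over --- the inequality chain establishing disjointness and the measure-zero discrepancy between the half-open strip and the closed half-space --- which is a strictly more careful write-up of the same argument.
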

\begin{proof}
If $q_2^{\lambda} \leq \min_{i}q_1^i$, then we have  $\mathcal{C}^{\prime}(q_1,q_2)\cap\mathcal{C}(q_1^{\lambda},q_2^{\lambda})=\mathcal{C}(q_1^{\lambda},q_2^{\lambda})$ since no region is excluded. Therefore, we have $r_y(q_1^{\lambda},q_2^{\lambda})=\text{Pr}(\mathbf{Y}\in\mathcal{C}^{\prime}(q_1,q_2)\cap\mathcal{C}(q_1^{\lambda},q_2^{\lambda}))=\text{Pr}(\mathbf{Y}\in\mathcal{C}(q_1^{\lambda},q_2^{\lambda}))=q_2^{\lambda}-q_1^{\lambda}=\overline{p}_{S_t}$ based on  Equation~\ref{equation_0_definition_of_the_region}. 
We note that $\mathcal{C}(q_1^{\lambda},q_2^{\lambda})=\mathcal{B}_{S_t}$ when $q_1^{\lambda}=0$ and $q_2^{\lambda}=\overline{p}_{S_t}$. Therefore, we can obtain Equation~\ref{equation_of_s_j} based on the definition of $r_y(q_1^{\lambda},q_2^{\lambda})$ from Definition~\ref{definition_for_the_region}. 
\end{proof}

\begin{restatable}{lem}{lemmaofcomparemiddlevalue}
\label{lemma_of_compare_middle_value}
If we have $q_1\leq q_2^{o} \leq q_2$, then we have the following: 
\begin{align}
\label{inequality_from_lemma_q1_q2}
    r_y(q_1,q_2^{o})\geq\frac{r_y(q_1,q_2)}{\lceil r_x(q_1,q_2)/r_x(q_1,q_2^{o})\rceil}
\end{align}
\end{restatable}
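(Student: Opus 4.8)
The plan is to reduce the inequality to a pigeonhole argument over a partition of $\mathcal{C}'(q_1,q_2)$ into $m:=\lceil r_x(q_1,q_2)/r_x(q_1,q_2^o)\rceil$ consecutive sub-slabs, each carrying no more $\mathbf{X}$-mass than the slab $\mathcal{C}(q_1,q_2^o)$ does, and then to transport the lower bound on the $\mathbf{Y}$-mass of one of those pieces to the slab $[q_1,q_2^o]$ using the monotonicity statement of Lemma~\ref{lemma_q1_q2}. I will assume $\rho:=r_x(q_1,q_2^o)>0$, since if it is $0$ the right-hand side of~(\ref{inequality_from_lemma_q1_q2}) is $0$ and there is nothing to prove.

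First I would build the partition greedily: set $a_0=q_1$; given $a_{j-1}<q_2$, if $r_x(a_{j-1},q_2)\le\rho$ put $a_j=q_2$ and stop, and otherwise use that $x\mapsto r_x(a_{j-1},x)$ is continuous, equals $0$ at $x=a_{j-1}$ and exceeds $\rho$ at $x=q_2$, so by the Intermediate Value Theorem (Lemma~\ref{lemma_of_intermediate_value_theorem}) there is $a_j\in(a_{j-1},q_2)$ with $r_x(a_{j-1},a_j)=\rho$. Since $r_x(q_1,\cdot)$ is additive over consecutive slabs, after $j$ non-terminal steps $r_x(q_1,a_j)=j\rho$; because $(m-1)\rho<r_x(q_1,q_2)\le m\rho$ (the left inequality since $\lceil x\rceil-1<x$, the right one by the definition of the ceiling), the process cannot stop before step $m$ and must stop at step $m$, with the final, possibly shorter, piece obeying $r_x(a_{m-1},q_2)=r_x(q_1,q_2)-(m-1)\rho\le\rho$. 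Thus the slabs $\mathcal{C}(a_{j-1},a_j)$, $j=1,\dots,m$, partition $\mathcal{C}(q_1,q_2)$ up to measure-zero boundaries, and every one of them has $\mathbf{X}$-mass at most $\rho$.

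Next I would intersect everything with $\mathcal{C}'(q_1,q_2)$: the sets $\mathcal{C}'(q_1,q_2)\cap\mathcal{C}(a_{j-1},a_j)$ partition $\mathcal{C}'(q_1,q_2)$, and since $\mathcal{C}'(q_1,q_2)\subseteq\mathcal{C}(q_1,q_2)$ we have $r_y(q_1,q_2)=\text{Pr}(\mathbf{Y}\in\mathcal{C}'(q_1,q_2))=\sum_{j=1}^m r_y(a_{j-1},a_j)$. By pigeonhole, some index $j^\star$ satisfies $r_y(a_{j^\star-1},a_{j^\star})\ge r_y(q_1,q_2)/m$. For that $j^\star$ we have both $r_x(a_{j^\star-1},a_{j^\star})\le\rho=r_x(q_1,q_2^o)$ and $a_{j^\star-1}\ge q_1$, so Lemma~\ref{lemma_q1_q2}, applied with $(q_1^\kappa,q_2^\kappa)=(a_{j^\star-1},a_{j^\star})$ and $(q_1^\lambda,q_2^\lambda)=(q_1,q_2^o)$ and invoking its ``$q_1^\kappa\ge q_1^\lambda$'' alternative, gives $r_y(a_{j^\star-1},a_{j^\star})\le r_y(q_1,q_2^o)$. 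Chaining this with the pigeonhole bound yields $r_y(q_1,q_2^o)\ge r_y(q_1,q_2)/m$, which is exactly~(\ref{inequality_from_lemma_q1_q2}).

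The hard part will be the greedy cut in the first step: one must verify that equal-$\mathbf{X}$-mass cutting terminates after \emph{exactly} $m$ pieces, not more, so that the ceiling enters the bound with the correct value, and that the final shorter piece still respects the $\le\rho$ cap; this is a short counting argument but it hinges on the two-sided estimate $(m-1)\rho<r_x(q_1,q_2)\le m\rho$. A secondary care point is invoking Lemma~\ref{lemma_q1_q2} in the right direction: we control only the ordering of the \emph{lower} endpoints ($a_{j^\star-1}\ge q_1$), not of the upper ones, so it is essential that that lemma permits comparing either endpoint.
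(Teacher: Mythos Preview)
Your argument is correct and is exactly the approach the paper has in mind: the paper's ``proof'' consists of the single line ``By applying Lemma~\ref{lemma_q1_q2}'', and your partition--pigeonhole--Lemma~\ref{lemma_q1_q2} chain is the natural way to unpack that sentence. In particular, your care with the termination count (so that the ceiling really equals the number of pieces) and with invoking the $q_1^\kappa\ge q_1^\lambda$ alternative of Lemma~\ref{lemma_q1_q2} are precisely the details the paper suppresses.
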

\begin{proof}
 By applying Lemma~\ref{lemma_q1_q2}. 
\end{proof}

We  further generalize Lemma~\ref{lemma_q1_q2} to two regions. Specifically, we have the following lemma: 
\begin{restatable}{lem}{lemmaofcomparedisjoint}
\label{lemma_of_compare_disjoint}
Assuming we have a region $\mathcal{C}_w\subseteq\mathcal{C}(q_1^w,q_2^w)$ and we have $\mathcal{C}^{\prime}(q_1,q_2)\cap\mathcal{C}(q_1^w,q_2^w)=\emptyset$. If $q_1\geq q_1^w$, $q_2\geq q_2^w$ and $r_x(q_1^{\prime},q_2^{\prime})\leq \text{Pr}(\mathbf{X}\in \mathcal{C}_w)$, then we have: 
\begin{align}
    r_y(q_1,q_2)\leq \text{Pr}(\mathbf{Y}\in \mathcal{C}_w)
\end{align}
\end{restatable}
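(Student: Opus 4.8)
The plan is to reuse the likelihood-ratio sandwich that already drives the proof of Lemma~\ref{lemma_q1_q2}. The only analytic input needed is the elementary identity (Equation~\ref{equation_0_lemma_q1_q2}) that the density ratio $h_y(\mathbf{z})/h_x(\mathbf{z})=\exp(\delta^{T}(\mathbf{z}-\mathbf{x})/\sigma^2-\lnorm{\delta}_2/(2\sigma^2))$ is a strictly increasing function of $\delta^{T}(\mathbf{z}-\mathbf{x})$. So everything reduces to showing that $\mathcal{C}^{\prime}(q_1,q_2)$ sits at uniformly \emph{smaller} values of $\delta^{T}(\mathbf{z}-\mathbf{x})$ than $\mathcal{C}_w$ does, and then thresholding the two regions against a common constant $u$.

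First I would establish the separation: every $\mathbf{z}\in\mathcal{C}^{\prime}(q_1,q_2)$ satisfies $\delta^{T}(\mathbf{z}-\mathbf{x})\le\sigma\lnorm{\delta}_2\Phi^{-1}(1-q_2^w)$, whereas every $\mathbf{z}\in\mathcal{C}_w$ satisfies $\delta^{T}(\mathbf{z}-\mathbf{x})>\sigma\lnorm{\delta}_2\Phi^{-1}(1-q_2^w)$. The second half is immediate from $\mathcal{C}_w\subseteq\mathcal{C}(q_1^w,q_2^w)$ and Definition~\ref{definition_for_the_region}. For the first half: $\mathbf{z}\in\mathcal{C}(q_1,q_2)$ gives $\delta^{T}(\mathbf{z}-\mathbf{x})\le\sigma\lnorm{\delta}_2\Phi^{-1}(1-q_1)$, and since $q_1\ge q_1^w$ this is $\le\sigma\lnorm{\delta}_2\Phi^{-1}(1-q_1^w)$; meanwhile $\mathcal{C}^{\prime}(q_1,q_2)\cap\mathcal{C}(q_1^w,q_2^w)=\emptyset$ forbids $\delta^{T}(\mathbf{z}-\mathbf{x})$ from lying in $(\sigma\lnorm{\delta}_2\Phi^{-1}(1-q_2^w),\,\sigma\lnorm{\delta}_2\Phi^{-1}(1-q_1^w)]$, and the two facts together force the claimed bound. (The hypothesis $q_2\ge q_2^w$ is only there to keep the slab picture coherent, i.e.\ so that $\mathcal{C}_w$ lies within the range from which $\mathcal{C}^{\prime}$ is carved; when $q_2<q_2^w$ the region $\mathcal{C}^{\prime}(q_1,q_2)$ is forced empty and the lemma is vacuous.)

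Next, set $u=\exp(\sigma\lnorm{\delta}_2\Phi^{-1}(1-q_2^w)/\sigma^2-\lnorm{\delta}_2/(2\sigma^2))$. By the monotonicity of the density ratio and the separation just proved, $h_y(\mathbf{z})\le u\,h_x(\mathbf{z})$ for $\mathbf{z}\in\mathcal{C}^{\prime}(q_1,q_2)$ and $h_y(\mathbf{z})\ge u\,h_x(\mathbf{z})$ for $\mathbf{z}\in\mathcal{C}_w$. Integrating the first inequality over $\mathcal{C}^{\prime}(q_1,q_2)$ (note $\mathcal{C}^{\prime}(q_1,q_2)\cap\mathcal{C}(q_1,q_2)=\mathcal{C}^{\prime}(q_1,q_2)$, so these integrals are exactly $r_y(q_1,q_2)$ and $r_x(q_1,q_2)$) yields $r_y(q_1,q_2)\le u\cdot r_x(q_1,q_2)$; integrating the second over $\mathcal{C}_w$ yields $u\cdot\text{Pr}(\mathbf{X}\in\mathcal{C}_w)\le\text{Pr}(\mathbf{Y}\in\mathcal{C}_w)$. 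Since $u>0$, chaining these with the hypothesis $r_x(q_1,q_2)\le\text{Pr}(\mathbf{X}\in\mathcal{C}_w)$ gives $r_y(q_1,q_2)\le u\cdot r_x(q_1,q_2)\le u\cdot\text{Pr}(\mathbf{X}\in\mathcal{C}_w)\le\text{Pr}(\mathbf{Y}\in\mathcal{C}_w)$, which is the claim.

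I expect the separation step to be the only real obstacle: unlike in Lemma~\ref{lemma_q1_q2}, the region $\mathcal{C}^{\prime}(q_1,q_2)$ is a slab with arbitrary sub-slabs removed, so its projection onto $\delta^{T}(\mathbf{z}-\mathbf{x})$ need not be an interval, and the clean inequality $\delta^{T}(\mathbf{z}-\mathbf{x})\le\sigma\lnorm{\delta}_2\Phi^{-1}(1-q_2^w)$ genuinely uses the disjointness hypothesis together with $q_1\ge q_1^w$ — it is not an automatic consequence of $\mathcal{C}^{\prime}(q_1,q_2)\subseteq\mathcal{C}(q_1,q_2)$. Once that separation is in hand, the rest is the same threshold sandwich already used in Lemma~\ref{lemma_q1_q2}, so no genuinely new ideas are required.
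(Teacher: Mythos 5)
Your proof is correct and follows essentially the same route as the paper: the paper's one-line proof reduces the claim to Lemma~\ref{lemma_q1_q2} (after replacing $q_1$ by $\max(q_1,q_2^w)$), and the proof of that lemma is exactly the likelihood-ratio threshold sandwich you carry out explicitly here. Your version merely makes explicit the separation step --- that $q_1\geq q_1^w$ together with the disjointness hypothesis forces all of $\mathcal{C}^{\prime}(q_1,q_2)$ below the threshold $\sigma\lnorm{\delta}_2\Phi^{-1}(1-q_2^w)$ and all of $\mathcal{C}_w$ above it --- which the paper leaves implicit.
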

\begin{proof}
We let $q_1=\max(q_1,q_2^w)$. As $\mathcal{C}^{\prime}(q_1,q_2)\cap\mathcal{C}(q_1^w,q_2^w)=\emptyset$ and $q_1 \geq q_1^w$, we can obtain the conclusion 
 by applying Lemma~\ref{lemma_q1_q2} on $\mathcal{C}^{\prime}(q_1,q_2)\cup\mathcal{C}_w$. 
\end{proof}

Next, we restate Lemma~\ref{lemma_of_condition_of_tightness} and show our proof. 
\lemmaofconditionoftightness*
\begin{proof}
Our proof leverages \emph{Mathematical Induction}, which contains two steps. In the first step, we  show that the statement holds initially. In the second step, we  show that if the statement is true for the $m$th iteration, then it also holds for the $(m+1)$th iteration. 
Without loss of generality, we assume $\tau=\argmin_{t=1}^{k} \frac{\text{Pr}(\mathbf{Y}\in\mathcal{B}_{S_t})}{t}$. Therefore, we have the following: 
\begin{align}
\label{compare_others_with_j}
    \forall i \neq \tau, \frac{\text{Pr}(\mathbf{Y}\in\mathcal{B}_{S_i})}{i}\geq\frac{\text{Pr}(\mathbf{Y}\in\mathcal{B}_{S_{\tau}})}{\tau}
\end{align} 
Recall the definition of $\mathcal{B}_{S_{\tau}}$ and we have the following:
\begin{align}
  \mathcal{B}_{S_{\tau}}=\{\mathbf{z}: \delta^{T}(\mathbf{z}-\mathbf{x})\geq \sigma\lnorm{\delta}_2\Phi^{-1}(1-\overline{p}_{S_{\tau}})\}, 
\end{align}
where $\overline{p}_{S_{\tau}} =\sum_{j\in S_{\tau}}\overline{p}_{j}$.
We can split $\mathcal{B}_{S_k}$ into two parts: $\mathcal{B}_{S_{\tau}}$ and $\mathcal{B}_{S_k}\setminus \mathcal{B}_{S_{\tau}}$. 
We will show that $\forall j\in [1,\tau]$, we can find disjoint $\mathcal{C}_{b_j}\subseteq \mathcal{B}_{S_{\tau}}$ whose union is $\mathcal{B}_{S_{\tau}}$ such that: 
\begin{align}
\label{construction_goal_1_part1}
 &   \text{Pr}(\mathbf{X}\in \mathcal{C}_{b_j})=\overline{p}_{b_j} \\
 \label{construction_goal_2_part1}
 & \text{Pr}(\mathbf{Y}\in \mathcal{C}_{b_j})=\frac{\text{Pr}(\mathbf{Y}\in\mathcal{B}_{S_{\tau}})}{\tau}
\end{align}
For the other part, we will show that $\forall j\in [\tau+1,k]$, we can find disjoint $\mathcal{C}_{b_j}\subseteq \mathcal{B}_{S_k}\setminus\mathcal{B}_{S_{\tau}}$ whose union is $\mathcal{B}_{S_k}\setminus\mathcal{B}_{S_{\tau}}$ such that: 
\begin{align}
 &   \text{Pr}(\mathbf{X}\in \mathcal{C}_{b_j})=\overline{p}_{b_j} \\
 & \text{Pr}(\mathbf{Y}\in \mathcal{C}_{b_j})\geq\frac{\text{Pr}(\mathbf{Y}\in\mathcal{B}_{S_{\tau}})}{\tau}
\end{align}

We  first show that $\forall j\in [1,\tau]$, we can find $\mathcal{C}_{b_j}\subseteq \mathcal{B}_{S_{\tau}}$ that satisfy Equation~\ref{construction_goal_1_part1} and~\ref{construction_goal_2_part1}. Since our proof leverages Mathematical Induction, we  iteratively construct each $\mathcal{C}_{b_j},\forall j\in [1,\tau]$. Specifically, we  first show that we can find $\mathcal{C}_{b_{\tau}} \subseteq \mathcal{B}_{S_{\tau}}$ that satisfies the requirements. Then, assuming we can find $\mathcal{C}_{b_{\tau}},\cdots,\mathcal{C}_{b_{\tau-m+1}}$, we show that we can find $\mathcal{C}_{b_{\tau-m}}\subseteq \mathcal{B}_{S_{\tau}}\setminus(\cup_{j=\tau-m+1}^{\tau}\mathcal{C}_{b_j})$. We will leverage Lemma~\ref{lemma_of_exists} to prove the existence for each $\mathcal{C}_{b_j}$. Next, we  show the two steps.

\myparatight{Step I}
We  show that we can find $\mathcal{C}_{b_{\tau}} \subseteq \mathcal{B}_{S_{\tau}}$ that satisfies Equation~\ref{construction_goal_1_part1} and~\ref{construction_goal_2_part1}. We let $q_1 =0$ and $q_2=\overline{p}_{S_{\tau}}$, and we define the following region: 
\begin{align}
    \mathcal{C}^{\prime}(q_1,q_2)= \mathcal{C}(0,\overline{p}_{S_{\tau}})=\mathcal{B}_{S_{\tau}}
\end{align}
We have:  
\begin{align}
\label{part1_stepI_rx_q1_q2_value_compute}
&r_x(q_1,q_2)= \text{Pr}(\mathbf{X}\in \mathcal{C}^{\prime}(q_1,q_2))=\overline{p}_{S_{\tau}} \\
\label{part1_stepI_ry_q1_q2_value_compute}
&r_y(q_1,q_2)= \text{Pr}(\mathbf{Y}\in \mathcal{C}^{\prime}(q_1,q_2))=\text{Pr}( \mathbf{Y}\in \mathcal{B}_{S_{\tau}}),
\end{align}
which can be directly obtained as $\mathcal{C}^{\prime}(q_1,q_2)=\mathcal{B}_{S_{\tau}}$. As we have $\overline{p}_{b_{\tau}}\leq r_x(q_1 , q_2)=\overline{p}_{S_{\tau}}=\sum_{j=1}^{\tau}\overline{p}_{b_j}$, there exist $q_1^{\prime}=\overline{p}_{S_{\tau}}-\overline{p}_{b_{\tau}},q_2^{\prime}=\overline{p}_{b_{\tau}}$ such that: 
\begin{align}
   & r_x(q_1,q_2^{\prime})=\overline{p}_{b_{\tau}} \\
    &r_x(q_1^{\prime},q_2)=\overline{p}_{b_{\tau}}
\end{align}
Moreover, we have the following:  
\begin{align}
    r_y(q_1,q_2^{\prime}) 
    \geq r_y(q_1,\overline{p}_{b_1}) 
    =\text{Pr}(\mathbf{Y}\in\mathcal{C}(q_1,\overline{p}_{b_1})) 
    =\text{Pr}(\mathbf{Y}\in\mathcal{B}_{S_1}) 
    \geq\frac{\text{Pr}(\mathbf{Y}\in \mathcal{B}_{S_{\tau}})}{\tau}
\end{align}
The equality in the middle is from Lemma~\ref{lemma_lambda_1_q_2},
the left inequality is because $q_2^{\prime}=\overline{p}_{b_{\tau}}\geq\overline{p}_{b_1}$, and the right inequality is from Equation~\ref{compare_others_with_j}. 
Furthermore, we have the following: 
\begin{align}
&r_y(q_1^{\prime},q_2) \\
= &\text{Pr}(\mathbf{Y}\in \mathcal{C}^{\prime}(q_1,q_2)\cap\mathcal{C}(q_1^{\prime},q_2)) \\
= &\text{Pr}(\mathbf{Y}\in \mathcal{C}(\overline{p}_{S_{\tau}} - \overline{p}_{b_{\tau}},\overline{p}_{S_{\tau}})) \\
= &\text{Pr}(\mathbf{Y}\in \mathcal{C}(0,\overline{p}_{S_{\tau}})) - \text{Pr}(\mathbf{Y}\in \mathcal{C}(0,\overline{p}_{S_{\tau}} - \overline{p}_{b_{\tau}})) \\
\label{part1_step1_r_y_q2_prime_1}
= &\text{Pr}(\mathbf{Y}\in \mathcal{C}(0,\overline{p}_{S_{\tau}})) - \text{Pr}(\mathbf{Y}\in \mathcal{C}(0,\overline{p}_{S_{\tau-1}})) \\
\label{part1_step1_r_y_q2_prime_2}
=&\text{Pr}(\mathbf{Y}\in \mathcal{B}_{S_{\tau}})-\text{Pr}(\mathbf{Y}\in \mathcal{B}_{S_{\tau-1}}) \\
\label{part1_step1_r_y_q2_prime_3}
\leq&\text{Pr}(\mathbf{Y}\in \mathcal{B}_{S_{\tau}})- \frac{(\tau-1)\cdot \text{Pr}(\mathbf{Y}\in  \mathcal{B}_{S_{\tau}})}{\tau} \\
=& \frac{\text{Pr}(\mathbf{Y}\in \mathcal{B}_{S_{\tau}})}{\tau}
\end{align}
We obtain Equation~\ref{part1_step1_r_y_q2_prime_2} from Equation~\ref{part1_step1_r_y_q2_prime_1} based on  Lemma~\ref{lemma_lambda_1_q_2}, and Equation~\ref{part1_step1_r_y_q2_prime_3} from Equation~\ref{part1_step1_r_y_q2_prime_2} based on  Equation~\ref{compare_others_with_j}. Therefore, we have the following: 
\begin{align}
(r_y(q_1^{\prime},q_2)-\frac{\text{Pr}(\mathbf{Y}\in \mathcal{B}_{S_{\tau}})}{\tau})\cdot(r_y(q_1,q_2^{\prime})-\frac{\text{Pr}(\mathbf{Y}\in \mathcal{B}_{S_{\tau}})}{\tau})\leq 0  
\end{align}
Thus, there exists $(q_1^{\tau},q_2^{\tau})$ such that $r_x(q_1^{\tau},q_2^{\tau})=\overline{p}_{b_{\tau}},r_y(q_1^{\tau},q_2^{\tau})=\frac{\text{Pr}(\mathbf{Y}\in \mathcal{B}_{S_{\tau}})}{\tau}$ based on Lemma~\ref{lemma_of_exists}. Then, we have the following based on the definition of $r_x,r_y$: 
\begin{align}
    &\text{Pr}(\mathbf{X}\in \mathcal{C}^{\prime}(q_1,q_2)\cap \mathcal{C}(q_1^{\tau},q_2^{\tau})) = \overline{p}_{b_{\tau}} \\
   & \text{Pr}(\mathbf{Y}\in \mathcal{C}^{\prime}(q_1,q_2)\cap \mathcal{C}(q_1^{\tau},q_2^{\tau})) = \frac{\text{Pr}(\mathbf{Y}\in \mathcal{B}_{S_{\tau}})}{\tau} 
\end{align}
Finally, we let $\mathcal{C}_{b_{\tau}}= \mathcal{C}^{\prime}(q_1,q_2)\cap\mathcal{C}(q_1^{\tau},q_2^{\tau})$, which meets our goal.

\myparatight{Step II}
Assuming we can find $\{(q_1^{\tau},q_2^{\tau}),(q_1^{\tau-1}, q_2^{\tau-1}), \cdots,(q_1^{\tau-m+1},q_2^{\tau-m+1})\}$  ($\forall j \in[\tau-m+1,\tau],q_1\leq q_1^j\leq q_2^j \leq q_2$) where $1\leq m \leq \tau-1$ such that $\forall j \in [\tau-m+1 , \tau]$,  we have:
\begin{align}
\label{part1_stepII_condition_x_part}
&\text{Pr}(\mathbf{X}\in \mathcal{C}_{b_j})=\overline{p}_{b_j} \\
\label{part1_stepII_condition_y_part}
&\text{Pr}(\mathbf{Y}\in \mathcal{C}_{b_j})=\frac{\text{Pr}(\mathbf{Y}\in \mathcal{B}_{S_{\tau}})}{\tau}, 
\end{align}
as well as the following:
\begin{align}
\label{part1_stepII_condition_of_disjoint}
\forall j,t \in [\tau-m+1,\tau]\text{ and }j\neq t,\mathcal{C}_{b_j}\cap\mathcal{C}_{b_t}=\emptyset
\end{align}
We denote $e=\tau-m$. We  show  we can find $\mathcal{C}_{b_e}$ such that we have: 
\begin{align}
\label{part1_step2_equation_e_1}
      &\text{Pr}(\mathbf{X}\in \mathcal{C}_{b_e})=\overline{p}_{b_e} \\
\label{part1_step2_equation_e_2}
    &\text{Pr}(\mathbf{Y}\in \mathcal{C}_{b_e})=\frac{\text{Pr}(\mathbf{Y}\in \mathcal{B}_{S_{\tau}})}{\tau} \\
\label{part1_step2_equation_e_3}
&\forall j\in [e+1,\tau], \mathcal{C}_{b_e}\cap\mathcal{C}_{b_j}=\emptyset
\end{align}
We let $q_1=0,q_2=\overline{p}_{S_{\tau}}$ and denote
\begin{align}
     \mathcal{C}^{\prime}(q_1,q_2)=\mathcal{C}(q_1,q_2)\setminus\cup_{j=e+1}^{\tau}\mathcal{C}_{b_j}
\end{align}
We have the following: 
\begin{align}
&r_x(q_1,q_2) \\
=&\text{Pr}(\mathbf{X}\in\mathcal{C}^{\prime}(q_1,q_2)\cap\mathcal{C}(q_1,q_2)) \\
\label{part1_stepII_rx_q1_q2_value_1}
=&\text{Pr}(\mathbf{X}\in \mathcal{C}(q_1,q_2))-\text{Pr}(\mathbf{X}\in \cup_{j=e+1}^{\tau}\mathcal{C}_{b_j})  \\
\label{part1_stepII_rx_q1_q2_value_2}
=&\text{Pr}(\mathbf{X}\in \mathcal{C}(q_1,q_2))-\sum_{j=e+1}^{\tau}\text{Pr}(\mathbf{X}\in \mathcal{C}_{b_j})  \\
\label{part1_stepII_rx_q1_q2_value_3}
=&\sum_{j=1}^{\tau}\overline{p}_{b_j}-\sum_{j=e+1}^{\tau}\overline{p}_{b_j} \\
=&\sum_{j=1}^{e}\overline{p}_{b_j}
\end{align}
The Equation~\ref{part1_stepII_rx_q1_q2_value_2} from~\ref{part1_stepII_rx_q1_q2_value_1} is based on the Equation~\ref{part1_stepII_condition_of_disjoint}, and the Equation~\ref{part1_stepII_rx_q1_q2_value_3} from~\ref{part1_stepII_rx_q1_q2_value_2} is based the Equation~\ref{equation_0_definition_of_the_region} and~\ref{part1_stepII_condition_x_part}. 
Furthermore, we have the following: 
\begin{align}
&r_y(q_1,q_2)  \\
=&\text{Pr}(\mathbf{Y}\in\mathcal{C}^{\prime}(q_1,q_2)\cap\mathcal{C}(q_1,q_2)) \\
=&\text{Pr}(\mathbf{Y}\in \mathcal{C}(q_1,q_2))-\text{Pr}(\mathbf{Y}\in \cup_{j=e+1}^{\tau}\mathcal{C}_{b_j}) \\
\label{part1_stepII_ry_q1_q2_equation1}
=&\text{Pr}(\mathbf{Y}\in \mathcal{C}(q_1,q_2))-\sum_{j=e+1}^{\tau}\text{Pr}(\mathbf{Y}\in \mathcal{C}_{b_j}) \\
\label{part1_stepII_ry_q1_q2_equation2}
=&\text{Pr}(\mathbf{Y}\in\mathcal{B}_{S_{\tau}})- \frac{m\cdot\text{Pr}( \mathbf{Y}\in \mathcal{B}_{S_{\tau}})}{\tau} \\
=& \frac{(\tau-m)\cdot\text{Pr}( \mathbf{Y}\in \mathcal{B}_{S_{\tau}})}{\tau} 
\end{align}
The Equation~\ref{part1_stepII_ry_q1_q2_equation2} from~\ref{part1_stepII_ry_q1_q2_equation1} is because $\mathcal{C}(q_1,q_2)=\mathcal{B}_{S_{\tau}}$ and the Equation~\ref{part1_stepII_condition_y_part}. 
We have  $\overline{p}_{b_e}\leq  r_x(q_1,q_2)$. Therefore, based on Lemma~\ref{lemma_of_exists}, there exist $q_1^{\prime},q_2^{\prime}$ such that: 
\begin{align}
\label{part1_step2_rx_q1_q2prime_value_compute}
 & r_x(q_1,q_2^{\prime})=\overline{p}_{b_e} \\
&r_x(q_1^{\prime},q_2)=\overline{p}_{b_e}
\end{align}
We have:
\begin{align}
\label{part1_step2_r_y_lamba1_q_2prime_prior}
& r_y(q_1,q_2^{\prime}) \\
\label{part1_step2_r_y_lamba1_q_2prime}
\geq&r_{y}(q_1,q_2)\cdot \frac{1}{\lceil r_x(q_1,q_2)/r_x(q_1,q_2^{\prime}) \rceil} \\
\label{part1_step2_r_y_lamba1_q_2prime_later}
\geq&r_{y}(q_1,q_2)\cdot \frac{1}{\lceil (\sum_{j=1}^{e}\overline{p}_{b_j})/\overline{p}_{b_e} \rceil} \\
\label{part1_step2_r_y_lamba1_q_2prime_last}
\geq&  \frac{\text{Pr}( \mathbf{Y}\in \mathcal{B}_{S_{\tau}})}{\tau}
\end{align}
Equation~\ref{part1_step2_r_y_lamba1_q_2prime} from~\ref{part1_step2_r_y_lamba1_q_2prime_prior} is based on Lemma~\ref{lemma_of_compare_middle_value}, Equation~\ref{part1_step2_r_y_lamba1_q_2prime_later} from~\ref{part1_step2_r_y_lamba1_q_2prime} is based on Equation~\ref{part1_stepI_rx_q1_q2_value_compute} and ~\ref{part1_step2_rx_q1_q2prime_value_compute}, and Equation~\ref{part1_step2_r_y_lamba1_q_2prime_last} from~\ref{part1_step2_r_y_lamba1_q_2prime_later} is obtained from Equation~\ref{part1_stepI_ry_q1_q2_value_compute} and the fact that $\lceil (\sum_{j=1}^{e}\overline{p}_{b_j})/\overline{p}_{b_e} \rceil \geq \tau$. 
Next, we show:
\begin{align}
    r_y(q_1^{\prime},q_2)\leq \frac{\text{Pr}( \mathbf{Y}\in \mathcal{B}_{S_{\tau}})}{\tau}
\end{align}
In particular, we consider two scenarios.

\emph{Scenario 1)}. $q_1^{\prime}\geq \min_{j=\tau-m+1}^{\tau}q_{1}^{j}$. We denote $t=\argmin_{j=\tau-m+1}^{\tau}q_1^j$. We let $\mathcal{C}_w=\mathcal{C}_{b_t}\subseteq \mathcal{C}(q_1^t,q_2^t)$. As  $q_1^{\prime}\geq q_1^t, q_2 \geq q_2^t$ and $r_x(q_1^{\prime},q_2)=\overline{p}_{b_e}\leq \text{Pr}(\mathbf{X}\in\mathcal{C}_w)=\overline{p}_{b_t}$, we have the following based on Lemma~\ref{lemma_of_compare_disjoint}: 
\begin{align}
  r_y(q_1^{\prime},q_2)  \leq \text{Pr}(\mathbf{Y}\in\mathcal{C}_w) = \frac{\text{Pr}( \mathbf{Y}\in \mathcal{B}_{S_{\tau}})}{\tau}
\end{align}

\emph{Scenario 2)}. $q_1^{\prime}< \min_{j=\tau-m+1}^{\tau}q_{1}^{j}$.  We have the following:
\begin{align}
\mathcal{C}^{\prime}(q_1,q_2)\cap\mathcal{C}(q_1,q_1^{\prime})=\mathcal{C}(q_1,q_1^{\prime})
\end{align}
Furthermore, we have:  
\begin{align}
&r_x(q_1,q_1^{\prime}) \\
=&r_x(q_1,q_2) - r_x(q_1^{\prime},q_2) \\
=&\sum_{j=1}^{e}\overline{p}_{b_j}-\overline{p}_{b_e} \\
=&\sum_{j=1}^{e-1}\overline{p}_{b_j} 
\end{align}
Moreover, we have  $r_x(q_1,q_1^{\prime})=q_1^{\prime}-q_1$ from Lemma~\ref{lemma_lambda_1_q_2}. The above two should be equal. Thus, we have $q_1^{\prime}=\sum_{j=1}^{e-1}\overline{p}_{b_j}=\overline{p}_{S_{\tau-m-1}}$ since $e=\tau-m$. we have: 
\begin{align}
&r_y(q_1^{\prime},q_2) \\
\label{part2_step2_scenario2_r_y_1}
 =&r_y(q_1,q_2)-r_y(q_1,q_1^{\prime}) \\
 \label{part2_step2_scenario2_r_y_2}
 =&r_y(q_1,q_2) - \text{Pr}(\mathbf{Y}\in \mathcal{B}_{S_{\tau-m-1}}) \\
\leq&\frac{(\tau-m)\cdot\text{Pr}( \mathbf{Y}\in \mathcal{B}_{S_{\tau}})}{\tau} -  \frac{(\tau-m-1)\cdot\text{Pr}( \mathbf{Y}\in \mathcal{B}_{S_{\tau}})}{\tau} \\
=&\frac{\text{Pr}( \mathbf{Y}\in \mathcal{B}_{S_{\tau}})}{\tau}
\end{align}
We obtain Equation~\ref{part2_step2_scenario2_r_y_2} from Equation~\ref{part2_step2_scenario2_r_y_1} based on Lemma~\ref{lemma_lambda_1_q_2}. 

Therefore, we have the following in both scenarios: 
\begin{align}
   &r_y(q_1,q_2^{\prime}) \geq  \frac{\text{Pr}( \mathbf{Y}\in \mathcal{B}_{S_{\tau}})}{\tau} \\
   & r_y(q_1^{\prime},q_2) \leq  \frac{\text{Pr}( \mathbf{Y}\in \mathcal{B}_{S_{\tau}})}{\tau}
\end{align}
Based on Lemma~\ref{lemma_of_exists}, there exist $q_1^e,q_2^e$ such that $r_x(q_1^{e},q_2^{e})=\overline{p}_{b_{\tau}},r_y(q_1^{e},q_2^{e})=\frac{\text{Pr}(\mathbf{Y}\in \mathcal{B}_{S_{\tau}})}{\tau}$. Then, we have the following based on the definition of $r_x,r_y$: 
\begin{align}
      &\text{Pr}(\mathbf{X}\in \mathcal{C}^{\prime}(q_1,q_2)\cap\mathcal{C}(q_1^{e},q_2^{e}))=\overline{p}_{b_e} \\
    &\text{Pr}(\mathbf{Y}\in \mathcal{C}^{\prime}(q_1,q_2)\cap\mathcal{C}(q_1^{e},q_2^{e}))=\frac{\text{Pr}(\mathbf{Y}\in \mathcal{B}_{S_{\tau}})}{\tau}
\end{align}
We let $\mathcal{C}_{e}=\mathcal{C}^{\prime}(q_1,q_2)\cap\mathcal{C}(q_1^{e},q_2^{e})$. 
From the definition of $\mathcal{C}^{\prime}(q_1,q_2)$, we have  $\forall j\in [e+1,\tau], \mathcal{C}^{\prime}(q_1,q_2)\cap\mathcal{C}_{b_j}=\emptyset$. Thus, we have  $\forall j\in [e+1,\tau], \mathcal{C}_{b_e}\cap\mathcal{C}_{b_j}=\emptyset$ since $\mathcal{C}_{b_e}\subseteq \mathcal{C}^{\prime}(q_1,q_2)$. 

Therefore, we reach our goal by Mathematical Induction, i.e., for $\forall j \in [1,\tau]$, we have: 
\begin{align}
&\text{Pr}(\mathbf{X}\in \mathcal{C}_{b_j})=\overline{p}_{b_j} \\
 &   \text{Pr}(\mathbf{Y}\in \mathcal{C}_{b_j}) = \frac{\text{Pr}(\mathbf{Y}\in\mathcal{B}_{S_{\tau}})}{\tau} 
\end{align}
We can also verify that $\cup_{j=1}^{\tau}\mathcal{C}_{b_j}=\mathcal{B}_{S_{\tau}}$. 

Next, we show our proof based on Mathematical Induction for the other part, i.e., $\mathcal{B}_{S_k}\setminus\mathcal{B}_{S_{\tau}}$. Our construction process is similar to the above first part but has subtle differences.  

\myparatight{Step I}
Let $q_1= \sum_{j=1}^{\tau}\overline{p}_{b_j}$ and $q_2=\sum_{j=1}^{k}\overline{p}_{b_j}$. 
We define: 
\begin{align}
  \mathcal{C}^{\prime}(q_1,q_2)= \mathcal{C}(q_1,q_2) =\mathcal{B}_{S_k}\setminus\mathcal{B}_{S_{\tau}}
\end{align}
 Then, we have: 
\begin{align}
\label{part2_step1_compute_rx_q1_q2_value}
 &r_x(q_1,q_2) =q_2 - q_1 = \sum_{j=\tau+1}^{k}\overline{p}_{b_j}  \\
 &r_y(q_1,q_2) \\
 =&\text{Pr}(\mathbf{Y}\in \mathcal{C}^{\prime}(q_1,q_2)\cap\mathcal{C}(q_1,q_2)) \\
 =&\text{Pr}(\mathbf{Y}\in \mathcal{C}(0,q_2))-\text{Pr}(\mathbf{Y}\in \mathcal{C}(q_1,q_2)) \\
 \label{part2_step1_r_y_lambda_1_lambda_2_1}
 =&\text{Pr}(\mathbf{Y}\in \mathcal{B}_{S_k})-\text{Pr}(\mathbf{Y}\in \mathcal{B}_{S_{\tau}}) \\
 \label{part2_step1_r_y_lambda_1_lambda_2_2}
 \geq&\frac{k\cdot \text{Pr}(\mathbf{Y}\in \mathcal{B}_{S_{\tau}})}{\tau}-\text{Pr}(\mathbf{Y}\in \mathcal{B}_{S_{\tau}}) \\
 \label{part2_step1_r_y_lambda_1_lambda_2_3}
 =&\frac{(k-\tau)\cdot \text{Pr}(\mathbf{Y}\in \mathcal{B}_{S_{\tau}})}{\tau}
\end{align}
The Equation~\ref{part2_step1_compute_rx_q1_q2_value} is based on the fact that $ \mathcal{C}^{\prime}(q_1,q_2)= \mathcal{C}(q_1,q_2)$ and Definition~\ref{definition_for_the_region}, and we obtain Equation~\ref{part2_step1_r_y_lambda_1_lambda_2_1} from~\ref{part2_step1_r_y_lambda_1_lambda_2_2} based on Equation~\ref{compare_others_with_j}. 
We have $\overline{p}_{b_k}\leq  r_x(q_1,q_2)$. Therefore, based on Lemma~\ref{lemma_of_exists}, we know that there exists $q_1^{\prime}=q_2-\overline{p}_{b_k}, q_2^{\prime}=q_1+\overline{p}_{b_k}$ such that: 
\begin{align}
   & r_x(q_1,q_2^{\prime})=\overline{p}_{b_k} \\
    &r_x(q_1^{\prime},q_2)=\overline{p}_{b_k}
\end{align}

We consider two scenarios. 

\emph{Scenario 1)}. In this scenario, we consider $r_y(q_1^{\prime},q_2) > \frac{\text{Pr}(\mathbf{Y}\in \mathcal{B}_{S_{\tau}})}{\tau}$. 
We  let $q_1^k=q_1^{\prime},q_2^k=q_2$, i.e., we have $\mathcal{C}_{b_k}=\mathcal{C}(q_1^{\prime},q_2)\cap \mathcal{C}^{\prime}(q_1,q_2)$. Then, we have: 
\begin{align}
    &\text{Pr}(\mathbf{X}\in\mathcal{C}_{b_k})=r_x(q_1^{\prime},q_2)= \overline{p}_{b_k} \\
    &\text{Pr}(\mathbf{Y}\in\mathcal{C}_{b_k})=r_y(q_1^{\prime},q_2)\geq\frac{\text{Pr}(\mathbf{Y}\in \mathcal{B}_{S_{\tau}})}{\tau}    
\end{align}
Therefore, we have the following: 
\begin{align}
&\text{Pr}(\mathbf{Y}\in \mathcal{C}(q_1,q_2)\setminus  \mathcal{C}_{b_k}) \\
=&\text{Pr}(\mathbf{Y}\in\mathcal{C}(q_1,q_1^{\prime})) \\
=&\text{Pr}(\mathbf{Y}\in\mathcal{C}(0,q_1^{\prime})) -\text{Pr}(\mathbf{Y}\in\mathcal{C}(0,q_1)) \\
=&\text{Pr}(\mathbf{Y}\in\mathcal{B}_{S_{k-1}})-\text{Pr}(\mathbf{Y}\in \mathcal{B}_{S_{\tau}})   \\
\geq&\frac{(k-\tau-1)\cdot\text{Pr}(\mathbf{Y}\in \mathcal{B}_{S_{\tau}})}{\tau}  
\end{align}

\emph{Scenario 2)}. In this scenario, we consider $r_y(q_1^{\prime},q_2) \leq \frac{\text{Pr}(\mathbf{Y}\in \mathcal{B}_{S_{\tau}})}{\tau}$. 
We have the following: 
\begin{align}
\label{part2_step1_ry_q1_q2prime_value_compare_equation1}
&r_y(q_1,q_2^{\prime}) \\
\label{part2_step1_ry_q1_q2prime_value_compare_equation2}
\geq& r_y(q_1,q_2)\cdot \frac{1}{\lceil r_x(q_1,q_2)/r_x(q_1,q_2^{\prime}) \rceil} \\
\label{part2_step1_ry_q1_q2prime_value_compare_equation3}
\geq&\frac{(k-\tau)\cdot \text{Pr}(\mathbf{Y}\in \mathcal{B}_{S_{\tau}})}{\tau}\cdot \frac{1}{k-\tau} \\
=&\frac{\text{Pr}(\mathbf{Y}\in \mathcal{B}_{S_{\tau}})}{\tau}    
\end{align}
We obtain Equation~\ref{part2_step1_ry_q1_q2prime_value_compare_equation2} from~\ref{part2_step1_ry_q1_q2prime_value_compare_equation1} via Lemma~\ref{lemma_of_compare_middle_value}, and we obtain Equation~\ref{part2_step1_ry_q1_q2prime_value_compare_equation3} from~\ref{part2_step1_ry_q1_q2prime_value_compare_equation2} based on Equation~\ref{part2_step1_compute_rx_q1_q2_value} to~\ref{part2_step1_r_y_lambda_1_lambda_2_3} and the fact $r_x(q_1,q_2)/r_x(q_1,q_2^{\prime}) = \frac{\sum_{j=\tau+1}^{k}\overline{p}_{b_j}}{\overline{p}_{b_{\tau+1}}}\geq k-\tau$. We have $(r_y(q_1^{\prime},q_2) - \frac{\text{Pr}(\mathbf{Y}\in \mathcal{B}_{S_{\tau}})}{\tau})\cdot(r_y(q_1,q_2^{\prime}) - \frac{\text{Pr}(\mathbf{Y}\in \mathcal{B}_{S_{\tau}})}{\tau})\leq 0$. Therefore, 
from Lemma~\ref{lemma_of_exists}, we know that there exist $(q_1^k,q_2^k)$ such that: 
\begin{align}
    &\text{Pr}(\mathbf{X}\in\mathcal{C}_{b_k})=r_x(q_1^{k},q_2^{k})=\overline{p}_{b_k} \\
    &\text{Pr}(\mathbf{Y}\in\mathcal{C}_{b_k})=r_y(q_1^{k},q_2^{k})=\frac{\text{Pr}(\mathbf{Y}\in \mathcal{B}_{S_{\tau}})}{\tau}    
\end{align}
We also have the following: 
\begin{align}
&\text{Pr}(\mathbf{Y}\in \mathcal{C}(q_1,q_2)\setminus  \mathcal{C}_{b_k}) \\
=&r_y(q_1,q_2)-r_y(q_1^k,q_2^k) \\
\geq&\frac{(k-\tau-1)\cdot\text{Pr}(\mathbf{Y}\in \mathcal{B}_{S_{\tau}})}{\tau}  
\end{align}
Similarly, we let $\mathcal{C}_{b_k}=\mathcal{C}^{\prime}(q_1,q_2)\cap\mathcal{C}(q_1^k,q_2^k)$. 

Based on the conditions of our constructions in the two scenarios, we know that if $\text{Pr}(\mathbf{Y}\in\mathcal{B}_{b_k})>\frac{\text{Pr}(\mathbf{Y}\in \mathcal{B}_{S_{\tau}})}{\tau}$, then we  have   $q_2^k=q_2$.

\myparatight{Step II}
We  show that if we can find $(q_1^k,q_2^k),\cdots,(q_1^{k-m+1},q_2^{k-m+1})$ where $m\in [1,k-\tau -1]$ and $\mathcal{C}_{b_j},\forall j \in [k,k-m+1]$
such that:
\begin{align}
    &\text{Pr}(\mathbf{X}\in \mathcal{C}_{b_j})=\overline{p}_{b_j} \\
    &\text{Pr}(\mathbf{Y}\in \mathcal{C}_{b_j})\geq\frac{\text{Pr}(\mathbf{Y}\in \mathcal{B}_{S_{\tau}})}{\tau} \\
    &\text{Pr}(\mathbf{Y}\in \mathcal{C}(q_1,q_2)\setminus \cup_{t=k-m+1}^{k}\mathcal{C}_{b_t}) 
\geq\frac{(k-\tau-m)\cdot\text{Pr}(\mathbf{Y}\in \mathcal{B}_{S_{\tau}})}{\tau} 
\end{align}
Then,  we can find $(q_1^{k-m},q_2^{k-m})$ such that: 
\begin{align}
    &\text{Pr}(\mathbf{X}\in \mathcal{C}_{b_{k-m}})=\overline{p}_{b_{k-m}} \\
    &\text{Pr}(\mathbf{Y}\in \mathcal{C}_{b_{k-m}})\geq\frac{\text{Pr}(\mathbf{Y}\in \mathcal{B}_{S_{\tau}})}{\tau} \\
    &\text{Pr}(\mathbf{Y}\in \mathcal{C}(q_1,q_2)\setminus \cup_{t=k-m}^{k}\mathcal{C}_{b_t}) 
\geq\frac{(k-\tau-m -1)\cdot\text{Pr}(\mathbf{Y}\in \mathcal{B}_{S_{\tau}})}{\tau} 
\end{align}
For simplicity, we denote $e=k-m$, we let $q_1= \sum_{j=1}^{\tau}\overline{p}_{b_j}$ and $q_2=\sum_{j=1}^{k}\overline{p}_{b_j}$, and we define:  
\begin{align}
     \mathcal{C}^{\prime}(q_1,q_2)=\mathcal{C}(q_1,q_2)\setminus\cup_{j=e+1}^{k}\mathcal{C}_{b_j}
\end{align}
Then, we have: 
\begin{align}
\label{part2_step2_rx_q1_q2_value_equation0}
&r_x(q_1,q_2)=\text{Pr}(\mathbf{X}\in \mathcal{C}(q_1,q_2)\setminus \cup_{j=e+1}^{k}\mathcal{C}_{b_j})=\sum_{j=\tau +1}^{e}\overline{p}_{b_j} \\
\label{part2_step2_rx_q1_q2_value_equation1}
&r_y(q_1,q_2)=\text{Pr}(\mathbf{Y}\in \mathcal{C}(q_1,q_2)\setminus \cup_{j=e+1}^{k}\mathcal{C}_{b_j})\geq\frac{(k-\tau-m)\cdot\text{Pr}(\mathbf{Y}\in \mathcal{B}_{S_{\tau}})}{\tau}
\end{align}
We have $\overline{p}_{b_e}\leq  r_x(q_1,q_2)$. Therefore, based on Lemma~\ref{lemma_of_exists}, we know that there exists $q_1^{\prime}, q_2^{\prime}$ such that: 
\begin{align}
  &r_x(q_1^{\prime},q_2)=\overline{p}_{b_e} \\
  &  r_x(q_1,q_2^{\prime})=\overline{p}_{b_e}
\end{align}
Similarly, we consider two scenarios: 

\emph{Scenario 1)}. In this scenario, we consider that the following holds: 
\begin{align}
    \label{constrainsts_for_q1prime}
    r_y(q_1^{\prime},q_2) > \frac{\text{Pr}(\mathbf{Y}\in \mathcal{B}_{S_{\tau}})}{\tau}    
\end{align} 
We let $q_1^e=q_1^{\prime},q_2^e=q_2$, i.e., $\mathcal{C}_{b_e}=\mathcal{C}(q_1^{\prime},q_2)\cap \mathcal{C}^{\prime}(q_1,q_2)$. 
Then, we have: 
\begin{align}
    &\text{Pr}(\mathbf{X}\in \mathcal{C}_{b_e})=r_x(q_1^{\prime},q_2)=\overline{p}_{b_e} \\
    &\text{Pr}(\mathbf{Y}\in \mathcal{C}_{b_e})=r_y(q_1^{\prime},q_2)\geq\frac{\text{Pr}(\mathbf{Y}\in \mathcal{B}_{S_{\tau}})}{\tau}
\end{align}
We note that we  have $q_1^{\prime}\leq \min_{j=e+1}^{k}q_1^i$ in this scenario. Otherwise, Equation~\ref{constrainsts_for_q1prime} will not hold based on Lemma~\ref{lemma_of_compare_disjoint}. We give a short proof. 

Assuming $q_1^{\prime}> \min_{j=k-m+1}^{k}q_{1}^{j}$. We denote $w=\argmin_{j=k-m+1}^{k}q_1^j$. We let $\mathcal{C}_w=\mathcal{C}_{b_w}\subseteq \mathcal{C}(q_1^w,q_2^w)$. Note that in this case, we  have $q_2^w<q_2$ because $q_2^w=q_2$ and $q_1^{\prime}> q_1^w$ cannot hold at the same time as long as $r_y(q_1^{\prime},q_2)>0$. Thus, we have  $\text{Pr}(\mathbf{Y}\in\mathcal{C}_w)
=\frac{\text{Pr}( \mathbf{Y}\in \mathcal{B}_{S_{\tau}})}{\tau}$ because  if $\text{Pr}(\mathbf{Y}\in\mathcal{C}_w)
>\frac{\text{Pr}( \mathbf{Y}\in \mathcal{B}_{S_{\tau}})}{\tau}$, we  have $q_2^w=q_2$.
As we have  $q_1^{\prime}> q_1^w, q_2 > q_2^w$ and $r_x(q_1^{\prime},q_2)=\overline{p}_{b_e}\leq \text{Pr}(\mathbf{X}\in\mathcal{C}_w)=\overline{p}_{b_w}$. We have the following based on Lemma~\ref{lemma_of_compare_disjoint}: 
\begin{align}
    \label{contradiction_q1prime}
  r_y(q_1^{\prime},q_2)  \leq \text{Pr}(\mathbf{Y}\in\mathcal{C}_w) = \frac{\text{Pr}( \mathbf{Y}\in \mathcal{B}_{S_{\tau}})}{\tau}
\end{align}
Since Equation \ref{constrainsts_for_q1prime} and Equation \ref{contradiction_q1prime} cannot hold at the same time, the assumption $q_1^{\prime}>\min_{j=k-m+1}^{k}q_{1}^{j}$ must be wrong. Therefore, we  have $q_1^{\prime}\leq \min_{j=k-m+1}^{k}q_{1}^{j}$.

Based on $q_1^{\prime}\leq \min_{j=k-m+1}^{k}q_{1}^{j}$, we have the following: 
\begin{align}
    \mathcal{C}^{\prime}(q_1,q_2)\cap\mathcal{C}(q_1,q_1^{\prime})=\mathcal{C}(q_1,q_1^{\prime})
\end{align}
Therefore, we have  $r_x(q_1,q_1^{\prime})=q_1^{\prime}-q_1$ from Definition~\ref{definition_for_the_region}. Moreover, we have the following: 
\begin{align}
   & r_x(q_1,q_1^{\prime}) \\
    =&r_x(q_1,q_2)-r_x(q_1^{\prime},q_2) \\
    =&\sum_{j=\tau +1}^{e}\overline{p}_{b_j}-\overline{p_{b_e}} \\
    =&\sum_{j=\tau +1}^{e-1}\overline{p}_{b_j}
\end{align}
The above two should be equal. Therefore, we have  $q_1^{\prime}=\sum_{j=\tau +1}^{e-1}p_{b_j}+q_1=\sum_{j=1}^{e-1}p_{b_j}$. Recall that we let $\mathcal{C}_{b_e}=\mathcal{C}^{\prime}(q_1,q_2)\cap\mathcal{C}(q_1^{\prime},q_2)$. Thus, we have: 
\begin{align}
 &\text{Pr}(\mathbf{Y}\in \mathcal{C}(q_1,q_2)\setminus \cup_{j=e}^{k}\mathcal{C}_{b_j}) \\
 =&\text{Pr}(\mathbf{Y}\in \mathcal{C}^{\prime}(q_1,q_2)\setminus \mathcal{C}_{b_{e}}) \\
 =&\text{Pr}(\mathbf{Y}\in \mathcal{C}(q_1,q_1^{\prime}))\\
 =&\text{Pr}(\mathbf{Y}\in \mathcal{C}(0,q_1^{\prime}))-\text{Pr}(\mathbf{Y}\in \mathcal{C}(0,q_1)) \\
 =&\text{Pr}(\mathbf{Y}\in \mathcal{B}_{S_{e-1}})- \text{Pr}(\mathbf{Y}\in \mathcal{B}_{S_{\tau}}) \\
\geq&\frac{(k-\tau -m-1)\cdot\text{Pr}(\mathbf{Y}\in \mathcal{B}_{S_{\tau}})}{\tau}   
\end{align}

\emph{Scenario 2)}. In this scenario, we consider that the following holds: 
\begin{align}
   r_y(q_1^{\prime},q_2) \leq \frac{\text{Pr}(\mathbf{Y}\in \mathcal{B}_{S_{\tau}})}{\tau} 
\end{align}
Note that we have: 
\begin{align}
\label{part2_step2_ry_q1_q2prime_value_compare_equation1}
&r_y(q_1,q_2^{\prime}) \\
\label{part2_step2_ry_q1_q2prime_value_compare_equation2}
\geq&r_y(q_1,q_2)\cdot \frac{1}{\lceil r_x(q_1,q_2)/r_x(q_1,q_2^{\prime}) \rceil} \\
\label{part2_step2_ry_q1_q2prime_value_compare_equation3}
\geq&r_y(q_1,q_2)\cdot \frac{1}{k-\tau -m} \\
\geq&\frac{\text{Pr}(\mathbf{Y}\in \mathcal{B}_{S_{\tau}})}{\tau}       
\end{align}
We obtain Equation~\ref{part2_step2_ry_q1_q2prime_value_compare_equation2} from~\ref{part2_step2_ry_q1_q2prime_value_compare_equation1} via Lemma~\ref{lemma_of_compare_middle_value}, and we obtain Equation~\ref{part2_step2_ry_q1_q2prime_value_compare_equation3} from~\ref{part2_step2_ry_q1_q2prime_value_compare_equation2} based on Equation~\ref{part2_step2_rx_q1_q2_value_equation1} and the fact $r_x(q_1,q_2)/r_x(q_1,q_2^{\prime}) = \frac{\sum_{j=\tau+1}^{e}\overline{p}_{b_j}}{\overline{p}_{b_{\tau+1}}}\geq k-\tau-m$. We have $(r_y(q_1^{\prime},q_2) - \frac{\text{Pr}(\mathbf{Y}\in \mathcal{B}_{S_{\tau}})}{\tau})\cdot(r_y(q_1,q_2^{\prime}) - \frac{\text{Pr}(\mathbf{Y}\in \mathcal{B}_{S_{\tau}})}{\tau})\leq 0$.
Based on Lemma~\ref{lemma_of_exists}, we can find $(q_1^e,q_2^e)$ such that we have: 
\begin{align}
    &\text{Pr}(\mathbf{X}\in \mathcal{C}^{\prime}(q_1,q_2)\cap\mathcal{C}(q_1^e,q_2^e))=r_x(q_1^{e},q_2^{e})=\overline{p}_{b_e} \\
    &\text{Pr}(\mathbf{Y}\in \mathcal{C}^{\prime}(q_1,q_2)\cap\mathcal{C}(q_1^e,q_2^e))=r_y(q_1^{e},q_2^{e})=\frac{\text{Pr}(\mathbf{Y}\in \mathcal{B}_{S_{\tau}})}{\tau}    
\end{align}
We let $\mathcal{C}_{b_e}=\mathcal{C}^{\prime}(q_1,q_2)\cap\mathcal{C}(q_1^e,q_2^e)$. We also have the following: 
\begin{align}
 &\text{Pr}(\mathbf{Y}\in \mathcal{C}(q_1,q_2)\setminus \cup_{j=e}^{k}\mathcal{C}_{b_j}) \\
= &\text{Pr}(\mathbf{Y}\in \mathcal{C}^{\prime}(q_1,q_2)\setminus \mathcal{C}_{b_e}) \\
=&r_y(q_1,q_2)-r_y(q_1^e,q_2^e) \\
 \geq & \frac{(k-\tau-m)\cdot\text{Pr}(\mathbf{Y}\in \mathcal{B}_{S_{\tau}})}{\tau} - \frac{\text{Pr}(\mathbf{Y}\in \mathcal{B}_{S_{\tau}})}{\tau} \\
\geq &\frac{(k-\tau-m-1)\cdot\text{Pr}(\mathbf{Y}\in \mathcal{B}_{S_{\tau}})}{\tau}   
\end{align}
Similar to \textbf{Step I}, we still hold the conclusion that if $\text{Pr}(\mathbf{Y}\in\mathcal{C}_{b_e})>\frac{\text{Pr}(\mathbf{Y}\in \mathcal{B}_{S_{\tau}})}{\tau}$, we  have   $q_2^e=q_2$. 
Then, we can apply \emph{Mathematical Induction} to reach the conclusion. 
Also, we can verify $\cup_{j=\tau +1}^{k}\mathcal{C}_{b_j}=\mathcal{B}_{S_k}\setminus\mathcal{B}_{S_{\tau}}$. 
\end{proof}

\section{Proof of Proposition~\ref{proposition_1}}
\label{proof_of_proposition_1}

The function \textsc{SampleUndernoise}$(f,k,\sigma,\mathbf{x},n,\alpha)$ works as follows: we first draw $n$ random noise from $\mathcal{N}(0,\sigma^2 I)$, i.e., $\epsilon_1,\epsilon_2,\cdots,\epsilon_n$. Then, we compute the values: $\forall i \in [1,c], c_i = \sum_{j=1}^{n}\mathbb{I}(f(\mathbf{x}+\epsilon_j)=i)$. 
The function \textsc{Binompvalue}$(n_{c_t},n_{c_t}+n_{c_{t+1}},p)$ returns the result of p-value of the two-sided hypothesis test for $n_{c_t} \sim Bin(n_{c_t}+n_{c_{t+1}},p)$.

\myparatight{Proposition 1} With probability at least $1-\alpha$ over the randomness in \textsc{Predict}, if  \textsc{Predict} returns a set $T$ (i.e., does not ABSTAIN), then we have $g_{k}(\mathbf{x})=T$.
\begin{proof}
We aim to compute the probability that \textsc{Predict} returns a set which not equals to $g_{k}(\mathbf{x})$, which  happens if and only if $g_{k}(\mathbf{x})\neq T$ and \textsc{Predict} doesn't abstain. Specifically, we have: 
\begin{align}
    & \text{Pr}(\textsc{Predict} \text{ returns a set } \neq g_{k}(\mathbf{x})) \\
    =& \text{Pr}(g_{k}(\mathbf{x})\neq T,\textsc{Predict} \text{ doesn't abstain}) \\
    =& \text{Pr}(g_{k}(\mathbf{x})\neq T)\cdot \text{Pr}(\textsc{Predict} \text{ doesn't abstain}|g_{k}(\mathbf{x})\neq T) \\
    \leq & \text{Pr}(\textsc{Predict} \text{ doesn't abstain}|g_{k}(\mathbf{x})\neq T)
\end{align}
Theorem 1 in~\citet{hung2019rank} shows the above conditional probability is as follows: 
\begin{align}
    \text{Pr}(\textsc{Predict} \text{ doesn't abstain}|g_{k}(\mathbf{x})\neq T) \leq \alpha
\end{align}
Therefore, we reach the conclusion. 
\end{proof}

\section{Proof of Proposition~\ref{proposition_certify}}
\label{proof_of_proposition_2}
\myparatight{Proposition 2} With probability at least $1 -  \alpha$ over the randomness in $\textsc{Certify}$, if $\textsc{Certify}$ returns a radius $\underline{R_l}$ (i.e., does not ABSTAIN), then we have $l \in g_{k}(\mathbf{x}+\delta), \forall \lnorm{\delta}_2 < \underline{R_l}$. 

\begin{proof}
From the definition of \textsc{BinoCP} and \textsc{SimuEM}, we know the probability that the following inequalities simultaneously hold is at least $1-\alpha$ over the sampling of counts: 
\begin{align}
    \underline{p_i} \le \text{Pr}(f(x+\epsilon)=i) \text{ if } i=l\\
    \overline{p}_{i} \ge \text{Pr}(f(x+\epsilon)=i) \text{ if } i\ne l
\end{align}
Then, with the returned bounds, we can invoke Theorem \ref{theorem_of_certified_radius} to obtain the robustness guarantee if the calculated radius is larger than 0. Note that otherwise \textsc{Certify}  abstains.
\end{proof}

\section{Certified top-$k$ accuracy}
\label{certified_test_set_accuracy}
We show how to derive a  lower bound of the certified top-$k$ accuracy based on the {approximate certified top-$k$ accuracy}. The process is similar to that~\citet{cohen2019certified} used to derive a lower bound of the certified top-$1$ accuracy based on the {approximate certified top-$1$ accuracy}. 
Specifically, we have the following lemma from~\citet{cohen2019certified}.
\begin{restatable}{lem}{approximateexactcerradiustheorem}
\label{approximate_exact_certified_radius_theorem}
Let $z_i$ be a binary variable and $Y_i$ be a Bernoulli random variable.  Suppose if $z_i=1$, then $\text{Pr}(Y_i=1)\leq \alpha$. Then, for any $\rho > 0$, with probability at least $ 1 - \rho$, we have the following: 
\begin{align}
    \frac{1}{m}\sum_{i=1}^{m}z_i \geq \frac{1}{1 - \alpha}(\frac{\sum_{i=1}^{m}Y_i}{m}-\alpha - \sqrt{\frac{2\alpha (1-\alpha)\log\frac{1}{\rho}}{m}}-\frac{\log(\frac{1}{\rho})}{3m})
\end{align}
\end{restatable}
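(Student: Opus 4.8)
I would treat the $z_i$ as fixed (deterministic) and the $Y_i$ as the only source of randomness, and reduce the claim to a single upper-tail estimate for $\sum_i Y_i$. Split $\{1,\dots,m\}$ into $\mathcal{I}_0=\{i:z_i=0\}$, the indices on which the hypothesis forces $\mathrm{Pr}(Y_i=1)\le\alpha$, and $\mathcal{I}_1=\{i:z_i=1\}$. On $\mathcal{I}_1$ use only the trivial bound $Y_i\le 1$, so $\sum_{i\in\mathcal{I}_1}Y_i\le|\mathcal{I}_1|=\sum_i z_i$. On $\mathcal{I}_0$, each $Y_i$ is a $\mathrm{Bernoulli}(p_i)$ variable with $p_i\le\alpha$, so $\sum_{i\in\mathcal{I}_0}Y_i$ has mean at most $\alpha|\mathcal{I}_0|$, and the task is to control its upper deviation from this mean.

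The key tool is Bernstein's inequality applied to the independent, $[0,1]$-valued variables $\{Y_i\}_{i\in\mathcal{I}_0}$ (independence of the $Y_i$ is needed and is implicit in the statement; in the intended application it holds because distinct test points are certified with independent noise draws). Since $p\mapsto p(1-p)$ is increasing on $[0,\tfrac12]$, for $\alpha\le\tfrac12$ we have $\mathrm{Var}(Y_i)=p_i(1-p_i)\le\alpha(1-\alpha)$, so the total variance is at most $\alpha(1-\alpha)|\mathcal{I}_0|$. Bernstein then yields, with probability at least $1-\rho$,
\[
\sum_{i\in\mathcal{I}_0}Y_i\ \le\ \alpha|\mathcal{I}_0|\ +\ \sqrt{2\alpha(1-\alpha)\,|\mathcal{I}_0|\,\log(1/\rho)}\ +\ \tfrac{1}{3}\log(1/\rho),
\]
where the two error terms come from solving the quadratic in the deviation produced by Bernstein and splitting its discriminant via $\sqrt{a+b}\le\sqrt a+\sqrt b$.

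Adding the $\mathcal{I}_1$ bound and using $|\mathcal{I}_0|=m-\sum_i z_i$, together with $|\mathcal{I}_0|\le m$ applied \emph{only inside the square root} and keeping the term $\alpha|\mathcal{I}_0|$ paired with $\sum_i z_i$ (so that $\sum_i z_i+\alpha|\mathcal{I}_0|=(1-\alpha)\sum_i z_i+\alpha m$), we obtain with probability at least $1-\rho$
\[
\sum_i Y_i\ \le\ (1-\alpha)\sum_i z_i\ +\ \alpha m\ +\ \sqrt{2\alpha(1-\alpha)\,m\,\log(1/\rho)}\ +\ \tfrac{1}{3}\log(1/\rho).
\]
Dividing by $m$ and solving for $\tfrac1m\sum_i z_i$ gives exactly the stated inequality.

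\textbf{Main obstacle.} The decomposition and the trivial bound on $\mathcal{I}_1$ are routine; the delicate part is the Bernstein step — extracting the clean closed-form error terms $\sqrt{2\alpha(1-\alpha)\log(1/\rho)/m}$ and $\log(1/\rho)/(3m)$ pins down the exact constants, and every monotonicity replacement ($|\mathcal{I}_0|\le m$, $p_i(1-p_i)\le\alpha(1-\alpha)$) must be applied on the correct side of the inequality, or the factor $1/(1-\alpha)$ and the additive $\alpha$ will not survive the final rearrangement. One should also check that the hypothesis constrains $\mathrm{Pr}(Y_i=1)$ on precisely the index set that is being concentrated, namely $\{i:z_i=0\}$, since the bound on $\mathcal{I}_1$ contributes nothing beyond $|\mathcal{I}_1|$.
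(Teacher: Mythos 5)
Your argument is correct, and it is essentially the proof of this lemma in the reference the paper cites for it (\citet{cohen2019certified}): treat the $z_i$ as deterministic, bound $\sum_{i:z_i=1}Y_i$ trivially by $\sum_i z_i$, apply the one-sided Bernstein bound $\Pr\bigl(S-\mathbb{E}S\ge\sqrt{2vt}+t/3\bigr)\le e^{-t}$ with total variance $v\le\alpha(1-\alpha)|\mathcal{I}_0|$ to the constrained indices, and rearrange. The bookkeeping $\sum_i z_i+\alpha|\mathcal{I}_0|=(1-\alpha)\sum_i z_i+\alpha m$ together with $|\mathcal{I}_0|\le m$ inside the square root reproduces the stated constants exactly.

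One thing you should make explicit rather than leave implicit: you prove the lemma under the hypothesis ``if $z_i=0$ then $\Pr(Y_i=1)\le\alpha$,'' whereas the statement as printed says ``if $z_i=1$.'' The printed version is false as literally written --- take all $z_i=0$ (so the hypothesis is vacuous) and $Y_i\equiv 1$; the left-hand side is $0$ while the right-hand side is close to $1$ for small $\alpha$ and large $m$ --- and in any case a constraint on $\{i:z_i=1\}$ could only yield an \emph{upper} bound on $\frac{1}{m}\sum_i z_i$, not a lower bound. Your reading is the intended one: in Appendix~\ref{certified_test_set_accuracy}, Proposition~\ref{proposition_certify} supplies exactly the guarantee that if example $i$ is \emph{not} certifiably robust at radius $r$ (i.e., $a_i=0$), then \textsc{Certify} wrongly reports a radius exceeding $r$ with probability at most $\alpha$ (the paper's Equation~\ref{fact_to_exploit} carries the same inverted condition). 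So your proof is the right one; just state that you are correcting the condition rather than silently swapping it. Two minor caveats you already gesture at and should record: the variance bound $p_i(1-p_i)\le\alpha(1-\alpha)$ requires $\alpha\le 1/2$ (harmless, since $\alpha$ is a small significance level), and independence of the $Y_i$ must be assumed, which holds in the application because each test point is certified with fresh noise.
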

\begin{proof}
Please refer to~\citet{cohen2019certified}. 
\end{proof}

Assuming we have a test dataset $D_{test}=\{(\mathbf{x}_1,y_1),(\mathbf{x}_2,y_2),\cdots,(\mathbf{x_m},y_m)\}$ as well as a radius $r$. We define the following indicate value: 
\begin{align}
    a_i = \mathbb{I}(y_i \in g_{k}(\mathbf{x}_i + \delta)), \forall ||\delta||_2 <r
\end{align}
Then, the certified top-$k$ accuracy of the smoothed classifier $g$ at radius $r$ can be computed as $\frac{1}{m}\sum_{i=1}^{m}a_i$. 
For each sample $\mathbf{x}_i$,  we run the \textsc{Certify} function with $1 - \alpha$ confidence level and we use a random variable $Y_i$ to denote that the function \textsc{Certify} returns a radius bigger than $r$. From \textbf{Proposition 2}, we know: 
\begin{align}
\label{fact_to_exploit}
    \text{Pr}(Y_i = 1 )\leq \alpha, \text{ if } a_i = 1
\end{align}
The approximate certified top-$k$ accuracy of the smoothed classifier at radius $r$ is $\frac{1}{m}\sum_{i=1}^{m}Y_i$. 
Then, we can use Lemma~\ref{approximate_exact_certified_radius_theorem} to obtain a lower bound of $\frac{1}{m}\sum_{i=1}^{m}a_i$. Specifically, for any $\rho > 0$, with probability at least $1-\rho$ over the randomness of \textsc{Certify}, we have: 
\begin{align}
    \frac{1}{m}\sum_{i=1}^{m}a_i \geq \frac{1}{1 - \alpha}(\frac{\sum_{i=1}^{m}Y_i}{m}-\alpha - \sqrt{\frac{2\alpha (1-\alpha)\log\frac{1}{\rho}}{m}}-\frac{\log(\frac{1}{\rho})}{3m})
\end{align}
We can see that the difference between the certified top-$k$ accuracy and the {approximate certified top-$k$ accuracy} is negligible when $\alpha$ is small. 

\end{document}